\documentclass{article} 

%

\usepackage[final,nonatbib]{nips_2016}


\usepackage[utf8]{inputenc} 
\usepackage[T1]{fontenc}    
\usepackage{url}            
\usepackage{booktabs}       
\usepackage{amsfonts}       
\usepackage{nicefrac}       
\usepackage{microtype}      

\usepackage{color}
\usepackage{amsmath}
\usepackage{amsfonts}
\usepackage{amsthm}
\usepackage{amssymb}
\usepackage{dsfont}
\usepackage{algorithm,algorithmic}

\usepackage{graphicx} 
\usepackage{subfigure}
\usepackage[english]{babel}  

\DeclareMathOperator*{\argmin}{arg\,min}
\DeclareMathOperator*{\argmax}{arg\,max}

\newcommand{\bs}[1]{\boldsymbol{#1}}
\allowdisplaybreaks[2]
\newtheorem{theorem}{Theorem}
\newtheorem{prop}{Proposition}
\newtheorem{lemma}{Lemma}
\newtheorem{corol}{Corollary}



\newcommand{\dts}{\text{\rm D-TS}}
\newcommand{\dtsp}{\text{\rm D-TS$^+$}}
\newcommand{\afterthm}{\vspace{-.3cm}}
\newcommand{\aftersec}{\vspace{-.1cm}}

\title{Double Thompson Sampling for Dueling Bandits}


\author{
Huasen Wu \\
University of California, Davis\\
\texttt{hswu@ucdavis.edu}
\And
Xin Liu \\
University of California, Davis\\
\texttt{xinliu@ucdavis.edu}
}


%

\begin{document}

\maketitle
\vspace{-.6cm}
\begin{abstract}
In this paper, we propose a Double Thompson Sampling (D-TS) algorithm for dueling bandit problems. As its name suggests, D-TS selects both the first and the second candidates according to Thompson Sampling.
Specifically, D-TS maintains a posterior distribution for the preference matrix, and chooses the pair of arms for comparison according to two sets of samples independently drawn from the posterior distribution.
This simple algorithm applies to general \emph{Copeland dueling bandits}, including \emph{Condorcet dueling bandits} as a special case.  For general Copeland dueling bandits, we show that D-TS achieves $O(K^2 \log T)$ regret. Moreover, using a back substitution argument, we refine the regret to $O(K \log T + K^2 \log \log T)$ in Condorcet dueling bandits and most practical Copeland dueling bandits. In addition, we propose an enhancement of D-TS, referred to as D-TS$^+$, to reduce the regret in practice by carefully breaking ties.   Experiments based on both synthetic and real-world data demonstrate that D-TS and D-TS$^+$ significantly improve the overall performance, in terms of regret and robustness.
\end{abstract}


\section{Introduction} \label{sec:intro}
\aftersec
The dueling bandit problem \cite{Yue2012JCSS:DuelingBandits} is a variant of the classical multi-armed bandit (MAB) problem, where the feedback comes in the form of pairwise comparison. This model has attracted much attention as it can be applied in many systems such as information retrieval (IR) \cite{Yue2009ICML:DuelingBandits,Zoghi2014WSDM:RCS}, where user preferences are  easier to obtain and typically more stable. Most earlier work \cite{Yue2012JCSS:DuelingBandits,Zoghi2014ICML:RUCB,Komiyama2015COLT:DB} focuses on Condorcet dueling bandits, where there exists an arm, referred to as the Condorcet winner, that beats all other arms.
Recent work \cite{Zoghi2015NIPS:CDB,Komiyama2016ICML:CWRMED} turns to a more general and practical case of a Copeland winner(s), which is the arm (or arms) that beats the most other arms.
Existing algorithms are mainly generalized from traditional MAB algorithms along two lines: 1)  UCB (Upper Confidence Bound)-type algorithms, such as RUCB \cite{Zoghi2014ICML:RUCB} and CCB \cite{Zoghi2015NIPS:CDB}; and, 2) MED (Minimum Empirical Divergence)-type algorithms, such as RMED \cite{Komiyama2015COLT:DB} and CW-RMED/ECW-RMED \cite{Komiyama2016ICML:CWRMED}.

In traditional MAB, an alternative effective solution is Thompson Sampling  (TS) \cite{Thompson1933FirstMAB}. Its principle is to choose the optimal action that maximizes the expected reward according to the randomly drawn belief.
TS has been successfully applied in traditional MAB \cite{Chapelle2011NIPS:TS,Agrawal2012COLT:TS,Komiyama2015ICML:MP_TS,Qin&Liu2015IJCAI:TS} and other online learning problems \cite{Gopalan2014ICML:TS,Gopalan2015COLT:TSMDP}.
In particular, empirical studies  in \cite{Chapelle2011NIPS:TS} show that TS not only achieves lower regret than other algorithms in practice, but is also more robust as a randomized algorithm.

In the wake of the success of TS in these online learning problems, a natural question  is  whether  and how TS can be applied to dueling bandits to further improve the performance. However, it is challenging to apply the standard TS framework to dueling bandits, because not all comparisons provide information about the system statistics. Specifically, a good learning algorithm for dueling bandits will eventually compare the winner against itself.  However, comparing one arm against itself does not provide any statistical information, which is critical in TS to update the posterior distribution.
Thus, TS needs to be adjusted so that 1) comparing the winners against themselves is allowed, but, 2) trapping in comparing a non-winner arm against itself is avoided.

In this paper, we propose a Double Thompson Sampling (D-TS) algorithm for dueling bandits, including both Condorcet dueling bandits and general Copeland dueling bandits. As its name suggests,  D-TS typically selects both the first and the second candidates according to samples independently drawn from the posterior distribution.  D-TS also utilizes the idea of
confidence bounds to eliminate the likely non-winner arms, and thus avoids trapping in suboptimal comparisons.
Compared to prior studies on dueling bandits, D-TS has both practical and theoretical advantages.

First, the double sampling structure of D-TS better suits the nature of dueling bandits. Launching two independent rounds of sampling provides us the opportunity to select the same arm in both rounds and thus to compare the winners against themselves. This double sampling structure also leads to more extensive utilization of TS (e.g., compared to  RCS \cite{Zoghi2014WSDM:RCS}), and significantly reduces the regret. In addition, this simple framework applies to general Copeland dueling bandits and achieves lower regret than existing algorithms such as CCB \cite{Zoghi2015NIPS:CDB}. Moreover, as a randomized algorithm, D-TS is more robust in practice.

Second, this double sampling structure enables us to obtain theoretical bounds for the regret of D-TS.  As noted in traditional MAB literature \cite{Agrawal2012COLT:TS,Agrawal2013AISTATS:TS2}, theoretical analysis of TS is usually more difficult than UCB-type algorithms.
The analysis in dueling bandits is even more challenging because the selection of arms involves more factors and the two selected arms may be correlated.
To address this issue, our D-TS algorithm  draws  the two sets of samples independently.
Because their distributions are fully captured by historic comparison results,  when the first candidate is fixed, the comparison between it and all other arms is similar to traditional MAB and thus we can borrow ideas from traditional MAB.  Using the properties of TS and confidence bounds, we show that  D-TS achieves $O(K^2 \log T)$ regret
for a general $K$-armed Copeland dueling bandit. More interestingly, the property that the sample distribution only depends on historic comparing results (but not $t$)  enables us to refine the regret using  a \emph{back substitution} argument, where we show that D-TS achieves $O(K \log T + K^2 \log \log T)$ in Condorcet dueling bandits and many practical Copeland dueling bandits.

Based on the analysis, we further refine the tie-breaking criterion in D-TS and propose its enhancement called D-TS$^+$. D-TS$^+$ achieves the same theoretical regret bound as D-TS, but performs better in practice especially when there are multiple winners.


In summary, the main contributions of this paper are as follows:
\begin{itemize}
\item We propose a D-TS algorithm and its enhancement D-TS$^+$ for general Copeland dueling bandits. The double sampling structure suits the nature of dueling bandits and leads to more extensive usage of TS, which significantly reduces the regret.
\item We obtain theoretical regret bounds for D-TS and D-TS$^+$. For general Copeland dueling bandits, we show that D-TS and D-TS$^+$ achieve $O(K^2 \log T)$ regret. In Condorcet dueling bandits and most practical Copeland dueling bandits, we further refine the regret bound to $O(K \log T + K^2 \log \log T)$ using a back substitution argument.
\item We evaluate the D-TS and D-TS$^+$ algorithms through experiments based on both synthetic and real-world data. The results show that D-TS and D-TS$^+$ significantly improve the overall performance, in terms of regret and robustness, compared to existing algorithms.
\end{itemize}
\aftersec
\aftersec
\section{Related Work}
\aftersec
Early dueling bandit algorithms study finite-horizon settings, using the ``explore-then-exploit'' approaches,
such as IF \cite{Yue2012JCSS:DuelingBandits}, BTM \cite{Yue2011ICML:BTM}, and SAVAGE \cite{Urvoy2013ICML:SAVAGE}.
For infinite horizon settings, recent work has generalized the traditional MAB algorithms to dueling bandits along two lines. First,  RUCB \cite{Zoghi2014ICML:RUCB} and CCB \cite{Zoghi2015NIPS:CDB} are generalizations of UCB for Condorcet and general Copeland dueling bandits, respectively. In addition, \cite{Ailon2014ICML:UBDB} reduces dueling bandits to traditional MAB, which is then solved by UCB-type algorithms, called MutiSBM and Sparring.
Second, \cite{Komiyama2015COLT:DB} and \cite{Komiyama2016ICML:CWRMED} extend the MED algorithm to dueling bandits, where they present the lower bound on the regret and propose the corresponding optimal algorithms, including RMED for Condorcet dueling bandits \cite{Komiyama2015COLT:DB}, CW-RMED  and its computationally efficient version ECW-RMED for general Copeland dueling bandits \cite{Komiyama2016ICML:CWRMED}. Different from such existing work, we study algorithms for dueling bandits from the perspective of TS, which typically achieves lower regret and is more robust in practice.

Dated back to 1933, TS \cite{Thompson1933FirstMAB} is one of the earliest algorithms for exploration/exploitation tradeoff.  Nowadays, it has been applied in many variants of MAB \cite{Komiyama2015ICML:MP_TS,Qin&Liu2015IJCAI:TS,Gopalan2014ICML:TS} and other more complex problems, e.g., \cite{Gopalan2015COLT:TSMDP}, due to its simplicity, good performance, and robustness  \cite{Chapelle2011NIPS:TS}.
Theoretical analysis of TS is much more difficult. Only recently, \cite{Agrawal2012COLT:TS} proposes a logarithmic bound for the standard frequentist expected regret, whose constant factor is further improved in \cite{Agrawal2013AISTATS:TS2}.  Moreover  \cite{Russo2014TR:TS,Russo2014MOR:TS} derive the bounds for its Bayesian expected regret through information-theoretic analysis.

TS has been preliminarily considered for dueling bandits \cite{Zoghi2014WSDM:RCS,Welsh2012LSOLDM:TS}. In particular, recent work \cite{Zoghi2014WSDM:RCS} proposes a Relative Confidence Sampling (RCS) algorithm that combines TS with RUCB \cite{Zoghi2014ICML:RUCB}  for Condorcet dueling bandits. Under RCS, the first arm is selected by TS while the second arm is selected according to their RUCB. Empirical studies demonstrate the performance improvement of using RCS in practice, but no theoretical bounds on the regret are provided.


\section{System Model}

We consider a dueling bandit problem with $K$ ($K \geq 2$) arms, denoted by $\mathcal{A} = \{1, 2, \ldots, K\}$.
At each time-slot $t > 0$, a pair of arms {\small $(a_t^{(1)}, a_t^{(2)})$} is displayed to a user and a noisy comparison outcome $w_t$ is obtained, where $w_t = 1$ if the user prefers {\small $a_t^{(1)}$} to {\small $a_t^{(2)}$}, and $w_t = 2$ otherwise.  
We assume the user preference is stationary over time and the distribution of  comparison outcomes is characterized by the preference matrix  $\bs{P} = [p_{ij}]_{K \times K}$, where $p_{ij}$ is the probability that the user prefers arm $i$ to arm $j$, i.e.,
$p_{ij} = \mathbb{P} \{ i \succ j\},~i,j = 1,2, \ldots, K$.
We assume that the displaying order does not affect the preference, and hence, $p_{ij} + p_{ji} = 1$ and $p_{ii} = 1/2$. We say that arm~$i$ beats arm~$j$ if $p_{ij} > 1/2$.

We study the general Copeland dueling bandits, where the Copeland winner is defined as the arm (or arms) that maximizes the number of other arms it beats \cite{Zoghi2015NIPS:CDB,Komiyama2016ICML:CWRMED}. Specifically, the Copeland score is defined as {\small $\sum_{j \neq i } \mathds{1}(p_{ij} > 1/2)$}, and  the normalized Copeland score is defined as
{\small $\zeta_i = \frac{1}{K - 1}\sum_{j \neq i } \mathds{1}(p_{ij} > 1/2)$},
where $\mathds{1}(\cdot)$ is the indicator function. Let $\zeta^*$ be the highest normalized Copeland score, i.e.,
$\zeta^*  = {\max}_{\small 1 \leq i \leq K}~ \zeta_i$.
Then the Copeland winner is defined as the arm (or arms) with the highest normalized Copeland score, i.e.,
$\mathcal{C}^* = \{i: 1 \leq i \leq K, \zeta_i = \zeta^*\}$.
Note that the Condorcet winner is a special case of Copeland winner with $\zeta^* = 1$.

A dueling bandit algorithm $\Gamma$ decides which pair of arms to compare depending on the historic observations. Specifically, define a filtration $\mathcal{H}_{t-1}$ as the history before $t$, i.e.,
{\small $\mathcal{H}_{t-1} = \{a_{\tau}^{(1)}, a_{\tau}^{(2)}, w_{\tau}, \tau = 1, 2, \ldots, t-1\}$}.
Then a dueling bandit algorithm $\Gamma$ is a function that maps {\small $\mathcal{H}_{t-1} $ to $(a_t^{(1)}, a_t^{(2)})$}, i.e.,
{\small
$
(a_t^{(1)}, a_t^{(2)}) = \Gamma(\mathcal{H}_{t-1}).
$}
The performance of a dueling bandit algorithm $\Gamma$ is measured by its expected cumulative regret, which is defined as
\begin{equation} \label{eq:def_regret}
{\small R_{\Gamma}(T) =  \zeta^* T  - \frac{1}{2} \sum_{t = 1}^T \mathbb{E} \big[ \zeta_{a_t^{(1)}} + \zeta_{a_t^{(2)}}\big].}
\end{equation}
The objective of $\Gamma$ is then to minimize $R_{\Gamma}(T)$.
As pointed out in \cite{Zoghi2015NIPS:CDB}, the results can be adapted to other regret definitions because the above definition bounds the number of suboptimal comparisons.

\section{Double Thompson Sampling} \label{sec:algorithm}


\subsection{D-TS Algorithm}
We present the D-TS algorithm for Copeland dueling bandits, as described in Algorithm~\ref{alg:dts_Copeland} (time index $t$ is omitted in pseudo codes for brevity). As its name suggests, the basic idea of D-TS is to select both the first and the second candidates by TS. For each pair $(i,j)$ with $i\neq j$, we assume a beta prior distribution for its preference probability $p_{ij}$. These distributions are updated according to the comparison results $B_{ij}(t-1)$ and $B_{ji}(t-1)$, where $B_{ij}(t-1)$ (resp.~$B_{ji}(t-1))$ is the number of time-slots when arm $i$ (resp.~$j$)  beats arm $j$ (resp.~$i$)  before $t$. D-TS selects the two candidates by sampling  from the posterior distributions.

\begin{algorithm}[htbp]
\caption{D-TS for Copeland Dueling Bandits}
\label{alg:dts_Copeland}
\renewcommand{\algorithmicrequire}{\textbf{Input:}}
\renewcommand{\algorithmicensure}{\textbf{Output:}}
\renewcommand\algorithmiccomment[1]{%
{//{\it ~{#1}}}%
}
\begin{algorithmic}[1]
\STATE{\bfseries Init:} $\bs{B} \gets {\bs{0}_{K \times K}}$; \COMMENT{$B_{i j}$ is the number of time-slots that the user prefers arm $i$ to $j$.}
\FOR{$t = 1$ {\bfseries to} $T$}
\STATE{\COMMENT{Phase 1: Choose the first candidate $a^{(1)}$} }     \label{alg_line:first_candidate_start}
     \STATE{$\bs{U} := [u_{ij}]$,   $\bs{L} := [l_{ij}]$, where $u_{ij}= \frac{B_{ij}}{B_{ij} + B_{ji}}+ \sqrt{\frac{\alpha \log t}{B_{ij} + B_{ji}}}$, $l_{ij} =  \frac{B_{ij}}{B_{ij}+ B_{ji}} - \sqrt{\frac{\alpha \log t}{B_{ij}+ B_{ji}}}$, if $i \neq j$, and $u_{ii} = l_{ii} = 1/2$, $\forall i$;}   \COMMENT{$\frac{x}{0} := 1$ for any $x$.}
    \STATE{$\hat{\zeta}_i \gets \frac{1}{K - 1}\sum_{j \neq i} \mathds{1} (u_{ij}> 1/2)$;} \COMMENT{Upper bound of the normalized Copeland score.}
    \STATE{$\mathcal{C} \gets \{ i: \hat{\zeta}_i = \max_{j} \hat{\zeta}_j\}$;}
    \FOR{$i, j = 1, \ldots, K$ with $i < j$ }
             \STATE{Sample $\theta^{(1)}_{ij} \sim {\rm Beta}(B_{ij} + 1, B_{ji}+ 1)$};
             \STATE{$\theta^{(1)}_{ji} \gets 1 - \theta^{(1)}_{ij}$};
    \ENDFOR
   \STATE $a^{(1)} \gets \underset{i \in \mathcal{C} }{\argmax} \sum_{j\neq i} \mathds{1}(\theta^{(1)}_{ij} > 1/2)$;    \COMMENT{Choosing from $\mathcal{C}$ to eliminate likely non-winner arms; Ties are broken randomly.}
\STATE{\COMMENT{Phase 2: Choose the second candidate $a^{(2)}$}}    \label{alg_line:sec_candidate_start}
      \STATE{Sample $\theta^{(2)}_{ i a^{(1)} } \sim {\rm Beta}(B_{i a^{(1)} } + 1, B_{a^{(1)}  i} +1)$ for all $i \neq a^{(1)}$, and let $\theta^{(2)}_{a^{(1)}  a^{(1)} } = 1/2$;  }
      \STATE{$a^{(2)}  \gets \underset{i: l_{i a^{(1)}} \leq 1/2}{\argmax}~ \theta^{(2)}_{i a^{(1)} }$;}
      \COMMENT{Choosing only from uncertain pairs.}
      \label{alg_line:sec_candidate_end}
\STATE{\COMMENT{Compare and Update}}
\STATE{Compare pair $(a^{(1)} , a^{(2)} )$ and observe the result $w$;}
\STATE{Update $\bs{B}$: $B_{a^{(1)} a^{(2)}}\gets B_{a^{(1)} a^{(2)}}+ 1$ if $w = 1$, or  $B_{a^{(2)} a^{(1)}} \gets B_{a^{(2)}a^{(1)}} + 1 $ if $w = 2$;}
\ENDFOR
\end{algorithmic}
\end{algorithm}

Specifically, at each time-slot $t$, the D-TS algorithm consists of two phases that select the first and the second candidates, respectively. When choosing the first candidate {\small $a_{t}^{(1)}$}, we first use the RUCB \cite{Zoghi2014ICML:RUCB} of $p_{ij}$ to eliminate the arms that are unlikely to be the Copeland winner, resulting in a candidate set $\mathcal{C}_t$ (Lines~4 to~6). The algorithm then samples {\small $\theta^{(1)}_{ij}(t)$} from the posterior beta distribution, and the first candidate {\small $a_{t}^{(1)}$} is chosen by ``majority voting'', i.e., the arm within $\mathcal{C}_t$ that beats the most arms according to  {\small $\theta^{(1)}_{ij}(t)$} will be selected (Lines~7 to 11). The ties are broken randomly here for simplicity and will be refined later in Section~\ref{subsec:dts_plus}. A similar idea is applied to select the second candidate {\small $a_{t}^{(2)}$}, where new samples {\small $\theta^{(2)}_{i a_{t}^{(1)}}(t)$} are generated and the arm with the largest {\small $\theta^{(2)}_{i a_{t}^{(1)}}(t)$} among all arms with {\small $l_{i a_t^{(1)}} \leq 1/2$}  is selected as the second candidate  (Lines~13 to 14).

The double sampling structure of D-TS is designed based on the nature of dueling bandits,
i.e., at each time-slot, two arms are needed for comparison.
Unlike RCS \cite{Zoghi2014WSDM:RCS}, D-TS selects both candidates using TS. This leads to more extensive utilization of TS and thus achieves much lower regret. Moreover, the two sets of samples are independently distributed, following the same posterior that is only determined by the comparison statistics $B_{ij}(t-1)$ and $B_{ji}(t-1)$. This property enables us to obtain an $O(K^2 \log T)$ regret bound and further refine it by a back substitution argument, as discussed later.

We also note that RUCB-based elimination (Lines~4 to~6) and RLCB (Relative Lower Confidence Bound)-based elimination (Line~14) are essential in D-TS. Without these eliminations, the algorithm may trap in suboptimal comparisons. Consider one extreme case in Condorcet dueling bandits\footnote{A Borda winner may be more appropriate in this special case \cite{Jamieson2015AISTAT:SDB}, and we mainly use it to illustrate the dilemma.}: assume arm $1$ is the Condorcet winner with $p_{1j} = 0.501$ for all $j > 1$, and arm 2 is not the Condorcet winner, but with $p_{2j} = 1$ for all $j > 2$. Then for a larger $K$ (e.g., $K > 4$), without RUCB-based elimination, the algorithm may trap in {\small $a_t^{(1)} = 2$} for a long time, because arm 2 is likely to receive higher score than arm 1.
This issue can be addressed by RUCB-based elimination as follows: when chosen as the first candidate, arm 2 has a great probability to compare with arm 1; after sufficient comparisons with arm 1, arm 2 will have $u_{21}(t) < 1/2$ with high probability; then arm 2 is likely to be eliminated because arm 1 has  {\small $\hat{\zeta}_1(t) = 1 > \hat{\zeta}_2(t)$} with high probability. Similarly, RLCB-based elimination (Line 14, where we restrict to the arms with {\small $l_{i a_t^{(1)}}(t) \leq 1/2$}) is important especially for non-Condorcet dueling bandits. Specifically,  {\small $l_{i a_t^{(1)}}(t) > 1/2$} indicates that arm $i$ beats {\small $a_t^{(1)}$} with high probability. Thus, comparing {\small $a_t^{(1)}$} and arm $i$ brings little information gain and thus should be eliminated to minimize the regret.

%

\subsection{Regret Analysis}


Before conducting the regret analysis, we first introduce certain notations that will be used later.

\textbf{Gap to 1/2:} In dueling bandits, an important benchmark for $p_{ij}$ is 1/2, and thus we let $\Delta_{ij}$ be the gap between $p_{ij}$ and 1/2, i.e.,
$
\Delta_{ij} = |p_{ij} - 1/2|
$.

\textbf{Number of Comparisons:}
Under D-TS, $(i, j)$ can be compared in the form of $(a_t^{(1)}, a_t^{(2)}) = (i, j)$ and $(a_t^{(1)}, a_t^{(2)}) = (j, i)$. We consider these two cases separately and define the following counters:
$
{\small N^{(1)}_{ij}(t) = \sum_{\tau =1 }^{t}\mathds{1} (a_{\tau}^{(1)} = i, a_{\tau}^{(2)} = j)}
$
and
$
{\small N^{(2)}_{ij}(t) = \sum_{\tau =1}^{t}\mathds{1} (a_{\tau}^{(1)} = j, a_{\tau}^{(2)} = i)}
$.
Then the total number of comparisons is
$
{\small N_{ij}(t) = N^{(1)}_{ij}(t)  + N^{(2)}_{ij}(t)}
$ for $i \neq j$, and
$
{\small N_{ii}(t) = N^{(1)}_{ii}(t) = N^{(2)}_{ii}(t)}
$ for $i = j$.


\subsubsection{$O(K^2 \log T)$ Regret}
To obtain theoretical bounds for the regret of D-TS, we make the following assumption:

\textbf{Assumption 1:} The preference probability $p_{ij}\neq 1/2$ for any $i \neq j$.

Under Assumption~1, we present the first result  for D-TS in general Copeland dueling bandits:
\begin{prop} \label{thm:regret_copeland}
When applying D-TS with $\alpha > 0.5$ in a Copeland dueling bandit with a preference matrix $P = [p_{ij}]_{K \times K}$ satisfying Assumption~1, its regret is bounded as:
\begin{equation}
R_{\text{\rm D-TS}}(T) \leq \sum_{i\neq j: p_{ij} <  1/2}\bigg[\frac{4\alpha\log T}{\Delta_{ij}^2} +  (1+ \epsilon) \frac{\log T}{D(p_{ij}||1/2)}\bigg]  +O(\frac{K^2}{\epsilon^2}), \label{eq:regret_copeland}
\end{equation}
where $\epsilon > 0$ is an arbitrary constant, and $D(p || q) = p \log \frac{p}{q} + (1 - p) \log \frac{1-p}{1 - q}$ is the KL divergence.
\end{prop}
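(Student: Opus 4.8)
The plan is to reduce $R_{\dts}(T)$ to expected counts of comparisons of ``losing'' pairs, bound those counts separately via the two elimination rules, and collect everything else into $O(K^2/\epsilon^2)$.

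\emph{Reduction.} First I would use $0\le\zeta^*-\zeta_i\le 1$ (with equality to $0$ exactly on $\mathcal{C}^*$) to get $\zeta^*-\tfrac12(\zeta_{a_t^{(1)}}+\zeta_{a_t^{(2)}})\le\mathds{1}(a_t^{(1)}\notin\mathcal{C}^*\ \text{or}\ a_t^{(2)}\notin\mathcal{C}^*)$. Under Assumption~1 each unordered pair $\{i,j\}$, $i\neq j$, has a unique orientation with $p_{ij}<1/2$, so a slot comparing two distinct arms is counted by exactly one $N_{ij}(T)$ with $p_{ij}<1/2$, and $N_{ij}=N^{(1)}_{ij}+N^{(2)}_{ij}$. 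The point is that, whichever of the two chosen arms is proposed first, a regret-incurring distinct-arm comparison is always charged to the $p_{ij}<1/2$ orientation of its pair; the only regret-incurring slots not of this form are self-comparisons $a_t^{(1)}=a_t^{(2)}\notin\mathcal{C}^*$. This gives
\[
R_{\dts}(T)\ \le\ \sum_{i\ne j:\,p_{ij}<1/2}\mathbb{E}\!\left[N_{ij}(T)\right]\ +\ \mathbb{E}\big[\#\{t\le T:\ a_t^{(1)}=a_t^{(2)}\notin\mathcal{C}^*\}\big].
\]

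\emph{The two counters.} Fix $p_{ij}<1/2$. For $N^{(1)}_{ij}$ (loser $i$ proposed first), $(a_t^{(1)},a_t^{(2)})=(i,j)$ needs $l_{ji}(t)\le 1/2$ on Line~14, i.e.\ $u_{ij}(t)\ge 1/2$ since $u_{ij}(t)+l_{ji}(t)=1$; a Hoeffding bound, with a union bound over the random comparison count that converges precisely because $\alpha>1/2$, shows that w.h.p.\ $u_{ij}(t)<1/2$ for all $t\le T$ once $N_{ij}(t)\ge 4\alpha\log T/\Delta_{ij}^2$ (the $4\alpha$ keeps both the confidence radius and the empirical deviation below $\Delta_{ij}/2$), so $\mathbb{E}[N^{(1)}_{ij}(T)]\le 4\alpha\log T/\Delta_{ij}^2+O(1)$. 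For $N^{(2)}_{ij}$ (the winner $j$ of the pair proposed first), $(a_t^{(1)},a_t^{(2)})=(j,i)$ needs $i=\argmax_{k:\,l_{ki}(t)\le1/2}\theta^{(2)}_{ki}(t)$, but $k=j=a_t^{(1)}$ is always feasible with $\theta^{(2)}_{jj}(t)=1/2$, so the event forces $\theta^{(2)}_{ij}(t)\ge 1/2$ while $\theta^{(2)}_{ij}(t)\sim\mathrm{Beta}(B_{ij}(t)+1,B_{ji}(t)+1)$ is concentrated around $p_{ij}<1/2$; the standard one-sided Thompson-sampling calculation (split on whether $N_{ij}(t)$ exceeds $(1+\epsilon)\log T/D(p_{ij}\|1/2)$, then a Beta--Binomial tail bound) then gives $\mathbb{E}[N^{(2)}_{ij}(T)]\le(1+\epsilon)\log T/D(p_{ij}\|1/2)+O(1/\epsilon^2)$. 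Only this ``easy half'' of the TS analysis is needed, since the double-sampling structure always leaves $a_t^{(1)}$ available as a second candidate with value $1/2$, so no anti-concentration of the best sample is required.

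\emph{Assembly and the hard part.} Summing over the $\le\binom K2$ losing pairs gives the two $\log T$ terms of \eqref{eq:regret_copeland}, and the $O(1)$/$O(1/\epsilon^2)$ remainders add up to $O(K^2/\epsilon^2)$. It remains to handle the self-comparison term, and this is where the RUCB elimination (Lines~4--6) is used: if $a_t^{(1)}=i\notin\mathcal{C}^*$ then $i\in\mathcal{C}_t$, which on the concentration event forces $\hat\zeta_i(t)\ge\zeta^*>\zeta_i$ and hence some $j$ with $p_{ij}<1/2$ but $u_{ij}(t)>1/2$, i.e.\ $l_{ji}(t)<1/2$; then $j$ is a feasible second candidate, and the constraint $u_{ij}(t)>1/2$ prevents the posterior mean of $\theta^{(2)}_{ji}(t)$ from sitting well below $1/2$, so $\theta^{(2)}_{ji}(t)>1/2$ with nonnegligible probability, i.e.\ $a_t^{(2)}\neq i$. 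A geometric-waiting-time estimate (bounding the expected reciprocal of that probability, as in standard TS proofs) then charges each ``unproductive'' slot --- a self-comparison of a non-winner, or a non-winner paired only with a Copeland winner it beats --- against real progress on an under-explored losing pair, showing this term is $O(K^2)$, absorbed into the error. I expect this last bookkeeping to be the main obstacle: one must verify that every selection of a non-Copeland-winner is ultimately paid for by exploration of a $p_{ij}<1/2$ pair, which requires the RUCB elimination, the RLCB elimination, and the second-phase posterior to interlock, and one must keep the leading constants at exactly $4\alpha$ and $1+\epsilon$. The concentration inputs in the second step are routine once this reduction is in place.
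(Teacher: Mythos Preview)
Your proposal is correct and follows essentially the same route as the paper: decompose into the two ordered counts $N^{(1)}_{ij}$, $N^{(2)}_{ij}$ for each unordered pair (the paper states this as Lemmas~1 and~2, indexed by the first candidate rather than by the losing orientation, but it is the same split), bound one side by the RLCB cutoff at $4\alpha\log T/\Delta_{ij}^2$ and the other by the one-sided Thompson-sampling tail at $(1+\epsilon)\log T/D(p_{ij}\|1/2)$, and handle non-winner self-comparisons separately (the paper's Lemma~3) via the RUCB elimination plus a $(1-q)/q$ transfer inequality and the Agrawal--Goyal bound on $\mathbb{E}[1/q]$. Your observation that only the ``easy half'' of the TS analysis is needed because $a_t^{(1)}$ is always a feasible second candidate with fixed value $1/2$ is exactly the simplification the paper exploits.

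Two small corrections that do not affect the plan: in your $N^{(2)}_{ij}$ paragraph the constraint and sample should be $l_{kj}(t)\le 1/2$ and $\theta^{(2)}_{kj}(t)$ (indexed by the first candidate $j$), not $l_{ki}$ and $\theta^{(2)}_{ki}$; and in the self-comparison step, the sentence ``$u_{ij}(t)>1/2$ prevents the posterior mean of $\theta^{(2)}_{ji}(t)$ from sitting well below $1/2$'' is not the right mechanism---that constraint only guarantees $j$ is feasible, while the summability of $\mathbb{E}[1/q_{ji}(\tau_n+1)-1]$ over comparison epochs comes purely from $p_{ji}>1/2$ via the Agrawal--Goyal estimate you cite next. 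Your formal plan (geometric waiting time against the $(i,j)$ comparison count) is the correct one and matches the paper's Lemma~5.
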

The summation operation in Eq.~\eqref{eq:regret_copeland}  is conducted over all pairs $(i,j)$ with $p_{ij} < 1/2$. Thus, Proposition~\ref{thm:regret_copeland} states that D-TS achieves $O(K^2 \log T)$ regret in Copeland dueling bandits. To the best of our knowledge, this is the first theoretical bound for TS in dueling bandits. The scaling behavior of this bound with respect to $T$ is order optimal, since a lower bound $\Omega(\log T)$ has been shown in \cite{Komiyama2016ICML:CWRMED}.
The refinement of the scaling behavior with respect to $K$ will be discussed later.

Proving Proposition~\ref{thm:regret_copeland} needs to  bound the number of comparisons for all pairs $(i,j)$ with $i \notin \mathcal{C}^*$ or $j \notin \mathcal{C}^*$.
When fixing the first candidate as {\small $a_t^{(1)} = i$}, the selection of the second candidate {\small $a_t^{(2)}$} is similar to a traditional $K$-armed bandit problem with expected utilities $p_{ji}$ ($j = 1,2, \ldots, K$).  However, the analysis is more complex here since different arms are eliminated differently depending on the value of $p_{ji}$.
We prove Proposition~\ref{thm:regret_copeland} through Lemmas~\ref{thm:cmp_bounds_less_0p5} to~\ref{thm:cmp_bounds_eq_0p5}, which  bound the number of comparisons for all suboptimal pairs $(i,j)$  under different scenarios, i.e., $p_{ji} < 1/2$, $p_{ji} > 1/2$, and $p_{ji} = 1/2$ ($j = i \notin \mathcal{C}^*$), respectively.

\begin{lemma} \label{thm:cmp_bounds_less_0p5}
Under D-TS, for an arbitrary constant $\epsilon>0$ and one pair $(i,j)$ with {\small $p_{ji}<1/2$}, we have
\begin{equation}
\mathbb{E}[N^{(1)}_{ij}(T)] \leq (1+ \epsilon) \frac{\log T}{D(p_{ji}||1/2)} + O(\frac{1}{\epsilon^2}).
\end{equation}
\end{lemma}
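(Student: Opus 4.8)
The idea is to reduce the event ``$j$ is selected as the second candidate while $i$ is the first candidate'' to a single large-deviation event for a Beta posterior sample, and then run the optimal Thompson Sampling analysis of \cite{Agrawal2012COLT:TS,Agrawal2013AISTATS:TS2}. First I would note that, by the selection rule for $a^{(2)}$, the first candidate $a^{(1)}_\tau = i$ is always eligible (since $l_{ii}(\tau) = 1/2$) with associated value $\theta^{(2)}_{ii}(\tau) = 1/2$; hence on the event $\{a^{(1)}_\tau = i,\ a^{(2)}_\tau = j\}$ (with $j \ne i$) one must have $\theta^{(2)}_{ji}(\tau) \ge \theta^{(2)}_{ii}(\tau) = 1/2$, and since $\theta^{(2)}_{ji}(\tau) \sim \mathrm{Beta}(B_{ji}(\tau-1)+1,\,B_{ij}(\tau-1)+1)$ is atomless, in fact $\theta^{(2)}_{ji}(\tau) > 1/2$ almost surely there. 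Because $p_{ji} < 1/2$, this is precisely the event that the posterior sample of an ``arm'' with true mean $p_{ji}$ exceeds the \emph{deterministic} reference value $1/2$ --- the analogue of a suboptimal TS arm's sample beating the optimal mean. The structural fact that legitimizes the reduction is the one emphasized after Algorithm~\ref{alg:dts_Copeland}: conditionally on $\mathcal{H}_{\tau-1}$, the law of $\theta^{(2)}_{ji}(\tau)$ depends only on $(B_{ij}(\tau-1),B_{ji}(\tau-1))$ --- not on $t$, not on the $\theta^{(1)}$ samples, and not on which arm ended up being $a^{(1)}_\tau$ --- so $\mathbb{E}[N^{(1)}_{ij}(T)]$ is dominated by $\sum_{\tau\le T}\mathbb{E}\big[\mathds{1}(N_{ij}(\tau-1)\ge L,\ \theta^{(2)}_{ji}(\tau)>1/2)\big]$ plus the number of slots with few past comparisons, for a threshold $L$ to be chosen.

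Next I would split the count by the number of past comparisons of the pair. Fix a small $\delta \in (0,\,1/2 - p_{ji})$ and set $L := \lceil (1+\epsilon)\log T / D(p_{ji}||1/2) \rceil$. Slots with $a^{(1)}_\tau = i,\ a^{(2)}_\tau = j$ and $N_{ij}(\tau-1) < L$ number at most $L$, since each such slot increments $N_{ij}$. For the remaining slots I would condition on $\mathcal{H}_{\tau-1}$ with $N_{ij}(\tau-1) = s \ge L$ and split on the empirical estimate $\hat p_{ji}(\tau-1) = B_{ji}(\tau-1)/s$: on the typical event $\hat p_{ji}(\tau-1) \le p_{ji} + \delta$, the Beta--Binomial identity together with a Chernoff bound gives $\mathbb{P}[\theta^{(2)}_{ji}(\tau) > 1/2 \mid \mathcal{H}_{\tau-1}] \le e^{-s D(p_{ji}+\delta\,||\,1/2)}$, which for $s \ge L$ and $\delta$ small enough that $(1+\epsilon)D(p_{ji}+\delta||1/2) > (1+\gamma)D(p_{ji}||1/2)$ for some $\gamma>0$ (using continuity and monotonicity of $D(\cdot||1/2)$) is at most $T^{-(1+\gamma)}$, so these slots contribute $o(1)$ after summing over $\tau \le T$; on the atypical event $\hat p_{ji}(\tau-1) > p_{ji}+\delta$, I would sum the tail $e^{-s D(p_{ji}+\delta\,||\,p_{ji})}$ over all $s \ge 1$, a convergent series of size $O(1/\delta^2)$, and taking $\delta = \Theta(\epsilon)$ makes this $O(1/\epsilon^2)$. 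Collecting the three contributions yields the stated bound.

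The main obstacle is the quantitative bookkeeping in the second step: $\delta$ and $L$ must be chosen so that the leading term is \emph{exactly} $(1+\epsilon)\log T / D(p_{ji}||1/2)$ while the concentration-failure term stays at $O(1/\epsilon^2)$, which is precisely the delicate part of the optimal TS analysis of \cite{Agrawal2013AISTATS:TS2} and which I would adapt essentially verbatim; in fact the present setting is slightly easier, since the competing reference value $1/2$ is deterministic, so there is no ``optimal-arm undersampling'' term to control. A minor point is that $\theta^{(2)}_{ji}(\tau)$ is realized only when $a^{(1)}_\tau = i$ and $l_{ji}(\tau) \le 1/2$; since dropping either constraint only enlarges the set of slots being counted (and, if needed, one may couple to an independent auxiliary Beta sample drawn at every step), these cause no difficulty, and for the same reason neither the RUCB/RLCB eliminations nor Assumption~1 are needed here beyond ensuring $D(p_{ji}||1/2) > 0$ so that $L$ is finite.
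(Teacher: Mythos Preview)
Your proposal is correct and follows essentially the same approach as the paper's proof in Appendix~\ref{app:proof_cmp_bounds_less_0p5}: both reduce the event $(a_t^{(1)},a_t^{(2)})=(i,j)$ to $\theta^{(2)}_{ji}(t)>1/2$ via the deterministic competitor $\theta^{(2)}_{ii}=1/2$, then run the Agrawal--Goyal decomposition with a single intermediate threshold ($x_{ji}$ in the paper, $p_{ji}+\delta$ in your version) on the empirical mean and a comparison-count cutoff $L\approx(1+\epsilon)\log T/D(p_{ji}\|1/2)$. The only difference is cosmetic --- the paper splits first on the empirical-mean event and then on $N_{ij}(t-1)\lessgtr L$, whereas you split in the opposite order --- and both exploit, as you note, that the reference value $1/2$ is fixed so the ``optimal-arm undersampling'' term of \cite{Agrawal2013AISTATS:TS2} never appears.
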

\afterthm
\begin{proof} We can prove this lemma by viewing the comparison between the first candidate arm $i$ and its inferiors as a traditional MAB. In fact, it may be even simpler than that in \cite{Agrawal2013AISTATS:TS2} because under D-TS, arm $j$ with $p_{ji} < 1/2$ is competing with arm $i$ with $p_{ii} = 1/2$, which is known and fixed. Then we can bound $\mathbb{E}[N^{(1)}_{ij}(T)]$ using the techniques in
\cite{Agrawal2013AISTATS:TS2}. Details can be found in  Appendix~\ref{app:proof_cmp_bounds_less_0p5}.
\end{proof}

\begin{lemma} \label{thm:cmp_bounds_larger_0p5}
Under D-TS with $\alpha > 0.5$, for one pair $(i, j)$ with $p_{ji} >1/2$, we have
\begin{equation}
\mathbb{E}[N^{(1)}_{ij}(T)] \leq \frac{4 \alpha \log T}{\Delta_{ji}^2} + O(1).
\end{equation}
\end{lemma}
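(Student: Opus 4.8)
In contrast to Lemma~\ref{thm:cmp_bounds_less_0p5}, where $j$ is inferior to the first candidate $i$, here $j$ \emph{beats} $i$, and the plan is to bound $N^{(1)}_{ij}(T)$ entirely through the RLCB-based elimination in Phase~2 (Line~14 of Algorithm~\ref{alg:dts_Copeland}); the selection of the first candidate plays no role. To have $(a_t^{(1)},a_t^{(2)}) = (i,j)$, arm $j$ must be eligible as the second candidate against $i$, which requires $l_{ji}(t) \le 1/2$, i.e. $\hat p_{ji}(t-1) - \sqrt{\alpha\log t / N_{ij}(t-1)} \le 1/2$ with $\hat p_{ji}(t-1) := B_{ji}(t-1)/N_{ij}(t-1)$. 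Since $p_{ji} > 1/2$, once the pair $\{i,j\}$ has been compared enough times the confidence radius shrinks below $\Delta_{ji}/2$ while $\hat p_{ji}$ concentrates above $1/2 + \Delta_{ji}/2$, so $l_{ji}(t) > 1/2$ and $j$ is ruled out.

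Concretely, I would set $m := 4\alpha\log T/\Delta_{ji}^2$ and split
\[
N^{(1)}_{ij}(T) = \sum_{t=1}^T \mathds{1}\!\big(a_t^{(1)}=i,\,a_t^{(2)}=j,\, N_{ij}(t-1) < m\big) + \sum_{t=1}^T \mathds{1}\!\big(a_t^{(1)}=i,\,a_t^{(2)}=j,\, N_{ij}(t-1) \ge m\big).
\]
Whenever a term of the first sum equals $1$, $N^{(1)}_{ij}(t-1)\le N_{ij}(t-1) < m$ and $N^{(1)}_{ij}$ increases by one; the attained values of $N^{(1)}_{ij}(t-1)$ are therefore distinct and lie in $\{0,\dots,\lceil m\rceil - 1\}$, so the first sum is at most $\lceil m\rceil \le 4\alpha\log T/\Delta_{ji}^2 + 1$ deterministically. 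For the second sum I take expectations term by term. On the event in the $t$-th term, $l_{ji}(t)\le 1/2$ and $N_{ij}(t-1) \ge m \ge 4\alpha\log t/\Delta_{ji}^2$, hence $\sqrt{\alpha\log t / N_{ij}(t-1)} \le \Delta_{ji}/2$, and rearranging $l_{ji}(t)\le 1/2$ forces $\hat p_{ji}(t-1) \le 1/2 + \Delta_{ji}/2 = p_{ji} - \Delta_{ji}/2$. The outcomes of successive comparisons of the pair $\{i,j\}$ are i.i.d. $\mathrm{Bernoulli}(p_{ji})$ (the environment is stationary and, given the compared pair, the outcome is independent of the history), so conditioning on $N_{ij}(t-1)=n$, applying Hoeffding's inequality, and union-bounding over $n\ge m$,
\[
\mathbb{P}\!\big(a_t^{(1)}=i,\,a_t^{(2)}=j,\, N_{ij}(t-1)\ge m\big) \le \sum_{n\ge m} e^{-n\Delta_{ji}^2/2} \le \frac{e^{-m\Delta_{ji}^2/2}}{1-e^{-\Delta_{ji}^2/2}} = \frac{T^{-2\alpha}}{1-e^{-\Delta_{ji}^2/2}}.
\]
Summing over $t=1,\dots,T$ contributes $T^{1-2\alpha}/(1-e^{-\Delta_{ji}^2/2})$, which is $O(1)$ precisely because $\alpha>1/2$; adding the two pieces gives the claimed bound.

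This is essentially the classical UCB-elimination argument, so no single step is hard; the points needing care are (i) choosing $m$ so that the confidence radius is driven exactly to $\Delta_{ji}/2$ --- this produces the constant $4\alpha$, and it is also where the assumption $\alpha > 1/2$ is used, via $\sum_{t\le T} T^{-2\alpha} = O(1)$; (ii) the non-strict inequality $l_{ji}(t) \le 1/2$, which I avoid by taking the bad event to be $\hat p_{ji}(t-1) \le p_{ji} - \Delta_{ji}/2$, whose complement gives $l_{ji}(t) > 1/2$ strictly; and (iii) checking that, although the comparison times are data-dependent, the per-pair outcome sequence is i.i.d., so Hoeffding applies after conditioning on $N_{ij}(t-1)$.
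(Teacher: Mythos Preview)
Your proposal is correct and follows essentially the same route as the paper: the same split at $m = 4\alpha\log T/\Delta_{ji}^2$, the same counting argument for the small-$N_{ij}$ part, and the same reduction of the large-$N_{ij}$ part to a concentration event via $l_{ji}(t)\le 1/2$. The only stylistic difference is in the tail bound: the paper observes that $N_{ij}(t-1)\ge m$ and $l_{ji}(t)\le 1/2$ together imply $u_{ji}(t)\le p_{ji}$ and then invokes its Lemma~\ref{thm:rucb_properties} to get $\sum_t \mathbb{P}\{u_{ji}(t)\le p_{ji}\}=O(1)$, whereas you go directly to $\hat p_{ji}(t-1)\le p_{ji}-\Delta_{ji}/2$ and use Hoeffding with a union bound over $n\ge m$; both yield an $O(1)$ remainder under $\alpha>1/2$ (your constant depends on $\Delta_{ji}$, the paper's does not), and your remark in~(iii) that one union-bounds over sample sizes rather than ``conditions'' on $N_{ij}(t-1)$ is the right way to phrase it.
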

\afterthm
\begin{proof}
We note that when $a_t^{(1)} = i$, arm $j$ can be selected as $a_t^{(2)}$ only when its RLCB $l_{ji}(t) \leq 1/2$. Then we can bound $\mathbb{E}[N^{(1)}_{ij}(T)]$ by $O(\frac{4\alpha \log T}{\Delta_{ji}^2})$ similarly to the analysis of traditional UCB algorithms \cite{Bubeck2010PhD:bandits}. Details can be found in Appendix~\ref{app:proof_cmp_bounds_larger_0p5}.
\end{proof}

\begin{lemma} \label{thm:cmp_bounds_eq_0p5}
Under D-TS, for any arm  $i \notin \mathcal{C}^*$, we have
\begin{equation}
\mathbb{E}[N_{ii}(T)] \leq O(K) + \sum_{k: p_{ki} > 1/2} \Theta\big( \frac{1}{\Delta_{ki}^2} + \frac{1}{\Delta_{ki}^2 D(1/2|| p_{ki})} + \frac{1}{\Delta_{ki}^4}\big) = O(K).
\end{equation}
\end{lemma}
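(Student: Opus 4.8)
Since $i\notin\mathcal{C}^*$ and, by Assumption~1, no pair of arms ties, the set $\mathcal{S}_i=\{k\neq i: p_{ki}>1/2\}$ of arms that beat $i$ is nonempty: otherwise $i$ would beat every other arm, giving $\zeta_i=1\geq\zeta^*$ and hence $i\in\mathcal{C}^*$. The idea is to show that a self-comparison $a_t^{(1)}=a_t^{(2)}=i$ can occur only rarely, because the two elimination steps of D-TS work against it. On the Phase-2 side, if $a_t^{(1)}=i$ and some $k\in\mathcal{S}_i$ is still ``uncertified'', meaning $l_{ki}(t)\leq 1/2$, then $k$ is an admissible second candidate, and $a_t^{(2)}=i$ forces $\theta^{(2)}_{ki}(t)\leq\theta^{(2)}_{ii}(t)=1/2$, which is unlikely because $p_{ki}>1/2$. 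On the Phase-1 side, once every $k\in\mathcal{S}_i$ is certified, i.e.\ $l_{ki}(t)>1/2$, equivalently $u_{ik}(t)=1-l_{ki}(t)<1/2$, for all $k\in\mathcal{S}_i$, then $\hat\zeta_i(t)\leq\zeta_i$, while for a fixed Copeland winner $c$ one has $\hat\zeta_c(t)\geq\zeta_c=\zeta^*$ as soon as $u_{cj}(t)\geq p_{cj}$ for all $j$; since $\zeta_i<\zeta^*$, arm $i$ is then dropped from $\mathcal{C}_t$ and cannot be chosen as the first candidate.

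To make this quantitative, define a ``clean event'' $\mathcal{G}_t$ as the intersection of the validity of the $O(K)$ relevant confidence intervals and the accuracy of the corresponding empirical means, namely: $u_{cj}(t)\geq p_{cj}$ for the fixed winner $c$; $l_{ki}(t)>1/2$ for every $k\in\mathcal{S}_i$ with $N_{ki}(t-1)$ above the certification threshold $M_{ki}=\Theta(\alpha\log t/\Delta_{ki}^2)$; and $\widehat p_{ki}(t-1)\geq p_{ki}-\Delta_{ki}/2$ for every $k\in\mathcal{S}_i$ with $N_{ki}(t-1)$ above a small threshold. Each ingredient fails at time $t$ with probability summable in $t$ when $\alpha>1/2$, so $\sum_{t}\mathbb{P}(\overline{\mathcal{G}}_t)=O(K)$; this reuses the Hoeffding/peeling estimates already used for Lemma~\ref{thm:cmp_bounds_larger_0p5}. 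On $\mathcal{G}_t$, a self-comparison $a_t^{(1)}=a_t^{(2)}=i$ cannot have all of $i$'s superiors satisfying $N_{ki}(t-1)>M_{ki}$ (else $i\notin\mathcal{C}_t$), so some $k\in\mathcal{S}_i$ has $N_{ki}(t-1)\leq M_{ki}$, for which the confidence radius is large enough (given the empirical-mean control) that $l_{ki}(t)\leq 1/2$, and hence $\theta^{(2)}_{ki}(t)\leq 1/2$. Summing over the at most $K$ choices of $k$,
\begin{equation}
\mathbb{E}[N_{ii}(T)] \ \leq\ O(K) \ +\ \sum_{k\in\mathcal{S}_i}\ \sum_{t=1}^{T} \mathbb{P}\big(a_t^{(1)}=i,\ a_t^{(2)}=i,\ \theta^{(2)}_{ki}(t)\leq 1/2,\ N_{ki}(t-1)\leq M_{ki}\big).
\end{equation}

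It remains to show that each inner double sum is $\Theta\big(\frac{1}{\Delta_{ki}^2}+\frac{1}{\Delta_{ki}^2 D(1/2\|p_{ki})}+\frac{1}{\Delta_{ki}^4}\big)$, i.e.\ a constant in $T$. Fix $k\in\mathcal{S}_i$, condition on $\mathcal{H}_{t-1}$, and use that the Phase-1 and Phase-2 samples are drawn independently and depend on the past only through the counts $B_{ij}(t-1)$; split further on $N_{ki}(t-1)$ at a smaller threshold $m_{ki}=\Theta(\Delta_{ki}^{-2})$. When $m_{ki}<N_{ki}(t-1)\leq M_{ki}$, on $\mathcal{G}_t$ the empirical mean $\widehat p_{ki}(t-1)$ is within $\Delta_{ki}/2$ of $p_{ki}$, so the Beta--Binomial tail bound gives $\mathbb{P}(\theta^{(2)}_{ki}(t)\leq 1/2\mid\mathcal{H}_{t-1})\leq e^{-N_{ki}(t-1)\,\Theta(D(1/2\|p_{ki}))}$, and summing over the possible values of $N_{ki}$ produces the term $\Theta\big(\frac{1}{\Delta_{ki}^2 D(1/2\|p_{ki})}\big)$; the contribution of $\widehat p_{ki}(t-1)$ being far from $p_{ki}$ is $O(1)$ by Hoeffding. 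When $N_{ki}(t-1)\leq m_{ki}$, the posterior of $k$ is too diffuse for a useful per-round bound, so one instead bounds the \emph{number} of rounds with $a_t^{(1)}=i$ and $N_{ki}(t-1)\leq m_{ki}$: in every such round $k$ is admissible, and relating the history-conditional probability that $k$ is selected as the second candidate (which advances $N_{ki}$) to the probability that $\theta^{(2)}_{ki}(t)$ is sizeable — the device used in the TS analysis of \cite{Agrawal2013AISTATS:TS2} — shows this number is finite in expectation, of order $\Theta(\Delta_{ki}^{-2})$ rounds times a factor $\Theta(\Delta_{ki}^{-2})$ accounting for how often $\theta^{(2)}_{ki}(t)$ fails to be the maximum among the admissible arms; this yields the $\Theta(\Delta_{ki}^{-4})$ term, and the $\Theta(\Delta_{ki}^{-2})$ term absorbs the low-order contributions. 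Summing over $k\in\mathcal{S}_i$ and adding the $O(K)$ clean-event cost gives $\mathbb{E}[N_{ii}(T)]=O(K)$.

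The main obstacle is precisely the regime $N_{ki}(t-1)\leq m_{ki}$: when a superior $k$ of $i$ has been compared with $i$ only a handful of times, its sample $\theta^{(2)}_{ki}(t)$ falls below $1/2$ with non-negligible probability, so one cannot argue round-by-round that a self-comparison is unlikely. One has to show instead that such rounds are themselves rare, by arguing that whenever $k$ is admissible it is selected as the second candidate with enough conditional probability that $N_{ki}$ quickly leaves this regime; this uses the independence and history-measurability of the Phase-2 samples (a structural feature of D-TS), and requires some care when several superiors of $i$ are simultaneously admissible and compete with one another. The remaining pieces — the clean-event bookkeeping, the divergence-type term, and the reduction of the second-candidate choice to a traditional bandit — I expect to be routine adaptations of \cite{Agrawal2013AISTATS:TS2} and of the UCB-style argument behind Lemma~\ref{thm:cmp_bounds_larger_0p5}.
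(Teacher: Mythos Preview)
Your strategy is on the right track and close in spirit to the paper's, but there is one logical slip and one place where the argument can be considerably streamlined.

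\textbf{The slip.} On the clean event $\mathcal{G}_t$ you correctly deduce that not all superiors $k\in\mathcal{S}_i$ can have $N_{ki}(t-1)>M_{ki}$, so some $k$ has $N_{ki}(t-1)\le M_{ki}$. You then claim that for this $k$ the confidence radius is large enough that $l_{ki}(t)\le 1/2$; but your empirical-mean control is only the \emph{lower} bound $\widehat p_{ki}\ge p_{ki}-\Delta_{ki}/2$, which pushes $l_{ki}$ up rather than down. Even with a matching upper bound, a single threshold $M_{ki}$ cannot simultaneously guarantee ``$N_{ki}>M_{ki}\Rightarrow l_{ki}>1/2$'' and ``$N_{ki}\le M_{ki}\Rightarrow l_{ki}\le 1/2$'': the first needs $M_{ki}\ge 4\alpha\log t/\Delta_{ki}^2$, the second needs $M_{ki}$ smaller. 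The fix is to reverse the order: on $\{\hat\zeta^*(t)\ge\zeta^*\}$ (which is part of your $\mathcal{G}_t$), $a_t^{(1)}=i$ already forces some $k\in\mathcal{S}_i$ to satisfy $l_{ki}(t)\le 1/2$ directly, since otherwise $u_{ik}(t)<1/2$ for every $k\in\mathcal{S}_i$ and $\hat\zeta_i(t)\le\zeta_i<\zeta^*\le\hat\zeta^*(t)$, so $i\notin\mathcal{C}_t$. Then $\theta^{(2)}_{ki}(t)\le 1/2$ follows from $a_t^{(2)}=i$, and on $\mathcal{G}_t$ you additionally get $N_{ki}(t-1)\le M_{ki}$ \emph{from} $l_{ki}(t)\le 1/2$. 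Your displayed decomposition is then valid.

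\textbf{Comparison with the paper's route.} The paper avoids the split on $N_{ki}$ altogether. Having reduced to the event $\{(a_t^{(1)},a_t^{(2)})=(i,i),\ \hat\zeta^*(t)\ge\zeta^*,\ l_{ki}(t)\le 1/2\}$, it proves a single change-of-measure inequality: conditionally on $\mathcal{H}_{t-1}$, this event has probability at most $\frac{1-q_{ki}(t)}{q_{ki}(t)}\,\mathbb{P}\{(a_t^{(1)},a_t^{(2)})=(i,k)\mid\mathcal{H}_{t-1}\}$, where $q_{ki}(t)=\mathbb{P}\{\theta^{(2)}_{ki}(t)>1/2\mid\mathcal{H}_{t-1}\}$. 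Because $q_{ki}$ is constant between consecutive times $\tau_n$ at which $i$ and $k$ are compared, summing over $t$ collapses to $\sum_n\mathbb{E}\big[1/q_{ki}(\tau_n+1)-1\big]$, and Lemma~2 of \cite{Agrawal2013AISTATS:TS2} immediately delivers the three $\Theta(\cdot)$ terms in the statement. Your two-regime analysis is effectively reproducing the case split inside that lemma; in particular, your ``summing over the possible values of $N_{ki}$'' in the medium regime still tacitly needs the same change-of-measure to control the (a priori unbounded) number of rounds sharing a fixed value $N_{ki}(t-1)=n$, yet you invoke it only in the small regime. Once you apply it uniformly, the small/medium split becomes unnecessary and the argument condenses to the paper's.
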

\afterthm

Before proving Lemma~\ref{thm:cmp_bounds_eq_0p5}, we present an important property for $\hat{\zeta}^*(t) := \max_{1 \leq i \leq K} \hat{\zeta}_i(t)$. Recall that $\zeta^*$ is the maximum normalized Copeland score.
Using the concentration property of RUCB (Lemma~\ref{thm:rucb_properties} in Appendix~\ref{app:preliminary}), the following lemma shows that {\small $\hat{\zeta}^*(t)$}  is indeed a UCB of $\zeta^*$.
\begin{lemma}\label{thm:property_score_ucb}
For any $\alpha > 0.5$ and $t > 0$,
$
\mathbb{P} \{\hat{\zeta}^*(t) \geq  \zeta^*\} \geq 1 -  K \big[\frac{\log t}{\log(\alpha + 1/2)} + 1 \big]{t^{-\frac{2\alpha}{\alpha + 1/2}}}
$.
\end{lemma}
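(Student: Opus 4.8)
The plan is to deduce $\{\hat\zeta^*(t)\ge\zeta^*\}$ from a one-sided confidence event for a single Copeland winner, and then control the failure probability by a union bound over the $K-1$ relevant pairs together with the RUCB concentration estimate. Fix any $c^*\in\mathcal{C}^*$, so $\zeta_{c^*}=\zeta^*$. First I would show that on the event $E_t:=\{u_{c^*j}(t)\ge p_{c^*j}\ \text{for all } j\neq c^*\}$ one has $\hat\zeta_{c^*}(t)\ge\zeta^*$, hence $\hat\zeta^*(t)=\max_i\hat\zeta_i(t)\ge\zeta^*$. Indeed, by Assumption~1 every $p_{c^*j}$ with $j\neq c^*$ is $\neq 1/2$: if $p_{c^*j}>1/2$ then on $E_t$ we get $u_{c^*j}(t)\ge p_{c^*j}>1/2$, so $\mathds{1}(u_{c^*j}(t)>1/2)=1=\mathds{1}(p_{c^*j}>1/2)$; if $p_{c^*j}<1/2$ then $\mathds{1}(u_{c^*j}(t)>1/2)\ge 0=\mathds{1}(p_{c^*j}>1/2)$ trivially. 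Summing over $j\neq c^*$ and dividing by $K-1$ gives $\hat\zeta_{c^*}(t)\ge\zeta_{c^*}=\zeta^*$. (When $N_{c^*j}(t)=0$ the convention $\tfrac{x}{0}:=1$ already forces $u_{c^*j}(t)\ge p_{c^*j}$, so such pairs never violate $E_t$.)

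It then remains to bound $\mathbb{P}\{E_t^c\}\le\sum_{j\neq c^*}\mathbb{P}\{u_{c^*j}(t)<p_{c^*j}\}$. For a fixed pair $(i,j)$ I would invoke the RUCB concentration property (Lemma~\ref{thm:rucb_properties}), which provides $\mathbb{P}\{u_{ij}(t)<p_{ij}\}\le\big[\tfrac{\log t}{\log(\alpha+1/2)}+1\big]t^{-2\alpha/(\alpha+1/2)}$. For completeness I would also re-derive this by the standard peeling argument: writing $n=N_{ij}(t)$ and letting $S_n$ be the number of wins of $i$ over $j$ among their first $n$ comparisons, the event $u_{ij}(t)<p_{ij}$ is exactly $n p_{ij}-S_n>\sqrt{\alpha n\log t}$; partition $\{1,\dots,t\}$ into the geometric blocks $\{n:(\alpha+1/2)^m\le n<(\alpha+1/2)^{m+1}\}$, at most $\tfrac{\log t}{\log(\alpha+1/2)}+1$ of which meet $\{1,\dots,t\}$; on the $m$-th block the event forces $\max_{n<(\alpha+1/2)^{m+1}}(np_{ij}-S_n)>\sqrt{\alpha(\alpha+1/2)^m\log t}$, and since $np_{ij}-S_n$ is a mean-zero martingale with increments in $[-1,1]$, Doob's maximal inequality applied to the associated exponential submartingale (a maximal Hoeffding bound) makes this probability at most $\exp\!\big(-2\alpha(\alpha+1/2)^m\log t/(\alpha+1/2)^{m+1}\big)=t^{-2\alpha/(\alpha+1/2)}$; a union bound over the blocks finishes it. Note $\alpha>1/2$ guarantees $\log(\alpha+1/2)>0$, so the grid is well defined.

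Combining the two steps, $\mathbb{P}\{\hat\zeta^*(t)<\zeta^*\}\le\mathbb{P}\{E_t^c\}\le(K-1)\big[\tfrac{\log t}{\log(\alpha+1/2)}+1\big]t^{-2\alpha/(\alpha+1/2)}\le K\big[\tfrac{\log t}{\log(\alpha+1/2)}+1\big]t^{-2\alpha/(\alpha+1/2)}$, which is the claimed bound, equivalently $\mathbb{P}\{\hat\zeta^*(t)\ge\zeta^*\}\ge 1-K\big[\tfrac{\log t}{\log(\alpha+1/2)}+1\big]t^{-2\alpha/(\alpha+1/2)}$.

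The reduction in the first paragraph and the final union bound are routine; the one genuinely delicate point is the concentration estimate, since a naive union over all $n\in\{1,\dots,t\}$ only yields the weaker exponent $2\alpha-1$. Obtaining the stated exponent $2\alpha/(\alpha+1/2)$ requires the geometric peeling combined with a maximal (rather than pointwise) Hoeffding inequality, and the grid base must be taken to be exactly $\alpha+1/2$ so that the block-count factor and the per-block decay are balanced — this is the step I expect to demand the most care (and it is the reason Lemma~\ref{thm:rucb_properties} is stated separately).
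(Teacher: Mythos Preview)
Your proposal is correct and follows essentially the same route as the paper: fix a Copeland winner $i^*$, observe that $\hat\zeta^*(t)<\zeta^*$ forces $u_{i^*j}(t)<p_{i^*j}$ for some $j$, and combine Lemma~\ref{thm:rucb_properties} with a union bound. The only cosmetic difference is that the paper restricts the union bound to $j\in\mathcal{L}_{i^*}=\{j:p_{i^*j}>1/2\}$ rather than all $j\neq c^*$, but both counts are then crudely bounded by $K$, so the final estimate is identical; your additional sketch of the peeling argument behind Lemma~\ref{thm:rucb_properties} is not needed here but is accurate.
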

\afterthm

Return to the proof of Lemma~\ref{thm:cmp_bounds_eq_0p5}.
To prove Lemma~\ref{thm:cmp_bounds_eq_0p5}, we consider the cases of $\hat{\zeta}^*(t) < \zeta^*$ and $\hat{\zeta}^*(t) \geq \zeta^*$. The former case $\hat{\zeta}^*(t) < \zeta^*$ can be bounded by Lemma~\ref{thm:property_score_ucb}. For the latter case, we note that when $\hat{\zeta}^*(t) \geq \zeta^*$, the event {\small $(a_t^{(1)}, a_t^{(2)}) = (i,i)$} occurs only if: a) there exists at least one $k \in \mathcal{K}$ with $p_{ki} > 1/2$, such that $l_{ki}(t) \leq 1/2$; and b) $\theta_{ki}^{(2)}(t) \leq 1/2$ for all $k$ with $l_{ki}(t) \leq 1/2$. In this case, we can bound the probability of $(a_t^{(1)}, a_t^{(2)}) = (i,i)$  by that of  $(a_t^{(1)}, a_t^{(2)}) = (i,k)$, for $k$ with $p_{ki} > 1/2$ but $l_{ki}(t) \leq 1/2$, where the coefficient decays exponentially. Then we can bound $\mathbb{E}[N_{ii}(T)]$ by $O(1)$ similar to  \cite{Agrawal2013AISTATS:TS2}. Details of proof can be found in Appendix~\ref{app:proof_cmp_bounds_eq_0p5}.

The conclusion of Proposition~\ref{thm:regret_copeland} then follows by combining Lemmas~\ref{thm:cmp_bounds_less_0p5} to \ref{thm:cmp_bounds_eq_0p5}.

\subsubsection{Regret Bound Refinement} \label{subsec:regret_copeland_refined}

In this section, we refine the regret bound for D-TS and reduce its scaling factor with respect to the number of arms $K$.

We sort the arms for each $i \notin \mathcal{C}^*$ in the descending order of $p_{ji}$, and let  $(\sigma_{i(1)}, \sigma_{i(2)}, \ldots, \sigma_{i(K)})$ be a permutation of $(1,2, \ldots, K)$, such that $p_{\sigma_{i(1)},i} \geq p_{\sigma_{i(2)}, i} \geq \ldots \geq p_{\sigma_{i(K)}, i}$. In addition, for a Copeland winner $i^* \in \mathcal{C}$, let $L_C = \sum_{j=1}^K \mathds{1}(p_{ji^*} > 1/2)$ be the number of arms that beat arm $i^*$. To refine the regret, we introduce an additional no-tie assumption:

\textbf{Assumption~2:} For each arm $i \notin \mathcal{C}^*$, $p_{\sigma_{i(L_C+1)},i} > p_{\sigma_{i(j)},i}$ for all $j > L_C + 1$.

We present a refined regret bound for D-TS as follows:
\begin{theorem} \label{thm:regret_copeland_refined}
When applying D-TS with $\alpha > 0.5$ in a Copeland dueling bandit with a preference matrix $P = [p_{ij}]_{K \times K}$ satisfying Assumptions~1 and 2, its regret is bounded as:
{\small
\begin{eqnarray}\label{eq:regret_copeland_refined}
R_{\dts}(T) &\leq&
 \sum_{i \in \mathcal{C}^*}\bigg[\sum_{j:p_{ji} > 1/2}\frac{4\alpha\log T}{\Delta_{ji}^2} + \sum_{j:p_{ji} < 1/2}(1+ \epsilon)   \frac{\log T}{D(p_{ji}||1/2)}\bigg] + \sum_{i \notin \mathcal{C}^*} \sum_{j=1}^{L_C+1} \frac{4\alpha\log T}{\Delta_{\sigma_{i(j)},i}^2} \nonumber\\
&& +   \beta(1+ \epsilon)^2\sum_{i \notin \mathcal{C}^*} \sum_{j = L_C + 2}^K \frac{\log \log T}{D(p_{\sigma_{i(j)},i}||p_{\sigma_{i(L_C+1)},i})} + O(K^3) +O(\frac{K^2}{\epsilon^2}),
\end{eqnarray}
}
where $\beta > 2$ and $\epsilon > 0$ are constants, and $D(\cdot || \cdot)$ is the KL-divergence.
\end{theorem}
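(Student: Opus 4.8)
The plan is to bound $R_{\dts}(T)$ by a weighted sum of the quantities $\mathbb{E}[N^{(1)}_{ij}(T)]$ over all ordered pairs containing a non-winner, to dispose of most of them with Lemmas~\ref{thm:cmp_bounds_less_0p5}--\ref{thm:cmp_bounds_eq_0p5} and Lemma~\ref{thm:property_score_ucb} exactly as in Proposition~\ref{thm:regret_copeland}, and to spend the new effort only on the ``deep'' second candidates, where a back substitution sharpens $\log T$ to $\log\log T$. First, since $\zeta^* - \zeta_i \le 1$ for every $i$, one has $\zeta^* - \tfrac12(\zeta_{a_t^{(1)}}+\zeta_{a_t^{(2)}}) \le \tfrac12\mathds{1}(a_t^{(1)}\notin\mathcal{C}^*) + \tfrac12\mathds{1}(a_t^{(2)}\notin\mathcal{C}^*)$, so $R_{\dts}(T)$ is at most the sum, over all ordered $(i,j)$ with $i\notin\mathcal{C}^*$ or $j\notin\mathcal{C}^*$, of $\mathbb{E}[N^{(1)}_{ij}(T)]$ with coefficients at most $1$, plus the diagonal terms $\mathbb{E}[N_{ii}(T)]$ for $i\notin\mathcal{C}^*$ (the diagonal terms with $i\in\mathcal{C}^*$ incur zero regret). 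Those diagonal terms are $O(K)$ each by Lemma~\ref{thm:cmp_bounds_eq_0p5}, contributing $O(K^2)$ to the remainder.

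For the off-diagonal terms there are two easy regimes. If $i\in\mathcal{C}^*$ (hence $j\notin\mathcal{C}^*$), Lemma~\ref{thm:cmp_bounds_larger_0p5} gives $\mathbb{E}[N^{(1)}_{ij}(T)]\le\frac{4\alpha\log T}{\Delta_{ji}^2}+O(1)$ when $p_{ji}>1/2$, and Lemma~\ref{thm:cmp_bounds_less_0p5} gives $\mathbb{E}[N^{(1)}_{ij}(T)]\le(1+\epsilon)\frac{\log T}{D(p_{ji}\|1/2)}+O(1/\epsilon^2)$ when $p_{ji}<1/2$; summing over $i\in\mathcal{C}^*$ and $j$ produces the first bracket of \eqref{eq:regret_copeland_refined}. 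If $i\notin\mathcal{C}^*$, write $j=\sigma_{i(m)}$. Counting Copeland scores (under Assumption~1 every pair is decided, so the Copeland score of $i$ is at most $K-2-L_C$, i.e. at least $L_C+1$ arms beat $i$), one sees $p_{\sigma_{i(m)},i}>1/2$ for every $m\le L_C+1$, so Lemma~\ref{thm:cmp_bounds_larger_0p5} bounds $\mathbb{E}[N^{(1)}_{i\sigma_{i(m)}}(T)]$ by $\frac{4\alpha\log T}{\Delta_{\sigma_{i(m)},i}^2}+O(1)$, giving the $\sum_{m=1}^{L_C+1}$ sum. It remains to treat $i\notin\mathcal{C}^*$ together with $m\ge L_C+2$.

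For these deep candidates we use back substitution. Let $M_i(T)=\sum_j N^{(1)}_{ij}(T)+N^{(1)}_{ii}(T)$ count the slots with $a_t^{(1)}=i$. On $\{\hat\zeta^*(t)\ge\zeta^*\}$ (complement summable in $t$ by Lemma~\ref{thm:property_score_ucb}), $a_t^{(1)}=i$ forces $\hat\zeta_i(t)\ge\zeta^*$, so at least $L_i-L_C\ge1$ of the $L_i\ (\ge L_C+1)$ arms beating $i$ have $u_{ik}(t)>1/2$, which requires $N_{ik}(t)<4\alpha\log t/\Delta_{ik}^2$; summing over such $k$ gives the crude bound $\mathbb{E}[M_i(T)]=O(\log T)$. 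Moreover, since only $L_i-L_C-1$ of those over-optimistic arms can sit below rank $L_C+1$, pigeonhole forces some $\sigma_{i(l)}$ with $l\le L_C+1$ to satisfy $u_{i\sigma_{i(l)}}(t)>1/2$, hence $l_{\sigma_{i(l)},i}(t)=1-u_{i\sigma_{i(l)}}(t)<1/2$; this arm is an admissible second candidate with $p_{\sigma_{i(l)},i}\ge p_{\sigma_{i(L_C+1)},i}$. So in every slot with $a_t^{(1)}=i$ there is an $\mathcal{H}_{t-1}$-measurable admissible reference arm at least as strong as $\sigma_{i(L_C+1)}$, and a deep arm $\sigma_{i(m)}$ ($m\ge L_C+2$, so $p_{\sigma_{i(m)},i}\le p_{\sigma_{i(L_C+1)},i}$) is picked only when $\theta^{(2)}_{\sigma_{i(m)},i}(t)$ beats the reference's sample. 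Because the law of the $\theta^{(2)}$'s depends only on $\bs{B}$ (not on $t$), the second-candidate selections restricted to the at most $M_i(T)$ slots with $a_t^{(1)}=i$ can be analyzed like a Thompson-Sampling run of length $M_i(T)$ in which $\sigma_{i(m)}$ is suboptimal against a reference of mean $\ge p_{\sigma_{i(L_C+1)},i}$; the argument behind Lemma~\ref{thm:cmp_bounds_less_0p5} then bounds the number of such selections by $\frac{(1+\epsilon)\log M_i(T)}{D(p_{\sigma_{i(m)},i}\|p_{\sigma_{i(L_C+1)},i})}+O(1/\epsilon^2)$. Taking expectations and turning $\log M_i(T)$ into $\log\log T$ via Jensen's inequality and $\mathbb{E}[M_i(T)]=O(\log T)$ (equivalently, a high-probability truncation $M_i(T)\le C_i\log T$ whose failure is absorbed by the crude bound) yields the third sum of \eqref{eq:regret_copeland_refined}, with $\beta>2$ and $(1+\epsilon)^2$ absorbing $\log(C_i\log T)\le\beta\log\log T$ and the two $(1+\epsilon)$-slacks. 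Collecting the $O(1)$/$O(1/\epsilon^2)$ per-pair residuals ($O(K^2)$/$O(K^2/\epsilon^2)$), the $O(K^2)$ diagonal terms, and the union bounds over the $O(K)$ first candidates and $O(K^2)$ pairs ($O(K^3)$), and adding the two earlier brackets, produces \eqref{eq:regret_copeland_refined}.

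I expect the main obstacle to be making the penultimate step rigorous: the reference arm changes from slot to slot and is only guaranteed \emph{admissible} (i.e. $l_{\sigma_{i(l)},i}(t)\le1/2$), not well sampled, and its count can also be advanced by slots in which it is the first candidate, so the usual Thompson-Sampling decomposition — which normally exploits a concentrated posterior at the optimal arm — must be rerun with the fixed value $p_{\sigma_{i(L_C+1)},i}$ playing the role of the optimal mean, and one has to check that the ``good-arm-under-sampled'' contributions still total $O(1)$ per candidate arm (hence $O(K)$ per $i$) in this regime; in addition, substituting the random horizon $M_i(T)$ by its expectation inside the logarithm must go through the truncation/Jensen step above so that the back substitution is not circular with the bound being proved.
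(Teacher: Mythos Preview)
Your outline is essentially the paper's proof: the paper also reduces Theorem~\ref{thm:regret_copeland_refined} to Lemmas~\ref{thm:cmp_bounds_less_0p5}--\ref{thm:cmp_bounds_eq_0p5} plus the refinement Lemma~\ref{thm:regret_refined_singlearm} for the ``deep'' pairs $i\notin\mathcal{C}^*$, $j>L_C+1$, proved by back substitution and Jensen on $N_i^{(1)}(T)$.

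Two remarks. First, your direct argument for $\mathbb{E}[M_i(T)]=O(\log T)$ is not right as written: ``$u_{ik}(t)>1/2$ requires $N_{ik}(t)<4\alpha\log t/\Delta_{ik}^2$'' fails whenever the empirical mean deviates, and even on the good event the slot need not increment any such $N_{ik}$. The paper does not attempt this route; it simply invokes Proposition~\ref{thm:regret_copeland} to get $\mathbb{E}[N_i^{(1)}(T)]=\sum_j\mathbb{E}[N_{ij}^{(1)}(T)]=O(K\log T)$, which is all the back substitution needs (and accounts for the $O(\log K)$ absorbed into $O(K^3)$). Second, the obstacle you flag at the end is handled exactly as in the Agrawal--Goyal decomposition you cite for Lemma~\ref{thm:cmp_bounds_less_0p5}: fix thresholds $p_{\sigma_{i(j)},i}<x<y<p_{\sigma_{i(L_C+1)},i}$; on $\{\theta^{(2)}_{\sigma_{i(j)},i}<y\}$ one transfers, via a Lemma~\ref{thm:prob_bound}-type inequality, to whichever of the $L_C+1$ top references is admissible at that slot (union bound over $j'\le L_C+1$, total $O(L_C+1)$); on $\{\theta^{(2)}_{\sigma_{i(j)},i}\ge y,\ \bar p_{\sigma_{i(j)},i}<x\}$ only the deep arm's own posterior is involved, and this is the piece where conditioning on $N_i^{(1)}(T)=n$, thresholding $N^{(1)}_{i\sigma_{i(j)}}(t-1)$ at $\beta\log n/D(x\|y)$, and Jensen produce the $\beta(1+\epsilon)^2\log\log T$ term. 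So the ``good-arm-under-sampled'' contributions are indeed $O(1)$ per reference, hence $O(L_C+1)$ per $i$, and the random-horizon substitution is not circular because the bound on $\mathbb{E}[N_i^{(1)}(T)]$ comes from Proposition~\ref{thm:regret_copeland}, not from the theorem being proved.
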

In \eqref{eq:regret_copeland_refined}, the first term corresponds to the regret when the first candidate {\small $a_t^{(1)}$} is a winner, and  is $O(K |\mathcal{C}^*| \log T)$. The second term corresponds to the comparisons between a non-winner arm and its first $L_C+1$ superiors, which is bounded by $O(K(L_C+1) \log T)$.  The remaining terms correspond to the comparisons between a non-winner arm and the remaining arms, and is bounded by $O\big(K^2 \log\log T\big)$. As demonstrated in \cite{Zoghi2015NIPS:CDB}, $L_C$ is relatively small compared to $K$, and can be viewed as a constant.  Thus, the total regret $R_{\dts}(T)$ is bounded as $R_{\dts}(T) = O(K \log T + K^2 \log\log T)$. In particular, this asymptotic trend can be easily seen for Condorcet dueling bandits where $L_C = 0$.

Comparing Eq.~\eqref{eq:regret_copeland_refined} with Eq.~\eqref{eq:regret_copeland}, we can see the difference is the third and fourth terms in \eqref{eq:regret_copeland_refined}, which refine the regret of comparing a suboptimal arm and its last $(K-L_C-1)$ inferiors into $O(\log \log T)$.
Thus,
to prove Theorem~\ref{thm:regret_copeland_refined}, it suffices to show the following additional lemma:
\begin{lemma} \label{thm:regret_refined_singlearm}
Under Assumptions~1 and 2, for any suboptimal arm $i \notin \mathcal{C}^*$ and $j > L_C + 1$, we have
\begin{equation}
\mathbb{E}[N_{i\sigma_{i(j)}}^{(1)}(T)] \leq \frac{\beta(1+\epsilon)^2 \log \log T}{D(p_{\sigma_{i(j)},i}||p_{\sigma_{i(L_C+1)},i})} + O(K) + O(\frac{1}{\epsilon^2}),
\end{equation}
where $\beta > 2$ and $\epsilon > 0$ are constants.
\end{lemma}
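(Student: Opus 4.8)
Fix a non-winner $i\notin\mathcal C^*$ and an index $j>L_C+1$; write $\sigma:=\sigma_{i(j)}$ and $\sigma^{\dagger}:=\sigma_{i(L_C+1)}$. Since $i$ is beaten by at least $L_C+1$ arms (its Copeland score is strictly below $\zeta^*=1-L_C/(K-1)$, and there are no ties by Assumption~1), the arms $\sigma_{i(1)},\dots,\sigma_{i(L_C+1)}$ all beat $i$, so $p_{\sigma^{\dagger}i}>1/2$; by Assumption~2 we have the strict gap $p_{\sigma i}<p_{\sigma^{\dagger}i}$, hence $D(p_{\sigma i}\|p_{\sigma^{\dagger}i})>0$. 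The starting point is to view the rounds with $a_t^{(1)}=i$ as a Thompson-sampling sub-bandit over arms $k$ with means $p_{ki}$, restricted by the RLCB filter $l_{ki}(t)\le 1/2$. The crucial structural fact I will exploit is that each posterior sample $\theta^{(2)}_{ki}(t)$ depends only on the counts $B_{ki},B_{ik}$ and not on $t$, so the \emph{effective horizon} of this sub-bandit is $n_i(T):=N^{(1)}_i(T)=\sum_k N^{(1)}_{ik}(T)$, not $T$.

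The main step is to produce, for every round with $a_t^{(1)}=i$, an eligible benchmark arm whose mean is at least $p_{\sigma^{\dagger}i}$. Here I use the RUCB-based elimination of Lines~4--6: if at round $t$ all of $\sigma_{i(1)},\dots,\sigma_{i(L_C+1)}$ were ineligible, i.e.\ $l_{\sigma_{i(k)}i}(t)>1/2$ for $k=1,\dots,L_C+1$, then $u_{i\sigma_{i(k)}}(t)=1-l_{\sigma_{i(k)}i}(t)<1/2$ for these $L_C+1$ arms, whence $\hat\zeta_i(t)\le 1-\tfrac{L_C+1}{K-1}<1-\tfrac{L_C}{K-1}=\zeta^*$; on the high-probability event $\{\hat\zeta^*(t)\ge\zeta^*\}$ of Lemma~\ref{thm:property_score_ucb} this forces $i\notin\mathcal C_t$ and hence $a_t^{(1)}\ne i$. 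Contrapositively, whenever $a_t^{(1)}=i$ and $\hat\zeta^*(t)\ge\zeta^*$, at least one arm $\sigma_{i(k)}$ with $k\le L_C+1$ is eligible, and since $p_{\sigma_{i(k)}i}\ge p_{\sigma^{\dagger}i}$, once its count exceeds a constant threshold its sample exceeds $p_{\sigma^{\dagger}i}-\delta$ with high probability. The rounds in which the eligible superior still has too small a count, and the rounds with $\hat\zeta^*(t)<\zeta^*$, are summed separately: the former is the standard ``insufficient pulls of the optimal arm'' term of \cite{Agrawal2013AISTATS:TS2} applied to at most $L_C+1\le K$ arms (contributing $O(K)$, and $O(1/\epsilon^2)$ if $\delta\sim\epsilon$), the latter is $O(1)$ by Lemma~\ref{thm:property_score_ucb}.

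Given this benchmark, in a ``good'' round the event $\{a_t^{(1)}=i,\ a_t^{(2)}=\sigma\}$ forces $\theta^{(2)}_{\sigma i}(t)\ge\theta^{(2)}_{\sigma_{i(k)}i}(t)>p_{\sigma^{\dagger}i}-\delta$, i.e.\ an up-deviation of $\sigma$'s sample above a level strictly exceeding its mean $p_{\sigma i}$. By the Beta-tail argument of \cite{Agrawal2013AISTATS:TS2}, once $N_{\sigma i}(t)\ge \tfrac{(1+\epsilon')\log n_i(T)}{D(p_{\sigma i}\|\,p_{\sigma^{\dagger}i}-\delta)}$ this probability is at most $1/n_i(T)$ up to lower-order factors; summing over the at most $n_i(T)$ rounds with $a_t^{(1)}=i$ gives $N^{(1)}_{i\sigma}(T)\le \tfrac{(1+\epsilon')\log n_i(T)}{D(p_{\sigma i}\|\,p_{\sigma^{\dagger}i}-\delta)}+O(1/\epsilon'^2)+O(K)$. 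Folding the $\delta$-slack into $D$ by continuity turns $D(p_{\sigma i}\|\,p_{\sigma^{\dagger}i}-\delta)$ into $D(p_{\sigma i}\|\,p_{\sigma^{\dagger}i})/(1+\epsilon'')$, which is why the final constant is $(1+\epsilon)^2$ rather than $(1+\epsilon)$.

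Finally I carry out the back substitution. Lemmas~\ref{thm:cmp_bounds_less_0p5}--\ref{thm:cmp_bounds_eq_0p5} give $\mathbb E[n_i(T)]=\sum_k\mathbb E[N^{(1)}_{ik}(T)]=O(K\log T)$; I promote this to a high-probability bound $n_i(T)\le cK\log T$ with failure probability $O(T^{-2})$ (the rare complement contributing $o(1)$), so that $\log n_i(T)\le\log(cK\log T)=\log\log T+\log(cK)\le\beta\log\log T$ for $T$ large and any $\beta>2$, the discrepancy $\log(cK)$ and the small-$T$ regime being absorbed into $O(K)$ via $\log K=O(K)$ and $1/D(\cdot\|\cdot)=O(1)$. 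Substituting yields the stated bound. I expect the main obstacle to be exactly this self-referential dependence: the per-round failure probability in the tail argument must be calibrated against $n_i(T)$, which is itself a random quantity determined by the counts $N^{(1)}_{i\cdot}$ we are trying to bound; making this rigorous is what forces the two-pass structure --- first establish the crude $O(\log T)$ bounds of Proposition~\ref{thm:regret_copeland}, then re-run the Thompson-sampling analysis with $n_i(T)$ in place of $T$ while carefully controlling the randomness of $n_i(T)$.
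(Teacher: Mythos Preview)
Your high-level structure matches the paper's proof: reduce to rounds with $\hat\zeta^*(t)\ge\zeta^*$ (the complement summing to $O(K)$ by Lemma~\ref{thm:property_score_ucb}, not $O(1)$), observe that on such rounds $a_t^{(1)}=i$ forces some $j'\le L_C+1$ with $l_{\sigma_{i(j')},i}(t)\le 1/2$, use that eligible superior as a benchmark to show that $a_t^{(2)}=\sigma_{i(j)}$ requires $\theta^{(2)}_{\sigma_{i(j)},i}$ to be abnormally high, and then back-substitute the crude bound $\mathbb{E}[N_i^{(1)}(T)]=O(K\log T)$. The three-term decomposition via $\mathcal E^{p}$ and $\mathcal E^{\theta}$ with thresholds $x,y$ satisfying $p_{\sigma_{i(j)},i}<x<y<p_{\sigma_{i(L_C+1)},i}$ is exactly what the paper does, and your ``insufficient pulls of the superior'' term is the analog of the paper's Lemma~\ref{thm:prob_bound}-style bound, contributing $O(L_C+1)$.

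Where you diverge from the paper is the back substitution itself, and this is where your proposal has a gap. You propose to \emph{promote} $\mathbb{E}[N_i^{(1)}(T)]=O(K\log T)$ to a high-probability bound $N_i^{(1)}(T)\le cK\log T$ with failure probability $O(T^{-2})$; but Lemmas~\ref{thm:cmp_bounds_less_0p5}--\ref{thm:cmp_bounds_eq_0p5} give only expectation bounds, so this promotion is an additional (nontrivial) claim you would have to prove. The paper sidesteps this entirely: it decomposes over the \emph{value} $N_i^{(1)}(T)=n$, sets the threshold $L_{ji}^\beta(n)=\frac{\beta\log n}{D(x\|y)}$, and handles the two halves differently. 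For rounds with $N^{(1)}_{i\sigma_{i(j)}}(t-1)\le L_{ji}^\beta(n)$, the count of such rounds is deterministically at most $L_{ji}^\beta(n)$, so taking expectation over $n$ and applying Jensen (concavity of $\log$) gives $\frac{\beta\log\mathbb{E}[N_i^{(1)}(T)]}{D(x\|y)}$ directly --- no high-probability control needed. For rounds above the threshold, the Beta-tail concentration gives probability $\le n^{-\beta}$ per round, and with at most $n$ such rounds the contribution for each $n$ is $n^{1-\beta}$; summing over \emph{all} $n$ yields $\sum_n n^{1-\beta}<\infty$ precisely when $\beta>2$.

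This last point also explains the role of $\beta>2$, which your write-up mishandles: you use $\beta$ merely to absorb $\log(cK)$ into $\beta\log\log T$, but that would work for any $\beta>1$. In the paper's argument $\beta>2$ is a genuine summability condition, not a slack constant. If you switch to the conditioning-plus-Jensen approach, the self-referential difficulty you flag disappears, the high-probability promotion becomes unnecessary, and the constant $\beta>2$ acquires its correct meaning.
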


\begin{proof} We prove this lemma using a \emph{back substitution} argument. The intuition is that when fixing the first candidate as {\small $a_t^{(1)} = i$}, the comparison between {\small $a_t^{(1)}$} and the other arms is similar to a traditional MAB with expected utilities $p_{ji}$ ($1 \leq j \leq K$). Let {\small $ N_i^{(1)}(T) = \sum_{t = 1}^T \mathds{1}(a_t^{(1)} = i)$} be the number of time-slots when this type of MAB is played. Using the fact that the distribution of the samples only depends on the historic comparison results (but not $t$), we can show {\small $\mathbb{E}[N_{i,\sigma_{i(j)}}^{(1)}(T) | N_i^{(1)}(T)] = O(\log N_i^{(1)}(T))$}, which holds for any $N_i^{(1)}(T)$. We have shown that $\mathbb{E}[N_i^{(1)}(T)] = O(K \log T)$ for any $i\neq \mathcal{C}^*$ when proving Proposition~\ref{thm:regret_copeland}. Then, substituting the bound of $\mathbb{E}[N_i^{(1)}(T)]$ back and using the concavity of the $\log(\cdot)$ function, we have  $\mathbb{E}[N_{i,\sigma_{i(j)}}^{(1)}(T)] = \mathbb{E}\big[\mathbb{E}[N_{i,\sigma_{i(j)}}^{(1)}(T) | N_i^{(1)}(T)]\big] \leq O(\log \mathbb{E}[N_i^{(1)}(T)]) = O(\log \log T + \log K)$.
Details can be found in Appendix~\ref{app:proof_regret_refined_singlearm}
\end{proof}


\subsection{Further Improvement: D-TS$^+$} \label{subsec:dts_plus}
D-TS is a TS framework for dueling bandits, and its performance can be improved by refining certain components of it.
In this section, we propose an enhanced version of D-TS, referred to as D-TS$^+$, that carefully breaks the ties to reduce the regret.

Note that by randomly breaking the ties (Line~11 in Algorithm~\ref{alg:dts_Copeland}), D-TS tends to explore all potential winners. This may be desirable in certain applications such as restaurant recommendation, where users may not want to stick to a single winner. However, because of this, the regret of D-TS scales with the number of winners $|\mathcal{C}^*|$ as shown in Theorem~\ref{thm:regret_copeland_refined}. To further reduce the regret, we can break the ties according to estimated regret.

Specifically, with samples {\small $\theta_{ij}^{(1)}(t)$}, the normalized Copeland score for each arm $i$ can be estimated as
$
{\small \tilde{\zeta}_i(t) = \frac{1}{K-1} \sum_{j \neq i}\mathds{1}(\theta_{ij}^{(1)}(t) > 1/2)}
$.
Then the maximum normalized Copeland score is
$
{\small \tilde{\zeta}^*(t) = \max_i~ \tilde{\zeta}_i(t)}
$,
and the loss of comparing arm $i$ and arm $j$ is
$
{\small \tilde{r}_{ij}(t) =\tilde{\zeta}^*(t) -\frac{1}{2}\big[ \tilde{\zeta}_i(t) + \tilde{\zeta}_j(t)\big]}
$.
For $p_{ij}\neq 1/2$, we need about {\small $\Theta(\frac{\log T}{D(p_{ij} || 1/2)})$} time-slots to distinguish it from 1/2 \cite{Komiyama2015COLT:DB}. Thus, when choosing $i$ as the first candidate, the regret of comparing it with all other arms can be estimated by
$
{\small \tilde{R}^{(1)}_i(t) = \sum_{j: \theta_{ij}^{(1)}(t) \neq 1/2}\tilde{r}_{ij}(t) / D(\theta_{ij}^{(1)}(t) || 1/2)}
$.
We propose the following D-TS$^+$ algorithm that breaks the ties to minimize {\small $\tilde{R}^{(1)}_i(t)$}.

\textbf{D-TS$^+$:} Implement the same operations as D-TS, except for the selection of the first candidate (Line~11 in Algorithm~\ref{alg:dts_Copeland}) is replaced by the following two steps:
{\small
\begin{eqnarray}
&&\mathcal{A}^{(1)} \gets \{i \in \mathcal{C}: \zeta_i = \max_{i\in \mathcal{C}} \sum_{j\neq i} \mathds{1}(\theta^{(1)}_{ij} > 1/2)\}; \nonumber\\
&&a^{(1)} \gets \argmin_{i \in \mathcal{A}^{(1)}} \tilde{R}^{(1)}_i;   \nonumber
\end{eqnarray}
}
D-TS$^+$ only changes the tie-breaking criterion in selecting the first candidate. Thus, the regret bound of D-TS directly applies to D-TS$^+$:
\begin{corol} \label{thm:regret_dts_plus}
The regret of D-TS$^+$, $R_{\dtsp}(T)$, satisfies inequality \eqref{eq:regret_copeland_refined} under Assumptions~1 and 2.
\end{corol}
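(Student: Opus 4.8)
The plan is to observe that D-TS$^+$ and D-TS differ \emph{only} in how ties are broken when selecting the first candidate, and that none of the arguments behind Proposition~\ref{thm:regret_copeland} and Theorem~\ref{thm:regret_copeland_refined} ever exploits the specific (uniformly random) tie-breaking rule. In both algorithms the candidate set $\mathcal{C}_t$, the confidence bounds $u_{ij}(t),l_{ij}(t)$, the upper-bound scores $\hat\zeta_i(t)$, and the entire second phase that selects $a_t^{(2)}$ are produced by identical operations. The only change is that D-TS$^+$ picks $a_t^{(1)}$ from $\mathcal{A}^{(1)}_t=\argmax_{i\in\mathcal{C}_t}\sum_{j\neq i}\mathds{1}(\theta^{(1)}_{ij}(t)>1/2)$ so as to minimize $\tilde R^{(1)}_i(t)$, whereas D-TS picks uniformly from the same set $\mathcal{A}^{(1)}_t$. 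In particular $a_t^{(1)}\in\mathcal{C}_t$ still holds and $a_t^{(1)}$ is still a measurable function of $\mathcal{H}_{t-1}$ and the independently drawn vector $\theta^{(1)}(t)$.

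First I would revisit Lemmas~\ref{thm:cmp_bounds_less_0p5}, \ref{thm:cmp_bounds_larger_0p5}, \ref{thm:cmp_bounds_eq_0p5} and \ref{thm:regret_refined_singlearm}. Each bounds a quantity ($\mathbb{E}[N^{(1)}_{ij}(T)]$ or $\mathbb{E}[N_{ii}(T)]$) on the event $\{a_t^{(1)}=i\}$ for a fixed first candidate $i$, and the randomness that drives these bounds is that of the second-phase samples $\theta^{(2)}(t)$ — which are drawn independently of $\theta^{(1)}(t)$, so that conditioning on $\{a_t^{(1)}=i\}$ does not perturb their posterior given $\mathcal{H}_{t-1}$ — together with the concentration of the RUCB/RLCB statistics (Lemmas~\ref{thm:rucb_properties} and \ref{thm:property_score_ucb}). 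Since D-TS$^+$ leaves all of these ingredients untouched, the statements of Lemmas~\ref{thm:cmp_bounds_less_0p5}–\ref{thm:cmp_bounds_eq_0p5} and Lemma~\ref{thm:regret_refined_singlearm} hold verbatim for D-TS$^+$. Moreover, the one auxiliary fact invoked in the back-substitution step, $\mathbb{E}[N^{(1)}_i(T)]=O(K\log T)$ for $i\notin\mathcal{C}^*$, follows from $N^{(1)}_i(T)=\sum_{j}N^{(1)}_{ij}(T)$ and Lemmas~\ref{thm:cmp_bounds_less_0p5}–\ref{thm:cmp_bounds_eq_0p5}, again with no reference to tie-breaking.

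Finally, the regret is assembled from these per-pair counts purely arithmetically: expanding $R(T)$ as in \eqref{eq:def_regret} bounds it by $\tfrac12\sum_{i\notin\mathcal{C}^*}(\zeta^*-\zeta_i)\,\mathbb{E}[N^{(1)}_i(T)+N^{(2)}_i(T)]$ up to lower-order terms, and both $N^{(1)}_i(T)$ and $N^{(2)}_i(T)$ decompose into the quantities $N^{(1)}_{ij}(T)$ and $N_{ii}(T)$ controlled by Lemmas~\ref{thm:cmp_bounds_less_0p5}–\ref{thm:cmp_bounds_eq_0p5} and \ref{thm:regret_refined_singlearm}; plugging in those (unchanged) bounds exactly as in the proof of Theorem~\ref{thm:regret_copeland_refined} yields \eqref{eq:regret_copeland_refined} with $R_{\dtsp}(T)$ in place of $R_{\dts}(T)$. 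There is no substantive obstacle; the only point requiring care is to verify, lemma by lemma, that every appendix proof uses nothing about $a_t^{(1)}$ beyond ``$a_t^{(1)}\in\mathcal{C}_t$ and $a_t^{(1)}$ maximizes the majority vote over $\mathcal{C}_t$'', a property D-TS$^+$ shares with D-TS. (We claim only that D-TS$^+$ is no worse; on favourable instances the regret-aware tie-break can make the hidden constants strictly smaller, as the experiments illustrate.)
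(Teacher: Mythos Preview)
Your proposal is correct and takes essentially the same approach as the paper. The paper's own justification is a one-liner: since D-TS$^+$ only changes the tie-breaking criterion in selecting the first candidate, the regret bound of D-TS directly applies to D-TS$^+$; you have simply unpacked this by noting, lemma by lemma, that the proofs of Lemmas~\ref{thm:cmp_bounds_less_0p5}--\ref{thm:cmp_bounds_eq_0p5} and~\ref{thm:regret_refined_singlearm} only use that $a_t^{(1)}\in\mathcal{C}_t$ and that $a_t^{(1)}$ is determined by $(\mathcal{H}_{t-1},\theta^{(1)}(t))$ independently of $\theta^{(2)}(t)$, both of which hold for D-TS$^+$.
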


Corollary~\ref{thm:regret_dts_plus} provides an upper bound for the regret of D-TS$^+$. In practice, however, D-TS$^+$ performs better than D-TS in the scenarios with multiple winners, as we can see in Section~\ref{sec:sim_results} and Appendix~\ref{app:add_sim_results}. Our conjecture is that with this regret-minimization criterion, the D-TS$^+$ algorithm tends to focus on one of the winners (if there is no tie in terms of expected regret), and thus reduces the first term in \eqref{eq:regret_copeland_refined} from $O(K |\mathcal{C}^*|\log T)$ to $O(K \log T)$. The proof of this conjecture requires properties for the evolution of the statistics for all arms and the majority voting results based on the Thompson samples, and is complex. This is left as part of our future work.

In the above D-TS$^+$ algorithm, we only consider the regret of choosing $i$ as the first candidate. From Theorem~\ref{thm:regret_copeland_refined}, we know that comparing other arms with their superiors will also result in  $\Theta(\log T)$ regret. Thus, although the current D-TS$^+$ algorithm performs well in most practical scenarios, one may further improve its performance by taking these additional comparisons into account in {\small $\tilde{R}^{(1)}_i(t)$}.

\section{Experiments} \label{sec:sim_results}

To evaluate the proposed D-TS and \dtsp~algorithms, we run experiments based on synthetic and real-world data. Here we present the results for experiments based on the Microsoft Learning to Rank (MSLR) dataset \cite{Microsoft2010MSLR},
which provides the relevance for queries and ranked documents. Based on this dataset, \cite{Zoghi2015NIPS:CDB} derives a preference matrix for 136 rankers,
where each ranker is a function that maps a user's query to a document ranking and can be viewed as one arm in dueling bandits.
We use the two 5-armed submatrices in  \cite{Zoghi2015NIPS:CDB}, one for Condorcet dueling bandit and the other for non-Condorcet dueling bandit.
More experiments and discussions can be found in Appendix~\ref{app:add_sim_results} \footnote{Source codes are available at \url{https://github.com/HuasenWu/DuelingBandits}.}.

We compare D-TS and \dtsp~with the following algorithms: BTM \cite{Yue2011ICML:BTM}, SAVAGE \cite{Urvoy2013ICML:SAVAGE}, Sparring \cite{Ailon2014ICML:UBDB}, RUCB \cite{Zoghi2014ICML:RUCB},  RCS \cite{Zoghi2014WSDM:RCS}, CCB \cite{Zoghi2015NIPS:CDB}, SCB \cite{Zoghi2015NIPS:CDB}, RMED1 \cite{Komiyama2015COLT:DB}, and ECW-RMED \cite{Komiyama2016ICML:CWRMED}. For BTM, we set the relaxed factor $\gamma = 1.3$ as \cite{Yue2011ICML:BTM}. For algorithms using RUCB and RLCB, including D-TS and D-TS$^+$, we set the scale factor $\alpha = 0.51$. For RMED1, we use the same settings as \cite{Komiyama2015COLT:DB}, and for ECW-RMED, we use the same setting as \cite{Komiyama2016ICML:CWRMED}. For the ``explore-then-exploit'' algorithms, BTM and SAVAGE, each point is obtained by resetting the time horizon as the corresponding value.
The results are averaged over 500 independent experiments, where in each experiment, the arms are randomly shuffled to prevent algorithms from exploiting special structures of the preference matrix.

\begin{figure*}[thbp]
\hspace{-.3cm}
\vspace{-0.25cm}
\begin{minipage}[b]{.66\textwidth}
\begin{center}
\subfigure[$K = 5$, Condorcet]{\includegraphics[angle = 0,width = 0.485\linewidth, height = 0.39\linewidth]{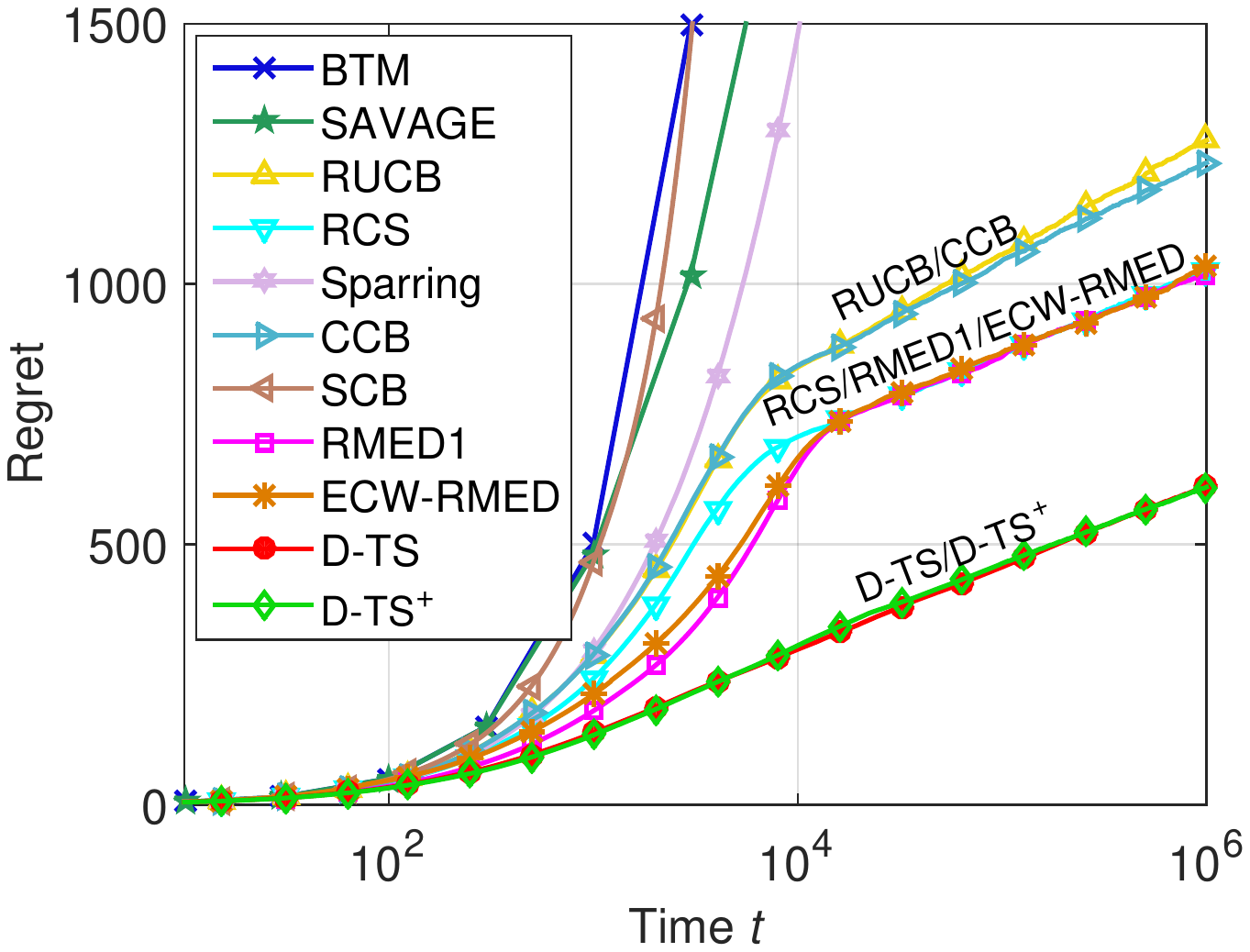}
\label{fig:MSLR_Informational_5_Condorcet}}
\subfigure[$K = 5$, non-Condorcet]{\includegraphics[angle = 0,width = 0.485\linewidth,height = 0.39\linewidth]{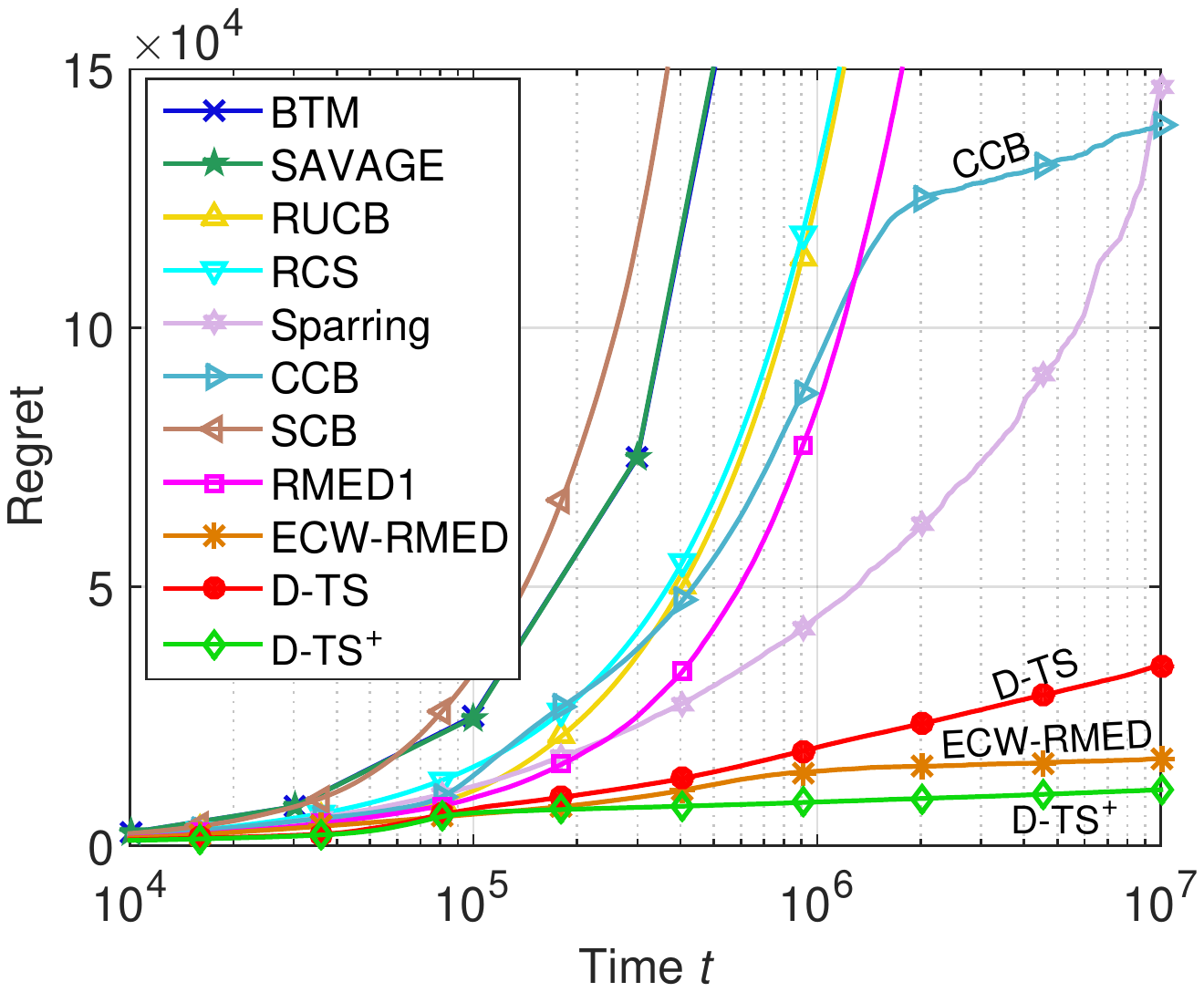}
\label{fig:MSLR_Informational_5_non_Condorcet}}
\vspace{-0.35cm}
\caption{Regret in MSLR dataset. In (b), there are 3 Copeland winners with normalized Copeland score $\zeta^* = 3/4$.}
\label{fig:regret_mslr}
\end{center}
\end{minipage}
\hspace{.25cm}
\begin{minipage}[b]{.34\textwidth}
\begin{center}
{\includegraphics[angle = 0,width = 0.98\linewidth, height = 0.78\linewidth]{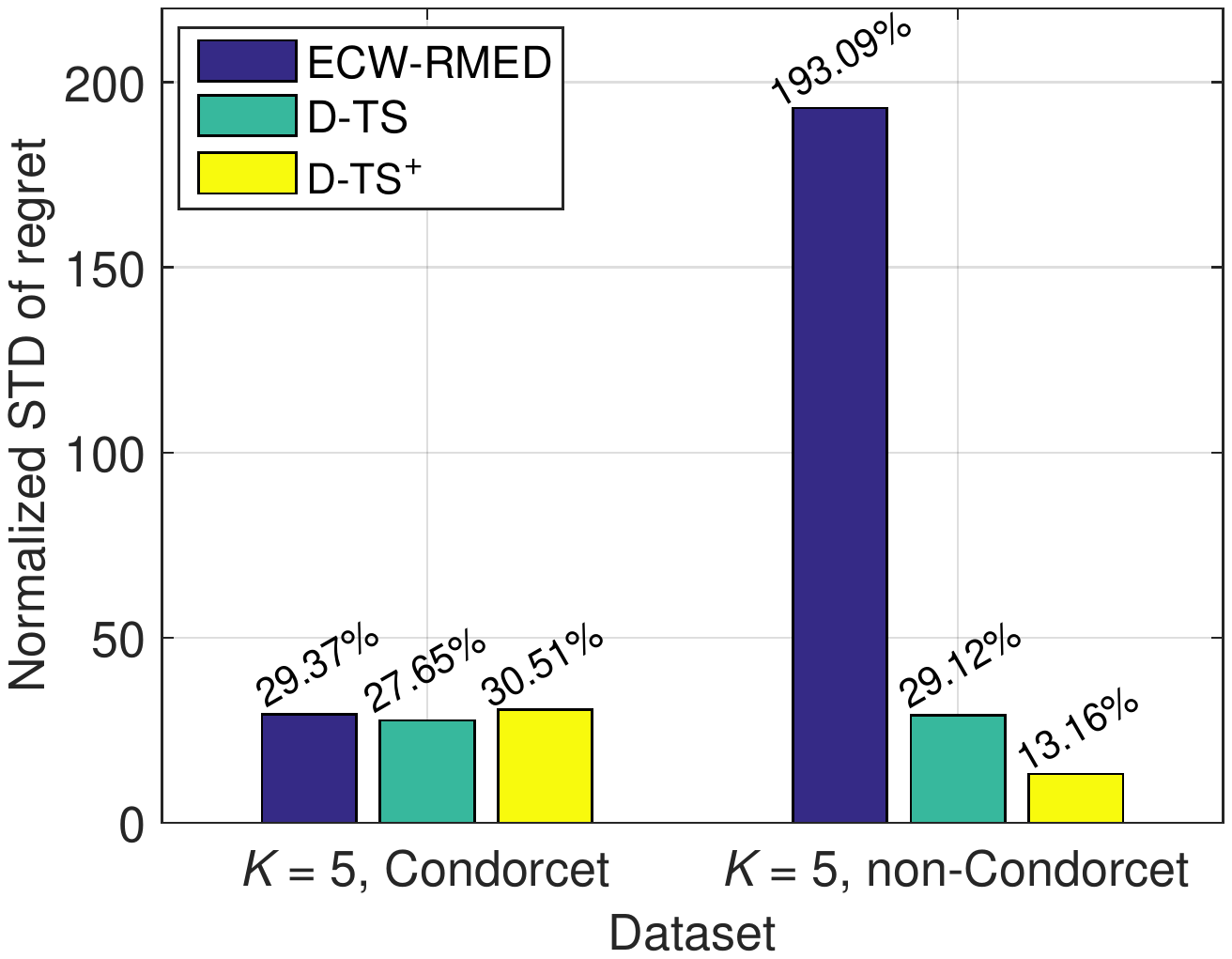}
\vspace{-.25cm}
\caption{Standard deviation (STD) of regret for $T = 10^6$ {\small (normalized by $R_{\rm ECW-RMED}(T)$)}.
\label{fig:regret_deviation}}}
\end{center}
\end{minipage}
\end{figure*}


In Condorcet dueling bandits, our D-TS and D-TS$^+$ algorithms achieve almost the same performance and both perform much better than existing algorithms, as shown in Fig.~\ref{fig:MSLR_Informational_5_Condorcet}.
In particular, compared with RCS, we can see that the full utilization of TS in D-TS and D-TS$^+$ significantly reduces the regret.
Compared with RMED1 and ECW-RMED, our D-TS and D-TS$^+$ algorithms also perform better.
\cite{Komiyama2015COLT:DB} has shown that RMED1 is optimal in Condorcet dueling bandits,
not only in the sense of asymptotic order, but also the coefficients in the regret bound.
The simulation results show that D-TS and D-TS$^+$ not only achieve the similar slope as RMED1/ECW-RMED, but also converge faster to the asymptotic regime and thus achieve much lower regret. This inspires us to further refine the regret bounds for D-TS and D-TS$^+$ in the future.

In non-Condorcet dueling bandits,
as shown in Fig.~\ref{fig:MSLR_Informational_5_non_Condorcet}, D-TS and D-TS$^+$ significantly reduce the regret compared to the UCB-type algorithm, CCB
(e.g., the regret of D-TS$^+$ is less than 10\% of that of CCB).
Compared with ECW-RMED, D-TS achieves higher regret, mainly because it randomly explores all Copeland winners due to the random tie-breaking rule.
With a regret-minimization tie-breaking rule, D-TS$^+$ further reduces the regret, and  outperforms ECW-RMED in this dataset.
Moreover, as randomized algorithms, D-TS and D-TS$^+$ are more robust to the preference probabilities.
As shown in Fig.~\ref{fig:regret_deviation}, D-TS and D-TS$^+$ have much smaller regret STD than
 that of ECW-RMED in the non-Condorcet dataset, where certain preference probabilities (for different arms) are close to 1/2.
In particular, the STD of regret for ECW-RMED is almost 200\% of its mean value,
while it is only 13.16\% for D-TS$^+$.  In addition, as shown in Appendix~\ref{app:robustness}, D-TS and D-TS$^+$ are
also robust to delayed feedback, which is typically batched and provided periodically in practice.

Overall,  D-TS and D-TS$^+$ significantly outperform all existing algorithms, with the exception of ECW-RMED.
Compared to ECW-RMED, D-TS$^+$ achieves much lower regret in the Condorcet case,
lower or comparable regret in the  non-Condorcet case, and much more robustness  in terms of regret STD and delayed feedback. Thus,
the simplicity, good performance, and robustness of D-TS and D-TS$^+$ make them good algorithms in practice.

\section{Conclusions and Future Work} \label{sec:conclusion}
In this paper, we study TS algorithms for dueling bandits.
We propose a D-TS algorithm and its enhanced version D-TS$^+$ for general Copeland dueling bandits, including Condorcet dueling bandits as a special case.
Our study reveals desirable properties of D-TS and D-TS$^+$ from both theoretical and practical perspectives.
Theoretically, we show that the regret of D-TS and D-TS$^+$ is bounded by $O(K^2 \log T)$ in general Copeland dueling bandits, and can be refined to $O(K\log T + K^2 \log \log T)$ in Condorcet dueling bandits and most practical Copeland dueling bandits.
Practically, experimental results demonstrate that these simple algorithms achieve significantly better overall-performance than existing algorithms, i.e., D-TS and D-TS$^+$ typically achieve much lower regret in practice and are robust to many practical factors, such as preference matrix and feedback delay.

Although logarithmic regret bounds have been obtained for D-TS and D-TS$^+$, our analysis relies heavily on the properties of RUCB/RLCB and the regret bounds are likely loose. In fact, we see from experiments that RUCB-based elimination seldom occurs under most practical settings. We will further refine the regret bounds by investigating the properties of TS-based majority-voting. Moreover, results from recent work such as \cite{Komiyama2016ICML:CWRMED} may be leveraged to improve TS algorithms.  Last, it is also an interesting future  direction to study  D-TS type algorithms for dueling bandits with other definition of winners. 

\textbf{Acknowledgements:}
This research was supported in part by NSF Grants CCF-1423542, CNS-1457060, and CNS-1547461.
The authors would like to thank Prof.~R.~Srikant (UIUC), Prof.~Shipra Agrawal (Columbia University),
Masrour Zoghi (University of Amsterdam),  and Dr.~Junpei Komiyama (University of Tokyo)  for their helpful discussions and suggestions.


\bibliography{OnlineOpt,mypublications_0802,math}
\bibliographystyle{unsrt_abbrv}
\clearpage
\newpage
\pagebreak

\appendix

\section*{Appendices}

\section{Preliminary: Concentration of RUCB/RLCB} \label{app:preliminary}

%

We first present the concentration properties of RUCB/RLCB. By relating RUCB/RLCB to UCB/LCB in traditional MAB, we can adjust the results in \cite{Bubeck2010PhD:bandits} for RUCB/RLCB as follows.

\begin{lemma} \label{thm:rucb_properties}

1) When $\alpha > 0.5$, for any $(i,j)$ and $t > 0$,
\begin{equation} \label{eq:rucb_dev}
\mathbb{P} \{p_{ij} \geq u_{ij}(t)\} \leq \big[\frac{\log t}{\log(\alpha + 1/2)} + 1 \big]{t^{-\frac{2\alpha}{\alpha + 1/2}}},
\end{equation}
\begin{equation}\label{eq:rlcb_dev}
\mathbb{P} \{p_{ij} \leq l_{ij}(t)\} \leq \big[\frac{\log t}{\log(\alpha + 1/2)} + 1 \big]{t^{-\frac{2\alpha}{\alpha + 1/2}}}.
\end{equation}
2) For any $\alpha > 1/2$,
\begin{equation}
\sum_{t=1}^T \mathbb{P}\{p_{ij} \geq u_{ij}(t)\} \leq \frac{2}{\log(\alpha + 1/2)[2\alpha/(\alpha + 1/2) - 1]^2} = O(1),
\end{equation}
\begin{equation}
\sum_{t=1}^T \mathbb{P} \{p_{ij} \leq l_{ij}(t)\} \leq \frac{2}{\log(\alpha + 1/2)[2\alpha/(\alpha + 1/2) - 1]^2} = O(1).
\end{equation}
\end{lemma}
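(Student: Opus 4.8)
The plan is to reduce RUCB/RLCB to the ordinary UCB/LCB of a Bernoulli mean and then run a standard peeling (geometric-grid) argument, which is precisely how one adapts the bounds in \cite{Bubeck2010PhD:bandits}. First I would fix a pair $(i,j)$ and observe that every time $(i,j)$ is compared the outcome is an independent $\mathrm{Bernoulli}(p_{ij})$ draw, irrespective of the (adaptive) rule deciding \emph{when} that comparison occurs. Listing the successive $i$-vs-$j$ outcomes as i.i.d.\ $X_1,X_2,\dots$ with partial sums $S_n=X_1+\dots+X_n$, we have $B_{ij}(t-1)=S_{N_{ij}(t-1)}$, so on $\{N_{ij}(t-1)=n\}$ one has $u_{ij}(t)=S_n/n+\sqrt{\alpha\log t/n}$. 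Hence
\[
\{p_{ij}\ge u_{ij}(t)\}\subseteq \bigcup_{n=1}^{t}\Big\{\tfrac{S_n}{n}\le p_{ij}-\sqrt{\tfrac{\alpha\log t}{n}}\Big\},
\]
and the analogous inclusion holds for $\{p_{ij}\le l_{ij}(t)\}$ after replacing $X_n$ by $1-X_n$ (equivalently, swapping the roles of $i$ and $j$).

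Next I would avoid the naive union bound over $n$ (which gives only $t\cdot t^{-2\alpha}$, too weak once $\alpha\le 1$) and instead peel. Set $c=\alpha+1/2>1$ and split $\{1,\dots,t\}$ into the brackets $I_k=\{n:c^k\le n<c^{k+1}\}$ for $k=0,1,\dots,\lfloor\log t/\log c\rfloor$, so there are at most $\tfrac{\log t}{\log c}+1$ of them. On $I_k$ the bad event forces $S_n-np_{ij}\le-\sqrt{\alpha n\log t}\le-\sqrt{\alpha c^{k}\log t}$, so by Hoeffding's \emph{maximal} inequality for the $[0,1]$-bounded martingale differences $X_n-p_{ij}$ (Doob's inequality applied to $e^{-\lambda(S_n-np_{ij})}$ followed by optimizing $\lambda$),
\[
\mathbb{P}\Big\{\exists\, n\in I_k:\tfrac{S_n}{n}\le p_{ij}-\sqrt{\tfrac{\alpha\log t}{n}}\Big\}\le\mathbb{P}\Big\{\min_{1\le n\le c^{k+1}}(S_n-np_{ij})\le-\sqrt{\alpha c^k\log t}\Big\}\le\exp\!\Big(-\tfrac{2\alpha c^{k}\log t}{c^{k+1}}\Big)=t^{-2\alpha/c}.
\]
Summing over the at most $\tfrac{\log t}{\log c}+1$ brackets gives exactly \eqref{eq:rucb_dev}, and the same computation with $1-X_n$ gives \eqref{eq:rlcb_dev}. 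For part~2, I would note that $2\alpha/c=2\alpha/(\alpha+1/2)>1$ precisely when $\alpha>1/2$, so $\sum_{t\ge1}\big(\tfrac{\log t}{\log c}+1\big)t^{-2\alpha/c}$ converges; bounding the sum by the corresponding integral and using $\int_1^\infty x^{-\gamma}\log x\,dx=(\gamma-1)^{-2}$ with $\gamma=2\alpha/c$ yields the stated closed-form $O(1)$ bound, the $l_{ij}$ case being identical.

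The main obstacle is obtaining the exponent $2\alpha/(\alpha+1/2)$ rather than the crude $2\alpha-1$ (or worse): this is exactly what peeling buys, and it hinges on two points — choosing the geometric ratio to be $c=\alpha+1/2$ so that $2\alpha c^{k}/c^{k+1}=2\alpha/c$ is independent of $k$, and replacing the per-$n$ Chernoff bound by a \emph{maximal} inequality so that the union runs over only $O(\log t)$ brackets instead of $O(t)$ indices. A secondary, minor point is the measurability/adaptivity issue: since the algorithm decides when to schedule each $(i,j)$ comparison, one must check that $(X_n)$ is a genuine i.i.d.\ (or at least bounded martingale-difference) sequence before invoking Hoeffding/Azuma; this holds because each comparison outcome is drawn fresh from $\mathrm{Bernoulli}(p_{ij})$, independent of the past given that the comparison is made.
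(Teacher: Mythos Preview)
Your proposal is correct and follows essentially the same route as the paper: reduce RUCB/RLCB to the ordinary UCB/LCB of a Bernoulli mean by listing the successive $(i,j)$ outcomes as an i.i.d.\ sequence, then run the peeling argument of \cite{Bubeck2010PhD:bandits} with geometric ratio $c=\alpha+1/2$, and finally bound the sum in part~2 by the integral $\int_1^\infty\big(\tfrac{\log t}{\log c}+1\big)t^{-2\alpha/c}\,dt$. The paper's own proof is only a sketch pointing to \cite{Bubeck2010PhD:bandits}; you have simply filled in the details that sketch elides, including the use of the maximal inequality inside each bracket and the explicit identification of why $c=\alpha+1/2$ is the right ratio.
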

\begin{proof}
We prove this lemma using the techniques in the proof of Theorem~2.2 in \cite{Bubeck2010PhD:bandits}.

In fact, RUCB (resp., RLCB) in dueling bandits are essentially the same as UCB (resp., LCB) in traditional MAB. Thus, Part~1) of this lemma can be proved using the peeling argument in \cite{Bubeck2010PhD:bandits}.

For Part~2), the sum can be bounded by the integration $\int_{1}^{\infty} \big[\frac{\log t}{\log(\alpha + 1/2)} + 1 \big]{t^{-\frac{2\alpha}{\alpha + 1/2}}} {\rm d}t$ as in \cite{Bubeck2010PhD:bandits}.
\end{proof}



\section{Regret Analysis: $O(K^2 \log T)$ Regret} \label{app:regret_copeland}

\subsection{Proof of Lemma~\ref{thm:cmp_bounds_less_0p5}} \label{app:proof_cmp_bounds_less_0p5}
For a pair $(i,j)$ with $p_{ji} < 1/2$, let $x_{ji}$ be a number satisfying $p_{ji} < x_{ji} < 1/2$.
Let $\bar{p}_{ji}(t) = \frac{B_{ji}(t-1)}{B_{ji}(t-1) + B_{ij}(t-1)}$ be the empirical estimation for the probability that arm $j$ beats arm $i$. Define the following events:
\begin{equation}
\mathcal{E}^{{p}}_{ji}(t) = \{\bar{p}_{ji}(t) < x_{ji}\}, \nonumber
\end{equation}
\begin{equation}
\mathcal{E}^{\theta}_{ji}(t) = \{\theta^{(2)}_{ji}(t) < 1/2 \}. \nonumber
\end{equation}

For an event $\mathcal{E}$, we let $\urcorner \mathcal{E}$ be the event of ``not $\mathcal{E}$''. Then
\begin{eqnarray}
\mathbb{E}[N^{(1)}_{ij}(T)] &=& \sum_{t=1}^T \mathbb{P} \big\{(a_t^{(1)}, a_t^{(2)}) = (i,j)\big\} \nonumber \\
&= & \sum_{t=1}^T \mathbb{P}\{(a_t^{(1)}, a_t^{(2)}) = (i,j),  \mathcal{E}^{{p}}_{ji}(t), \mathcal{E}^{\theta}_{ji}(t)\} \nonumber\\
&& + \sum_{t=1}^T \mathbb{P}\{(a_t^{(1)}, a_t^{(2)}) = (i,j),  \mathcal{E}^{{p}}_{ji}(t), \urcorner \mathcal{E}^{\theta}_{ji}(t)\} \nonumber \\
&& + \sum_{t=1}^T \mathbb{P}\{(a_t^{(1)}, a_t^{(2)}) = (i,j), \urcorner \mathcal{E}^{{p}}_{ji}(t)\}. \nonumber
\end{eqnarray}

The first term is zero, because  $\mathbb{P}\{(a_t^{(1)}, a_t^{(2)}) = (i,j),  \mathcal{E}^{{p}}_{ji}(t), \mathcal{E}^{\theta}_{ji}(t)\} = 0$ for all $t$, due to the fact that $a_t^{(2)} \neq j$ when $\theta^{(2)}_{ji}(t) < 1/2 = \theta^{(2)}_{ii}(t)$.

The second and third terms can be bounded similarly to the analysis of TS in traditional MABs \cite{Agrawal2013AISTATS:TS2}. To see this, we note that when fixing the first candidate as $a_t^{(1)} = i$, the comparison between $i$ and other arms is similar to a traditional MAB problem with expected reward $p_{ji}$ ($1\leq j \leq K$). For the case of $p_{ji} < 1/2$, we only need to care about two differences: first, $p_{ii} = 1/2$ is fixed and known; second, in addition to $(a_t^{(1)}, a_t^{(2)}) = (i,j)$, arm $i$ and arm $j$ could also be compared when $(a_t^{(1)}, a_t^{(2)}) = (j,i)$. By capturing the second difference with $N_{ij}(t-1) = N_{ij}^{(1)}(t-1) + N_{ij}^{(2)}(t-1)$, we can leverage the techniques in \cite{Agrawal2013AISTATS:TS2} to prove our results.

Specifically, the second term can be bounded by using the concentration property of the Thompson samples. Letting $L_{ji}(T) = \frac{\log T}{D(x_{ji} || 1/2)}$,  similar to the proof of Lemma~4 in \cite{Agrawal2013AISTATS:TS2}, we have
\begin{eqnarray} \label{eq:p_leq0p5_sec_term}
&&\sum_{t = 1}^T \mathbb{P}\{(a_t^{(1)}, a_t^{(2)}) = (i,j),  \mathcal{E}^{{p}}_{ji}(t), \urcorner \mathcal{E}^{\theta}_{ji}(t)\} \nonumber \\
& =& \sum_{t = 1}^T \mathbb{P}\{(a_t^{(1)}, a_t^{(2)}) = (i,j),  \mathcal{E}^{{p}}_{ji}(t), \urcorner \mathcal{E}^{\theta}_{ji}(t), N_{ij}(t-1) \leq L_{ji}(T)\}  \nonumber \\
&& +  \sum_{t = 1}^T \mathbb{P}\{(a_t^{(1)}, a_t^{(2)}) = (i,j),  \mathcal{E}^{{p}}_{ji}(t), \urcorner \mathcal{E}^{\theta}_{ji}(t), N_{ij}(t-1) >   L_{ji}(T)\} \nonumber \\
&\leq & L_{ji} (T) + \sum_{t = 1}^T \frac{1}{T} \nonumber \\
&=& \frac{\log T}{D(x_{ji}|| 1/2)} + 1.
\end{eqnarray}

The third term can be bounded similarly to Lemma~3 in \cite{Agrawal2013AISTATS:TS2}. Specifically, let $\tau_n$ be the slot index when $i$ and $j$ are compared for the $n$-th time, including both cases $(a_t^{(1)}, a_t^{(2)}) = (i,j)$ and $(a_t^{(1)}, a_t^{(2)}) = (j,i)$. Let $\tau_0 = 0$. Then, $\bar{p}_{ji}(t)$ is fixed between $\tau_{n} + 1$ and $\tau_{n+1}$, and $\sum_{t = \tau_{n}+1}^{\tau_{n + 1}} \mathds{1}((a_t^{(1)}, a_t^{(2)}) = (i,j)) \leq 1$ (it is 0 if the $(n+1)$-th comparison is implemented in the form of $(a_t^{(1)}, a_t^{(2)}) =(j,i)$). Then
\begin{eqnarray}
&& \sum_{t = 1}^T \mathbb{P}\{(a_t^{(1)}, a_t^{(2)}) = (i,j), \urcorner \mathcal{E}^{{p}}_{ji}(t)\} \nonumber \\
&\leq& \sum_{n=0}^{T-1} \mathbb{E}\bigg[\sum_{t = \tau_n + 1}^{\tau_{n + 1}} \mathds{1}((a_t^{(1)}, a_t^{(2)}) = (i,j)) \cdot \mathds{1}(\urcorner \mathcal{E}^{{p}}_{ji}(t))\bigg] \nonumber \\
&\leq & \sum_{n=0}^{T-1}  \mathbb{E}\bigg[ \mathds{1}(\urcorner \mathcal{E}^{{p}}_{ji}(\tau_n + 1)) \sum_{t = \tau_n + 1}^{\tau_{n + 1}}  \mathds{1}((a_t^{(1)}, a_t^{(2)}) = (i,j))\bigg]  \nonumber \\
&\leq &  \sum_{n=0}^{T-1}  \mathds{P}(\urcorner \mathcal{E}^{{p}}_{ji}(\tau_n + 1))  \nonumber \\
&\leq &  1 + \sum_{n=1}^{T-1} e^{-n D(x_{ji} || p_{ji})} \nonumber \\
&\leq & 1 + \frac{1}{D(x_{ji} || p_{ji})}.
\end{eqnarray}

For any $\epsilon \in (0, 1]$, we choose $x_{ji} \in (p_{ji}, 1/2)$ such that $D(x_{ji}|| 1/2) = D(p_{ji} || 1/2)/(1+\epsilon)$, which also implies $\frac{1}{D(x_{ji} || p_{ji})} = O(\frac{1}{\epsilon^2})$ as shown in  \cite{Agrawal2013AISTATS:TS2}.
The conclusion then follows by combining the bounds for all the above three terms.

\subsection{Proof of Lemma~\ref{thm:cmp_bounds_larger_0p5}} \label{app:proof_cmp_bounds_larger_0p5}

We prove Lemma~\ref{thm:cmp_bounds_larger_0p5} by using the concentration  property of RLCB $l_{ji}(t)$. According to the definition of $N^{(1)}_{ij}(T)$, we have
\begin{eqnarray}
\mathbb{E}[N^{(1)}_{ij}(T)] &=& \sum_{t=1}^T \mathbb{P} \big\{(a_t^{(1)}, a_t^{(2)}) = (i,j)\big\}  \nonumber \\
&=& \sum_{t=1}^T \mathbb{P}\big\{(a_t^{(1)}, a_t^{(2)}) = (i,j),  N_{ij}(t-1) \geq \frac{4\alpha \log T}{\Delta_{ji}^2}\big\} \nonumber \\
&&+ \sum_{t=1}^T \mathbb{P}\big\{(a_t^{(1)}, a_t^{(2)}) = (i,j),   N_{ij}(t-1) < \frac{4\alpha \log T}{\Delta_{ji}^2}\big\}. \nonumber
\end{eqnarray}

For the first term, we note that when $a_t^{(1)} = i$, arm $j$ can be selected as $a_t^{(2)}$ only when the $l_{ji}(t) \leq 1/2$. When $N_{ij}(t-1) \geq \frac{4\alpha \log T}{\Delta_{ji}^2}$, we have $\Delta_{ji} \geq 2\sqrt{\frac{\alpha \log t}{N_{ij}(t-1)}}$. Thus, $l_{ji}(t) + \Delta_{ji} \geq u_{ji}(t)$. Because $p_{ji} > 1/2$, its RLCB satisfies
\begin{eqnarray}
\mathbb{P}\big\{l_{ji}(t)  \leq 1/2, N_{ij}(t-1) \geq \frac{4\alpha \log T}{\Delta_{ji}^2} \big\} \leq \mathbb{P}\big\{l_{ji}(t)
 &\leq& p_{ji} - \Delta_{ji}, N_{ij}(t-1) \geq \frac{4\alpha \log T}{\Delta_{ji}^2} \big\} \nonumber \\
 &\leq& \mathbb{P}\big\{u_{ji}(t)  \leq p_{ji} \big\}. \nonumber
\end{eqnarray}
Using Lemma~\ref{thm:rucb_properties}, we have
\begin{eqnarray}
\sum_{t=1}^T \mathbb{P}\big\{(a_t^{(1)}, a_t^{(2)}) = (i,j),  N_{ij}(t-1) \geq \frac{4\alpha \log T}{\Delta_{ji}^2}\big\} \leq \sum_{t=1}^T \mathbb{P}\big\{u_{ji}(t)  \leq p_{ji} \big\} = O(1). \nonumber
\end{eqnarray}

For the second term, we can bound it as follows:
\begin{eqnarray} \label{eq:regret_greater_0p5_smallN}
&& \sum_{t = 1}^T  \mathbb{P}\{(a_t^{(1)}, a_t^{(2)}) = (i,j),   N_{ij}(t-1) < \frac{4\alpha \log T}{\Delta_{ji}^2}\} \nonumber \\
&=&  \mathbb{E}\bigg[\sum_{t = 1}^T \mathds{1}((a_t^{(1)}, a_t^{(2)}) = (i,j),   N_{ij}(t-1) < \frac{4\alpha \log T}{\Delta_{ji}^2})\bigg] \leq \frac{4\alpha \log T}{\Delta_{ji}^2},
\end{eqnarray}
because $\sum_{t = 1}^T \mathds{1}\big((a_t^{(1)}, a_t^{(2)}) = (i,j),   N_{ij}(t-1) < \frac{4\alpha \log T}{\Delta_{ji}^2}\big) \leq \frac{4\alpha \log T}{\Delta_{ji}^2}$ due to the fact that: when $(a_t^{(1)}, a_t^{(2)}) = (i,j)$ at $t$, $N_{ij}(t-1)$ will be increased by 1, but  $ \mathds{1}\big((a_t^{(1)}, a_t^{(2)}) = (i,j),   N_{ij}(t-1) < \frac{4\alpha \log T}{\Delta_{ji}^2}\big) = 0$  as long as  $N_{ij}(t-1) \geq \frac{4\alpha \log T}{\Delta_{ji}^2}$.

The conclusion then follows by combining the bounds for the  above two terms.

\subsection{Proof of Lemma~\ref{thm:property_score_ucb}} \label{app:proof_property_score_ucb}

Let $i^*$ be the Copeland winner (or any one of them if there are multiple Copeland winners) in the dueling bandit.  We prove Lemma~\ref{thm:property_score_ucb} by analyzing the RUCB $u_{i^* j}(t)$ at $t$.
According to Lemma~\ref{thm:rucb_properties}, we have that for any $j \neq i^*$,
\begin{equation}
\mathbb{P}\{u_{i^* j}(t) < p_{i^* j} \} \leq \big[\frac{\log t}{\log(\alpha + 1/2)} + 1 \big]{t^{-\frac{2\alpha}{\alpha + 1/2}}}.
\end{equation}
Note that $\zeta^* = \frac{1}{K - 1} \sum_{j \neq i} \mathds{1}(p_{i^* j} > 1/2)$. Let $\mathcal{L}_{i^*} = \{j: 1 \leq j \leq K,  p_{i^* j} > 1/2 \}$ be the set of arms that lose to $i^*$. Thus,
\begin{eqnarray}
\hat{\zeta}^*(t) <  \zeta^*   \Rightarrow \exists j \in \mathcal{L}_{i^*}, \text{~such that}~u_{i^* j}(t) < p_{i^* j}.
\end{eqnarray}
Consider all elements in $\mathcal{L}_{i^*}$, we have
\begin{eqnarray}
\mathbb{P} \{\hat{\zeta}^*(t) \geq  \zeta^*\}
&=& 1 - \mathbb{P}\{\hat{\zeta}^*(t) <  \zeta^*\} \nonumber \\
&\geq& 1 - \mathbb{P}\{\exists j \in \mathcal{L}_{i^*}, \text{~s.t.}~u_{i^* j}(t) < p_{i^* j}\} \nonumber \\
&\geq& 1 - |\mathcal{L}_{i^*}| \big[\frac{\log t}{\log(\alpha + 1/2)} + 1 \big]{t^{-\frac{2\alpha}{\alpha + 1/2}}} \nonumber \\
&\geq& 1 -  K \big[\frac{\log t}{\log(\alpha + 1/2)} + 1 \big]{t^{-\frac{2\alpha}{\alpha + 1/2}}}.
\end{eqnarray}

\subsection{Proof of Lemma~\ref{thm:cmp_bounds_eq_0p5}} \label{app:proof_cmp_bounds_eq_0p5}
To bound the number of time-slots when we compare one non-winner arm against itself, we need to investigate the necessary conditions for this event.

Specifically, when the upper bound of the Copeland score $\hat{\zeta}^*(t) \geq \zeta^*$, the event $(a_t^{(1)}, a_t^{(2)}) = (i,i)$ for $i \notin \mathcal{C}^*$ occurs only if: a) there exists at least one $k \in \mathcal{K}$ with $p_{ki} > 1/2$, such that $l_{ki}(t) \leq 1/2$; and b) $\theta_{ki}^{(2)}(t) \leq 1/2$ for all $k$ with $l_{ki}(t) \leq 1/2$. Now we bound $\mathbb{E}[N_{ii}(T)]$ by bounding the probability of these two conditions.

For $k$ with $p_{ki} > 1/2$, we define the following probability
 \begin{equation}
q_{ki}(t) = \mathbb{P}\{\theta_{ki}^{(2)}(t) > 1/2 | \mathcal{H}_{t-1} \}.
\end{equation}
Note that the value of $\hat{\zeta}^*(t)$,  $l_{ki}(t)$, and  $q_{ki}(t)$ depends on the history, and thus is determined by $\mathcal{H}_{t-1}$. Similar to Lemma~1 in \cite{Agrawal2013AISTATS:TS2}, we bound the probability of comparing $i$ against itself (accompanied by $\hat{\zeta}^*(t) \geq \zeta^*$ and $l_{ki}(t) \leq 1/2$, which is different from TS for traditional MABs)  by that of comparing $i$ with $k$.
\begin{lemma} \label{thm:prob_bound}
Given $(i,k)$ with $p_{ki} > 1/2$, we have
\begin{eqnarray} \label{eq:prob_bound}
&& \mathbb{P}\big\{(a_t^{(1)}, a_t^{(2)}) = (i,i),  \hat{\zeta}^*(t) \geq \zeta^*, l_{ki}(t) \leq 1/2| \mathcal{H}_{t-1} \big\}  \nonumber \\
 &\leq& \frac{1 - q_{ki}(t)}{q_{ki}(t)} \mathbb{P}\big\{(a_t^{(1)}, a_t^{(2)}) = (i,k) | \mathcal{H}_{t-1}\big\}.
\end{eqnarray}
\end{lemma}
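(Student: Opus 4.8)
The plan is to follow the style of Lemma~1 in \cite{Agrawal2013AISTATS:TS2}, conditioning on the history $\mathcal{H}_{t-1}$ and exploiting the fact that, given $\mathcal{H}_{t-1}$, the samples $\theta^{(2)}_{\cdot i}(t)$ are drawn independently of everything else, while $\hat{\zeta}^*(t)$, the RLCBs $l_{\cdot i}(t)$, and the probabilities $q_{ki}(t)$ are all $\mathcal{H}_{t-1}$-measurable. The key observation is that the second candidate is chosen as $a_t^{(2)} = \argmax_{m:\, l_{m a_t^{(1)}}(t) \le 1/2} \theta^{(2)}_{m a_t^{(1)}}(t)$ with $\theta^{(2)}_{ii}(t) = 1/2$, so the event $(a_t^{(1)}, a_t^{(2)}) = (i,i)$ together with $l_{ki}(t)\le 1/2$ forces $\theta^{(2)}_{ki}(t) \le 1/2$ (since $k$ is among the admissible arms but does not win the $\argmax$).

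First I would condition on $\mathcal{H}_{t-1}$ and restrict to histories in the (deterministic, given $\mathcal{H}_{t-1}$) event $\{\hat\zeta^*(t)\ge\zeta^*,\ l_{ki}(t)\le 1/2\}$; outside this event both sides of \eqref{eq:prob_bound} are zero, so it suffices to treat this case. Next, let $\mathcal{F}_{t-1}$ denote $\mathcal{H}_{t-1}$ augmented by the first-phase samples $\theta^{(1)}(t)$, so that $a_t^{(1)}$ is $\mathcal{F}_{t-1}$-measurable; condition further on $\{a_t^{(1)} = i\}$, which is again determined before the second-phase samples are drawn. Then the only remaining randomness is the vector $(\theta^{(2)}_{mi}(t))_{m\ne i}$, whose coordinates are mutually independent given $\mathcal{H}_{t-1}$. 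Write $A$ for the event ``$\theta^{(2)}_{mi}(t) \le 1/2$ for every $m\ne i$ with $l_{mi}(t)\le 1/2$'' and $A_{\neg k}$ for the same event with the constraint on coordinate $k$ removed. Since the event $\{(a_t^{(1)},a_t^{(2)})=(i,i)\}$ (within our conditioning) is contained in $A$, and since $A = A_{\neg k} \cap \{\theta^{(2)}_{ki}(t)\le 1/2\}$ with $\{\theta^{(2)}_{ki}(t)\le 1/2\}$ independent of $A_{\neg k}$, we get
\begin{equation}
\mathbb{P}\{(a_t^{(1)},a_t^{(2)})=(i,i),\, \hat\zeta^*(t)\ge\zeta^*,\, l_{ki}(t)\le 1/2 \mid \mathcal{H}_{t-1}\} \le (1-q_{ki}(t))\,\mathbb{P}(A_{\neg k}\mid \mathcal{H}_{t-1}).
\end{equation}
On the other hand, the event $\{(a_t^{(1)},a_t^{(2)})=(i,k)\}$ occurs (on this history) when $a_t^{(1)}=i$, $\theta^{(2)}_{ki}(t) > 1/2$, and $\theta^{(2)}_{mi}(t) \le \theta^{(2)}_{ki}(t)$ for all other admissible $m$; in particular it contains the sub-event where additionally $\theta^{(2)}_{mi}(t)\le 1/2$ for all admissible $m\ne k$, i.e.\ $A_{\neg k}\cap\{\theta^{(2)}_{ki}(t)>1/2\}$. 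By the same independence, $\mathbb{P}\{(a_t^{(1)},a_t^{(2)})=(i,k)\mid\mathcal{H}_{t-1}\} \ge q_{ki}(t)\,\mathbb{P}(A_{\neg k}\mid\mathcal{H}_{t-1})$. Dividing the two displays yields \eqref{eq:prob_bound}.

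The main obstacle I expect is bookkeeping the correct admissible set for the $\argmax$ in Line~14 and verifying that the ``$a_t^{(1)}=i$'' clause can be inserted on the right-hand side without loss: one must check that requiring $a_t^{(1)}=i$ does not shrink the left-hand event (it does not, since $(a_t^{(1)},a_t^{(2)})=(i,i)$ already implies $a_t^{(1)}=i$) and that the first-phase randomness $\theta^{(1)}(t)$ is independent of the second-phase samples given $\mathcal{H}_{t-1}$, so that conditioning on $a_t^{(1)}=i$ leaves the conditional law of $(\theta^{(2)}_{mi}(t))_m$ unchanged. A secondary subtlety is handling ties in the $\argmax$ and the degenerate case $q_{ki}(t)=0$ (where the right-hand side should be read as $+\infty$, or one argues the left-hand side is $0$), both of which are handled exactly as in \cite{Agrawal2013AISTATS:TS2}. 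Once \eqref{eq:prob_bound} is established, summing over $t$ and applying the standard TS counting argument (as in the proof of Lemma~\ref{thm:cmp_bounds_less_0p5}) together with Lemma~\ref{thm:rucb_properties} and Lemma~\ref{thm:property_score_ucb} gives the bound on $\mathbb{E}[N_{ii}(T)]$ claimed in Lemma~\ref{thm:cmp_bounds_eq_0p5}.
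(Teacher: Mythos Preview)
Your proposal is correct and mirrors the paper's own proof: both restrict to histories on which $\hat{\zeta}^*(t)\ge\zeta^*$ and $l_{ki}(t)\le 1/2$ hold, upper-bound the left side via the event $A=\{\theta^{(2)}_{mi}(t)\le 1/2 \text{ for all admissible } m\}$, lower-bound the right side via the sub-event $A_{\neg k}\cap\{\theta^{(2)}_{ki}(t)>1/2\}$, and then factor using the conditional independence of the second-phase Beta samples. One small clean-up: as literally written, your displayed lower bound $\mathbb{P}\{(a_t^{(1)},a_t^{(2)})=(i,k)\mid\mathcal{H}_{t-1}\}\ge q_{ki}(t)\,\mathbb{P}(A_{\neg k}\mid\mathcal{H}_{t-1})$ is missing the factor $\mathbb{P}\{a_t^{(1)}=i\mid\mathcal{H}_{t-1}\}$ (the paper's Eq.~\eqref{eq:prob_bound_right} has the same omission), but you already flag exactly this in your ``main obstacle'' paragraph---just carry that factor through both displays so it cancels in the ratio, as you anticipate.
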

\begin{proof}
First of all, the value of $\hat{\zeta}^*(t)$ and $l_{ki}(t)$ depends on $\mathcal{H}_{t-1}$. Thus, if $\mathcal{H}_{t-1}$ satisfies that $\hat{\zeta}^*(t) < \zeta^*$ or $l_{ki}(t) > 1/2$, then \eqref{eq:prob_bound}  holds because the left hand side of is zero.

Now we consider $\mathcal{H}_{t-1}$ satisfying $\hat{\zeta}^*(t) \geq \zeta^*$ and $l_{ki}(t) \leq 1/2$. For the left hand side, we have
\begin{eqnarray}\label{eq:prob_bound_left}
&& \mathbb{P}\big\{(a_t^{(1)}, a_t^{(2)}) = (i,i),  \hat{\zeta}^*(t) \geq \zeta^*, l_{ki}(t) \leq 1/2| \mathcal{H}_{t-1} \big\}  \nonumber \\
& = & \mathbb{P}\big\{(a_t^{(1)}, a_t^{(2)}) = (i,i)| \mathcal{H}_{t-1} \big\} \nonumber \\
& \leq  & \mathbb{P}\big\{\theta_{k'i}^{(2)} (t) \leq 1/2, \forall k', \text{s.t.}~ l_{k'i}(t) \leq 1/2  | \mathcal{H}_{t-1} \big\} \nonumber \\
&=& \mathbb{P}\big\{\theta_{ki}^{(2)} (t) \leq 1/2 | \mathcal{H}_{t-1} \big\} \cdot \mathbb{P}\big\{\theta_{k'i}^{(2)} (t) \leq 1/2, \forall k'\neq k, \text{s.t.}~ l_{k'i}(t) \leq 1/2  |\mathcal{H}_{t-1} \big\} \nonumber \\
&=& [1-q_{ki}(t)] \mathbb{P}\big\{\theta_{k'i}^{(2)} (t) \leq 1/2, \forall k'\neq k, \text{s.t.}~ l_{k'i}(t) \leq 1/2  |\mathcal{H}_{t-1} \big\}.
\end{eqnarray}

For the right hand side, we have
\begin{eqnarray}\label{eq:prob_bound_right}
&& \mathbb{P}\big\{(a_t^{(1)}, a_t^{(2)}) = (i,k) | \mathcal{H}_{t-1}\big\}  \nonumber \\
&\geq& \mathbb{P}\big\{\theta_{ki}^{(2)} (t) > 1/2 \geq \theta_{k'i}^{(2)}(t),  \forall k'\neq k, \text{s.t.}~ l_{k'i}(t) \leq 1/2  |\mathcal{H}_{t-1} \big\} \nonumber \\
&=& \mathbb{P}\big\{\theta_{ki}^{(2)} (t) > 1/2 | \mathcal{H}_{t-1} \big\}  \cdot \mathbb{P}\big\{\theta_{k'i}^{(2)}(t) \leq 1/2,  \forall k'\neq k, \text{s.t.}~ l_{k'i}(t) \leq 1/2  |\mathcal{H}_{t-1} \big\} \nonumber \\
&=&  q_{ki}(t) \mathbb{P}\big\{\theta_{k'i}^{(2)} (t) \leq 1/2, \forall k'\neq i~\text{or}~k, \text{s.t.}~ l_{k'i}(t) \leq 1/2  |\mathcal{H}_{t-1} \big\}.
\end{eqnarray}
The conclusion then follows by combining \eqref{eq:prob_bound_left} and \eqref{eq:prob_bound_right}.
\end{proof}

Now we return to the proof of Lemma~\ref{thm:cmp_bounds_eq_0p5}. We divide the probability of $(a_t^{(1)}, a_t^{(2)}) = (i,i)$ into two terms according to the value of $\hat{\zeta}^*(t)$.
\begin{eqnarray}
&& \mathbb{E}[N_{ii}(T)] \nonumber \\
&=& \sum_{t = 1}^T \mathbb{P}\big\{(a_t^{(1)}, a_t^{(2)}) = (i,i)\} \nonumber \\
&\leq & \sum_{t = 1}^T \mathbb{P}\big\{(a_t^{(1)}, a_t^{(2)}) = (i,i), \hat{\zeta}^*(t) \geq \zeta^*, \exists k, p_{ki} > 1/2, l_{ki}(t) \leq 1/2\big\} + \sum_{t=1}^T  \mathbb{P}\big\{\hat{\zeta}^*(t) < \zeta^*\big\} \nonumber \\
&\leq & \sum_{k: p_{ki} > 1/2}  \sum_{t = 1}^T \mathbb{P}\big\{(a_t^{(1)}, a_t^{(2)}) = (i,i), \hat{\zeta}^*(t) \geq \zeta^*, l_{ki}(t) \leq 1/2\big\} + \sum_{t=1}^T K \big[\frac{\log t}{\log(\alpha + 1/2)} + 1 \big]{t^{-\frac{2\alpha}{\alpha + 1/2}}}\nonumber \\
&\leq& \sum_{k: p_{ki} > 1/2}  \sum_{t = 1}^T \mathbb{P}\big\{(a_t^{(1)}, a_t^{(2)}) = (i,i), \hat{\zeta}^*(t) \geq \zeta^*, l_{ki}(t) \leq 1/2\big\} + O(K).
\end{eqnarray}
In the above equation, we have already bounded the second term by using Lemmas~\ref{thm:property_score_ucb} and \ref{thm:rucb_properties}.

Next, we bound the first term by analyzing the bound for each $k$ with $p_{ki} > 1/2$. Let $\tau_n$ be the time-slot index where $k$ and $i$ are compared for the $n$-th time, including both cases $(a_t^{(1)}, a_t^{(2)}) = (i,k)$ and $(a_t^{(1)}, a_t^{(2)}) = (k,i)$, and let $\tau_0 = 0$. Then  by Lemma~\ref{thm:prob_bound}, we have that for each $k$ with $p_{ki} > 1/2$,
\begin{eqnarray}
&& \sum_{t = 1}^T \mathbb{P}\big\{(a_t^{(1)}, a_t^{(2)}) = (i,i), \hat{\zeta}^*(t) \geq \zeta^*, l_{ki}(t) \leq 1/2\big\} \nonumber \\
&=&  \sum_{t = 1}^T \mathbb{E}\bigg[\mathbb{P}\big\{(a_t^{(1)}, a_t^{(2)}) = (i,i), \hat{\zeta}^*(t) \geq \zeta^*, l_{ki}(t) \leq 1/2 | \mathcal{H}_{t-1}\big\}\bigg] \nonumber \\
&\leq & \sum_{t = 1}^T \mathbb{E} \bigg[ \frac{1- q_{ki}(t)}{q_{ki}(t)}\mathbb{P}\big\{(a_t^{(1)}, a_t^{(2)}) = (i,k) | \mathcal{H}_{t-1}\big\}\bigg] \nonumber \\
&\leq & \sum_{t = 1}^T \mathbb{E} \bigg[ \mathbb{E} \bigg[\frac{1- q_{ki}(t)}{q_{ki}(t)}\mathds{1}\big((a_t^{(1)}, a_t^{(2)}) = (i,k)\big) | \mathcal{H}_{t-1}\bigg] \bigg] \nonumber \\
& \overset{(a)}{=} & \sum_{n = 0}^{T-1} \mathbb{E}\bigg[ \frac{1- q_{ki}(\tau_{n} + 1)}{q_{ki}(\tau_{n} + 1)} \sum_{t = \tau_n +1}^{\tau_{n+1}} \mathds{1}\big((a_t^{(1)}, a_t^{(2)}) = (i,k)\big) \bigg]\nonumber \\
&\leq & \sum_{n = 0}^{T-1} \mathbb{E} \big[\frac{1}{q_{ki}(\tau_{n} + 1)} - 1\big]. \nonumber
\end{eqnarray}
The equality~(a) follows from the fact that the distribution of $\theta_{ki}^{(2)}(t)$ only changes after $k$ and $i$ are compared. According to Lemma~2 in \cite{Agrawal2013AISTATS:TS2}, $\mathbb{E} \big[\frac{1}{q_{ki}(\tau_{n} + 1)}\big]$ is bounded as follows:
\begin{eqnarray}
&& \mathbb{E} \big[\frac{1}{q_{ki}(\tau_{n} + 1)}\big] \nonumber \\
& \leq&
\begin{cases}
1 + \frac{3}{\Delta_{ki}}, &\text{for $n < \frac{8}{\Delta_{ki}}$}; \\
1 + \Theta\big(e^{-n\Delta_{ki}^2 /2} + \frac{1}{(n+1) \Delta_{ki}^2 } e^{-n D(1/2 || p_{ki})} + \frac{1}{e^{n\Delta_{ki}^2/4} - 1}\big), &\text{for $n \geq \frac{8}{\Delta_{ki}}$}.
\end{cases} \nonumber
\end{eqnarray}
Thus,
\begin{eqnarray}
&& \sum_{t = 1}^T \mathbb{P}\big\{(a_t^{(1)}, a_t^{(2)}) = (i,i), \hat{\zeta}^*(t) \geq \zeta^*, l_{ki}(t) \leq 1/2\big\} \nonumber \\
&\leq & \frac{24}{\Delta_{ki}^2} + \sum_{n = 0}^{T-1} \Theta\big(e^{-n\Delta_{ki}^2 /2} + \frac{1}{(n+1) \Delta_{ki}^2 } e^{-n D(1/2 || p_{ki})} + \frac{1}{e^{n\Delta_{ki}^2/4} - 1}\big) \nonumber \\
&\leq & \frac{24}{\Delta_{ki}^2} + \Theta\big( \frac{1}{\Delta_{ki}^2} + \frac{1}{\Delta_{ki}^2 D(1/2|| p_{ki})} + \frac{1}{\Delta_{ki}^4} + \frac{1}{\Delta_{ki}^2} \big) \nonumber \\
&=& \Theta\big( \frac{1}{\Delta_{ki}^2} + \frac{1}{\Delta_{ki}^2 D(1/2|| p_{ki})} + \frac{1}{\Delta_{ki}^4} \big). \nonumber
\end{eqnarray}
The conclusion then follows by summing over all $k$ with $p_{ki} > 1/2$.

\section{Regret Bound Refinement} \label{app:regret_condorcet_refined}
Theorem~\ref{thm:regret_copeland_refined} can be proved by combining Lemmas~\ref{thm:cmp_bounds_less_0p5} to \ref{thm:cmp_bounds_eq_0p5} and Lemma~\ref{thm:regret_refined_singlearm}. In this appendix, we present the proof of Lemma~\ref{thm:regret_refined_singlearm}.

\subsection{Proof of Lemma~\ref{thm:regret_refined_singlearm}} \label{app:proof_regret_refined_singlearm}

We first consider the event of $(a_t^{(1)}, a_t^{(2)}) = (i,\sigma_{i(j)})$ with $j > L_C + 1$ in two cases with different values of $\hat{\zeta}^*(t)$. Recall that $\zeta^* = (K - L_C)/(K-1)$  is the maximum normalized Copeland score. Then,
\begin{eqnarray} \label{eq:regret_refined_diff_zeta}
 \mathbb{E}[N_{i\sigma_{i(j)}}^{(1)}(T)] &=& \sum_{t = 1}^T \mathbb{P}\big\{(a_t^{(1)}, a_t^{(2)}) = (i,\sigma_{i(j)})\} \nonumber \\
&\leq & \sum_{t = 1}^T \mathbb{P}\big\{(a_t^{(1)}, a_t^{(2)}) = (i,\sigma_{i(j)}), \hat{\zeta}^*(t) \geq \zeta^*\big\} + \sum_{t=1}^T  \mathbb{P}\big\{\hat{\zeta}^*(t) < \zeta^*\big\} \nonumber\\
&\leq& \sum_{t = 1}^T \mathbb{P}\big\{(a_t^{(1)}, a_t^{(2)}) = (i,\sigma_{i(j)}), \hat{\zeta}^*(t) \geq \zeta^*\big\} + O(K),
\end{eqnarray}
where the second term is bounded by $\sum_{t=1}^T K \big[\frac{\log t}{\log(\alpha + 1/2)} + 1 \big]{t^{-\frac{2\alpha}{\alpha + 1/2}}} = O(K)$ according to Lemmas~\ref{thm:property_score_ucb} and \ref{thm:rucb_properties}.

To bound the first term in \eqref{eq:regret_refined_diff_zeta}, with a slight abuse of notation, we choose two numbers $x_{\sigma_{i(j)},i}$ and $y_{\sigma_{i(j)},i}$ such that $p_{\sigma_{i(j)},i} < x_{\sigma_{i(j)},i} < y_{\sigma_{i(j)},i} < p_{\sigma_{i(L_C+1)},i}$, and define the following events:
\begin{eqnarray}
&&\mathcal{E}^{{p}}_{\sigma_{i(j)},i}(t) = \{\bar{p}_{\sigma_{i(j)},i}(t) < x_{\sigma_{i(j)},i} \},\nonumber\\
&&\mathcal{E}^{\theta}_{\sigma_{i(j)},i}(t) = \{\theta^{(2)}_{\sigma_{i(j)},i}(t) < y_{\sigma_{i(j)},i} \},\nonumber
\end{eqnarray}
where the existence of $x_{\sigma_{i(j)},i}$ and $y_{\sigma_{i(j)},i}$ is guaranteed under Assumption 2.
Then, the first term can be decomposed as
\begin{eqnarray} \label{eq:regret_refined_diff_event}
&& \sum_{t=1}^T \mathbb{P}\big\{(a_t^{(1)}, a_t^{(2)}) = (i,\sigma_{i(j)}), \hat{\zeta}^*(t) \geq \zeta^*\big\} \nonumber\\
& = & \sum_{t=1}^T \mathbb{P}\big\{(a_t^{(1)}, a_t^{(2)}) = (i,\sigma_{i(j)}), \hat{\zeta}^*(t) \geq \zeta^*, \mathcal{E}^{{p}}_{\sigma_{i(j)},i}(t), \mathcal{E}^{\theta}_{\sigma_{i(j)},i}(t) \big\} \nonumber \\
&&+ \sum_{t=1}^T \mathbb{P}\big\{(a_t^{(1)}, a_t^{(2)}) = (i,\sigma_{i(j)}), \hat{\zeta}^*(t) \geq \zeta^*,  \mathcal{E}^{{p}}_{\sigma_{i(j)},i}(t), \urcorner \mathcal{E}^{\theta}_{\sigma_{i(j)},i}(t) \big\} \nonumber \\
&&+ \sum_{t=1}^T \mathbb{P}\big\{(a_t^{(1)}, a_t^{(2)}) = (i,\sigma_{i(j)}), \hat{\zeta}^*(t) \geq \zeta^*,  \urcorner \mathcal{E}^{{p}}_{\sigma_{i(j)},i}(t)\big\} \nonumber \\
&\leq & \sum_{t=1}^T \mathbb{P}\big\{(a_t^{(1)}, a_t^{(2)}) = (i,\sigma_{i(j)}), \hat{\zeta}^*(t) \geq \zeta^*, \mathcal{E}^{{p}}_{\sigma_{i(j)},i}(t), \mathcal{E}^{\theta}_{\sigma_{i(j)},i}(t) \big\} \nonumber \\
&&+ \sum_{t=1}^T \mathbb{P}\big\{(a_t^{(1)}, a_t^{(2)}) = (i,\sigma_{i(j)}), \mathcal{E}^{{p}}_{\sigma_{i(j)},i}(t), \urcorner \mathcal{E}^{\theta}_{\sigma_{i(j)},i}(t) \big\} \nonumber \\
&&+ \sum_{t=1}^T \mathbb{P}\big\{(a_t^{(1)}, a_t^{(2)}) = (i,\sigma_{i(j)}),  \urcorner \mathcal{E}^{{p}}_{\sigma_{i(j)},i}(t)\big\}.
\end{eqnarray}

Now we bound each term in Eq.~\eqref{eq:regret_refined_diff_event} respectively.

\textbf{a) First term:} $\sum_{t=1}^T \mathbb{P}\big\{(a_t^{(1)}, a_t^{(2)}) = (i,\sigma_{i(j)}), \hat{\zeta}^*(t) \geq \zeta^*, \mathcal{E}^{{p}}_{\sigma_{i(j)},i}(t), \mathcal{E}^{\theta}_{\sigma_{i(j)},i}(t) \big\}$

For the first term, we note that when $\hat{\zeta}^*(t) \geq \zeta^*$, the first candidate $a_t^{(1)}$ could be $i$ only when there exists a $j' \leq L_C+1$, such that $l_{\sigma_{i(j')},i}(t) \leq 1/2$. Thus,
\begin{eqnarray}
&&\mathbb{P}\big\{(a_t^{(1)}, a_t^{(2)}) = (i,\sigma_{i(j)}), \hat{\zeta}^*(t) \geq \zeta^*, \mathcal{E}^{{p}}_{\sigma_{i(j)},i}(t), \mathcal{E}^{\theta}_{\sigma_{i(j)},i}(t) \big\} \nonumber \\
&\leq & \sum_{j'=1}^{L_C+1} \mathbb{P}\big\{(a_t^{(1)}, a_t^{(2)}) = (i,\sigma_{i(j)}), \hat{\zeta}^*(t) \geq \zeta^*, l_{\sigma_{i(j')},i}(t) \leq 1/2, \mathcal{E}^{{p}}_{\sigma_{i(j)},i}(t), \mathcal{E}^{\theta}_{\sigma_{i(j)},i}(t) \big\}.\nonumber
\end{eqnarray}
For each $j' \leq L_C+1$, define the following probability:
 \begin{equation}
q^{(i)}_{j'j}(t) = \mathbb{P}\{\theta_{\sigma_{i(j')},i}^{(2)}(t) >  y_{\sigma_{i(j)},i} | \mathcal{H}_{t-1} \}.
\end{equation}
Similar to Lemma~\ref{thm:prob_bound}, we can show that
\begin{eqnarray}
&& \mathbb{P}\big\{(a_t^{(1)}, a_t^{(2)}) = (i,\sigma_{i(j)}), \hat{\zeta}^*(t) \geq \zeta^*, l_{\sigma_{i(j')},i}(t) \leq 1/2, \mathcal{E}^{{p}}_{\sigma_{i(j)},i}(t), \mathcal{E}^{\theta}_{\sigma_{i(j)},i}(t) |\mathcal{H}_{t-1}\big\} \nonumber \\
&\leq & \frac{1-q^{(i)}_{j'j}(t)}{q^{(i)}_{j'j}(t)} \mathbb{P}\big\{(a_t^{(1)}, a_t^{(2)}) = (i,\sigma_{i(j')}),  \mathcal{E}^{{p}}_{\sigma_{i(j)},i}(t), \mathcal{E}^{\theta}_{\sigma_{i(j)},i}(t) |\mathcal{H}_{t-1}\big\}, \nonumber
\end{eqnarray}
and its summation over $t$ can be bounded as
\begin{eqnarray}
&& \sum_{t=1}^T \mathbb{P}\big\{(a_t^{(1)}, a_t^{(2)}) = (i,\sigma_{i(j)}), \hat{\zeta}^*(t) \geq \zeta^*, l_{\sigma_{i(j')},i}(t) \leq 1/2, \mathcal{E}^{{p}}_{\sigma_{i(j)},i}(t), \mathcal{E}^{\theta}_{\sigma_{i(j)},i}(t) \big\}
 =  O(1). \nonumber
\end{eqnarray}
Considering all $j'$ from $1$ to $L_C+1$, we have
\begin{eqnarray} \label{eq:regret_refined_diff_event_term1}
\sum_{t=1}^T\mathbb{P}\big\{(a_t^{(1)}, a_t^{(2)}) = (i,\sigma_{i(j)}), \hat{\zeta}^*(t) \geq \zeta^*, \mathcal{E}^{{p}}_{\sigma_{i(j)},i}(t), \mathcal{E}^{\theta}_{\sigma_{i(j)},i}(t) \big\} = O(L_C+1).
\end{eqnarray}

\textbf{b) Second term:}  $\sum_{t=1}^T \mathbb{P}\big\{(a_t^{(1)}, a_t^{(2)}) = (i,\sigma_{i(j)}), \mathcal{E}^{{p}}_{\sigma_{i(j)},i}(t), \urcorner \mathcal{E}^{\theta}_{\sigma_{i(j)},i}(t) \big\}$

We use the back substitution argument to refine the second term to $O(\log \log T)$. When fixing the first candidate as $a_t^{(1)} = i$, the comparison between $a_t^{(1)}$ and other arms is similar to a traditional MAB. Let $N_i^{(1)}(T) = \sum_{t=1}^T\mathds{1}(a_t^{(1)} = i)$ be the number of time-slots when this type of MAB is played, and let
\begin{eqnarray}
L_{j i}^{\beta}(n) = \frac{\beta\log n}{D(x_{\sigma_{i(j)},i} || y_{\sigma_{i(j)},i})}. \nonumber
\end{eqnarray}
 Then, considering all possible cases of $N_i^{(1)}(T)$ and $N_{i \sigma_{i(j)}}^{(1)}(t-1)$, we have
 {\small
\begin{eqnarray}
&&  \sum_{t=1}^T \mathbb{P}\big\{(a_t^{(1)}, a_t^{(2)}) = (i,\sigma_{i(j)}), \mathcal{E}^{{p}}_{\sigma_{i(j)},i}(t), \urcorner \mathcal{E}^{\theta}_{\sigma_{i(j)},i}(t) \big\} \nonumber \\
&\leq & \sum_{n=0}^{T}\sum_{t=1}^T \mathbb{P}\big\{(a_t^{(1)}, a_t^{(2)}) = (i,\sigma_{i(j)}), \mathcal{E}^{{p}}_{\sigma_{i(j)},i}(t), \urcorner \mathcal{E}^{\theta}_{\sigma_{i(j)},i}(t), N_{i \sigma_{i(j)}}^{(1)}(t-1) \leq  L_{ji}^{\beta}(n),  N_i^{(1)}(T) = n \big\} \nonumber \\
&& + \sum_{n=0}^T \sum_{t=1}^T \mathbb{P}\big\{(a_t^{(1)}, a_t^{(2)}) = (i,\sigma_{i(j)}), \mathcal{E}^{{p}}_{\sigma_{i(j)},i}(t), \urcorner \mathcal{E}^{\theta}_{\sigma_{i(j)},i}(t), N_{i \sigma_{i(j)}}^{(1)}(t-1) >  L_{ji}^{\beta}(n),  N_i^{(1)}(T) = n \big\}. \nonumber
\end{eqnarray}
}
For the first case, note that
{\small
\begin{eqnarray}
&& \sum_{t=1}^T \mathbb{P}\big\{(a_t^{(1)}, a_t^{(2)}) = (i,\sigma_{i(j)}), \mathcal{E}^{{p}}_{\sigma_{i(j)},i}(t), \urcorner \mathcal{E}^{\theta}_{\sigma_{i(j)},i}(t), N_{i \sigma_{i(j)}}^{(1)}(t-1) \leq  L_{ji}^{\beta}(n),  N_i^{(1)}(T) = n \big\} \nonumber \\
& \leq  & \sum_{t=1}^T \mathbb{P}\big\{(a_t^{(1)}, a_t^{(2)}) = (i,\sigma_{i(j)}),  N_{i \sigma_{i(j)}}^{(1)}(t-1) \leq  L_{ji}^{\beta}(n) |  N_i^{(1)}(T) = n \big\} \mathbb{P}\{N_i^{(1)}(T) = n \} \nonumber \\
&\leq & L_{ji}^{\beta}(n) \mathbb{P}\{N_i^{(1)}(T) = n \},
\end{eqnarray}
}
similar to the analysis for Eq.~\eqref{eq:regret_greater_0p5_smallN}.

Then, we have
\begin{eqnarray}
&& \sum_{n=0}^{T}\sum_{t=1}^T \mathbb{P}\big\{(a_t^{(1)}, a_t^{(2)}) = (i,\sigma_{i(j)}), \mathcal{E}^{{p}}_{\sigma_{i(j)},i}(t), \urcorner \mathcal{E}^{\theta}_{\sigma_{i(j)},i}(t), N_{i j_{ik}}^{(1)}(t-1) \leq  L_{ji}^{\beta}(n),  N_i^{(1)}(T) = n \big\} \nonumber \\
& \leq  &  \mathbb{E}[L_{ji}^{\beta}(n)]
 \leq  \frac{\beta \log (\mathbb{E}[N_i^{(1)}(T)])}{D(x_{\sigma_{i(j)},i}|| y_{\sigma_{i(j)},i})}
 \leq   \frac{\beta \big(\log\log T + O(\log K)\big)}{D(x_{\sigma_{i(j)},i}|| y_{\sigma_{i(j)},i})}, \nonumber
\end{eqnarray}
where the last inequality follows from the concavity of the $\log(\cdot)$ function and the fact that $\mathbb{E}[N_i^{(1)}(T)] = O(K \log T)$ as shown when proving Proposition~\ref{thm:regret_copeland}.

For the second case, let $\tau^{(i)}_{m}$ be time-slot where $a_t^{(1)} = i$ for the $m$-th time and $\tau^{(i)}_{0} = 0$. Then
{\small
\begin{eqnarray}
&&  \sum_{t=1}^T \mathbb{P}\big\{(a_t^{(1)}, a_t^{(2)}) = (i,\sigma_{i(j)}), \mathcal{E}^{{p}}_{\sigma_{i(j)},i}(t), \urcorner \mathcal{E}^{\theta}_{\sigma_{i(j)},i}(t), N_{i \sigma_{i(j)}}^{(1)}(t-1) >  L_{ji}^{\beta}(n),  N_i^{(1)}(T) = n \big\} \nonumber \\
&\leq & \mathbb{E}\bigg[\sum_{t=1}^T \mathds{1}\big((a_t^{(1)}, a_t^{(2)}) = (i,\sigma_{i(j)}), \mathcal{E}^{{p}}_{\sigma_{i(j)},i}(t), \urcorner \mathcal{E}^{\theta}_{\sigma_{i(j)},i}(t), N_{i \sigma_{i(j)}}^{(1)}(t-1) >  L_{ji}^{\beta}(n),  N_i^{(1)}(T) = n \big) \bigg] \nonumber \\
& \leq & \sum_{m=0}^n \mathbb{E}\bigg[\sum_{t = \tau_m + 1}^{\tau_{m+1}} \mathds{1}\big((a_t^{(1)}, a_t^{(2)}) = (i,\sigma_{i(j)}), \mathcal{E}^{{p}}_{\sigma_{i(j)},i}(t), \urcorner \mathcal{E}^{\theta}_{\sigma_{i(j)},i}(t), N_{i \sigma_{i(j)}}^{(1)}(t-1) >  L_{ji}^{\beta}(n) \bigg] \nonumber \\
&\overset{(a)}{\leq} & \sum_{m=0}^n \mathbb{E}\bigg[\mathds{1}\big((a_{\tau_{m+1}}^{(1)}, a_{\tau_{m+1}}^{(2)}) = (i,\sigma_{i(j)}), \mathcal{E}^{{p}}_{\sigma_{i(j)},i}(\tau_{m+1}), \urcorner \mathcal{E}^{\theta}_{\sigma_{i(j)},i}(\tau_{m+1}), N_{i \sigma_{i(j)}}^{(1)}(\tau_{m+1}-1) >  L_{ji}^{\beta}(n) \bigg] \nonumber \\
&\leq & \sum_{m=0}^n \mathbb{P}\big\{\mathcal{E}^{{p}}_{\sigma_{i(j)},i}(t), \urcorner \mathcal{E}^{\theta}_{\sigma_{i(j)},i}(t), N_{i \sigma_{i(j)}}^{(1)}(t-1) >  L_{ji}^{\beta}(n) \big\} \nonumber \\
&\overset{(b)}{\leq} & n \cdot \frac{1}{n^{\beta}} = \frac{1}{n^{\beta - 1}},
\end{eqnarray}
}
where $(a)$ holds because $a_t^{(1)} = i$ could only happen at $t = \tau^{(i)}_{m+1}$; $(b)$ is true because, the two sets of samples in D-TS are drawn independently and their distributions only depend on the historic comparison results; thus, given $N_{i \sigma_{i(j)}}^{(1)}(t-1) >  L_{ji}^{\alpha}(n)$, the events  $\mathcal{E}^{{p}}_{\sigma_{i(j)},i}(t)$ and $\mathcal{E}^{\theta}_{\sigma_{i(j)},i}(t)$ are independent of $t = \tau^{(i)}_{m+1}$, and the probability can be bounded according to the concentration property of Thompson samples (in the proof of Lemma~3 in \cite{Agrawal2013AISTATS:TS2}):
\begin{equation}
\mathbb{P}\{\mathcal{E}^{{p}}_{\sigma_{i(j)},i}(t), \urcorner \mathcal{E}^{\theta}_{\sigma_{i(j)},i}(t), N_{i \sigma_{i(j)}}^{(1)}(t-1) >  L_{ji}^{\beta}(n)\} \leq e^{-L_{ji}^{\beta}(n) D(x_{\sigma_{i(j)},i} || y_{\sigma_{i(j)},i})} = \frac{1}{n^{\beta}}. \nonumber
\end{equation}
Then for $\beta > 2$,
\begin{eqnarray}
&& \sum_{n=0}^T \sum_{t=1}^T \mathbb{P}\big\{(a_t^{(1)}, a_t^{(2)}) = (i,\sigma_{i(j)}), \mathcal{E}^{{p}}_{\sigma_{i(j)},i}(t), \urcorner \mathcal{E}^{\theta}_{\sigma_{i(j)},i}(t), N_{i \sigma_{i(j)}}^{(1)}(t-1) >  L_{ji}^{\beta}(n),  N_i^{(1)}(T) = n \big\} \nonumber \nonumber \\
&\leq & \sum_{n=0}^T  \frac{1}{n^{\beta - 1}} = O(1).
\end{eqnarray}

Combining the above two cases, we can bound the second term of Eq.~\eqref{eq:regret_refined_diff_event} as:
\begin{eqnarray} \label{eq:regret_refined_diff_event_term2}
 \sum_{t=1}^T \mathbb{P}\big\{(a_t^{(1)}, a_t^{(2)}) = (i,\sigma_{i(j)}), \mathcal{E}^{{p}}_{\sigma_{i(j)},i}(t), \urcorner \mathcal{E}^{\theta}_{\sigma_{i(j)},i}(t) \big\}
\leq  \frac{\beta \big(\log\log T + O(\log K)\big)}{D(x_{\sigma_{i(j)},i}|| y_{\sigma_{i(j)},i})} + O(1).
\end{eqnarray}

\textbf{c) Third term:} $\sum_{t=1}^T \mathbb{P}\big\{(a_t^{(1)}, a_t^{(2)}) = (i,\sigma_{i(j)}), \urcorner \mathcal{E}^{{p}}_{\sigma_{i(j)},i}(t)\big\}$

For the third term, we can bound it as
\begin{eqnarray} \label{eq:regret_refined_diff_event_term3}
&& \sum_{t=1}^T \mathbb{P}\big\{(a_t^{(1)}, a_t^{(2)}) = (i,\sigma_{i(j)}), \urcorner \mathcal{E}^{{p}}_{\sigma_{i(j)},i}(t)\big\} \nonumber \\
&\leq & \sum_{t=1}^T \mathbb{P}\big\{(a_t^{(1)}, a_t^{(2)}) = (i,\sigma_{i(j)}), \urcorner \mathcal{E}^{{p}}_{\sigma_{i(j)},i}(t)\big\} \nonumber \\
&\leq & \frac{1}{D(x_{\sigma_{i(j)},i} || p_{\sigma_{i(j)},i})} + 1,
\end{eqnarray}
where the last inequality follows from  Lemma~3 in \cite{Agrawal2013AISTATS:TS2}.

Combining the analysis for all above three terms, we can bound Eq.~\eqref{eq:regret_refined_diff_event} by choosing appropriate $x_{\sigma_{i(j)},i}$ and $y_{\sigma_{i(j)},i}$. Specifically,
for any $\epsilon > 0$, similar to \cite{Agrawal2013AISTATS:TS2}, we can choose appropriate $x_{\sigma_{i(j)},i}$ and $y_{\sigma_{i(j)},i}$ such that $D(x_{\sigma_{i(j)},i}|| y_{\sigma_{i(j)},i}) = D(p_{\sigma_{i(j)},i}||p_{\sigma_{i(L_C+1)},i})/(1+\epsilon)^2$ and $\frac{1}{D(x_{\sigma_{i(j)},i} || p_{\sigma_{i(j)},i})} = O(\frac{1}{\epsilon^2})$. The conclusion of Lemma~\ref{thm:regret_refined_singlearm} then follows.

%


\section{Additional Experimental Results} \label{app:add_sim_results}

This appendix presents additional experimental results using both synthetic and real-world data, to further evaluate the performance of the proposed algorithms, in comparison to the state-of-the-art schemes.

Because the simulation complexity is $O(K^2 T)$, we adjust the number of independent experiments to save time, and run 500, 100, and 10 independent experiments  for $K < 10$, $10 \leq K \leq 100$, and $K > 100$, respectively.
In each experiment, the arms are randomly shuffled to prevent algorithms from exploiting special structures of the preference matrix, except for ECW-RMED in the ``Gap'' dataset, where we run experiments for both fixed and shuffled arm orders.

\subsection{Datasets}

\subsubsection{Condorcet Dueling Bandits}

\textbf{Cyclic:} A dataset adopted from \cite{Komiyama2015COLT:DB}, where the preference matrix is given by Table~\ref{tab:synthetic_cyclic}. In this dataset, Arm 1 is the Condorcet winner with $p_{1j} = 0.6$, and the other arms have a cyclic preference relationship with one arm beating another with high probability. Strong transitivity does not hold in this dataset, and the Condorcet winner is not necessary the best arm when comparing all other arms with a fixed arm, i.e., $p_{1i} < \max_{j} p_{ji}$ for $i \neq 1$.

\textbf{StrongBorda:} A 5-armed dueling bandit with a preference matrix in Table~\ref{tab:synthetic_strongborda}. In addition to  a Condorcet winner, there is a strong Borda winner, which is not the Condorcet winner, but beats the other arms with high probability. To validate the correctness of algorithms, we still treat this problem as a Condorcet dueling bandit problem and try to find the Condorcet winner, although a Borda winner may be more appropriate in this case \cite{Jamieson2015AISTAT:SDB}.

\textbf{ArXiv:} A 6-armed dueling bandits with a preference matrix given in Table~\ref{tab:arxiv_6rankers}, which is derived by conducting pairwise interleaving experiments \cite{Yue2011ICML:BTM} based on the search engine of ArXiv.org.

\textbf{Sushi:} A 16-armed dueling bandits with a preference matrix derived by \cite{Komiyama2015COLT:DB,Komiyama2016ICML:CWRMED} from a Sushi preference dataset, where the matrix can be found in the appendix of \cite{Komiyama2016ICML:CWRMED}.

\begin{table*}[thbp]
\small
\begin{center}
  \hspace{-0.5cm}
  \begin{minipage}[b]{.2\textwidth}
        \tabcolsep 4pt \caption{Cyclic}
        \vspace{-0.6cm}
        \begin{center}
        \def\temptablewidth{1.1\textwidth}
        {\rule{\temptablewidth}{1pt}}
        \begin{tabular}{c|cccc}
          & 1 &2 &3 &4  \\
        \hline
        1 & 0.5 &0.6 &0.6 &0.6  \\
        2 & 0.4 &0.5 &0.9 &0.1  \\
        3 & 0.4  &0.1 &0.5 &0.9 \\
        4 & 0.4  &0.9 &0.1 & 0.5
        \end{tabular}
        {\rule{\temptablewidth}{1pt}}
        \end{center}
        \label{tab:synthetic_cyclic}
        \end{minipage}
        \hspace{.4cm}
    \begin{minipage}[b]{.28\textwidth}
        \tabcolsep 4pt \caption{StrongBorda}
       \vspace{-0.5cm}
        \begin{center}
        \def\temptablewidth{1.18\textwidth}
        {\rule{\temptablewidth}{1pt}}
        \begin{tabular}{c|ccccc}
          & 1 &2 &3 &4 &5  \\
        \hline
        1 & 0.5 &0.55 &0.55 &0.55 &0.55  \\
        2 & 0.45 &0.5 &0.95 &0.95 &0.95  \\
        3 & 0.45  &0.05 &0.5 &0.95 & 0.95\\
        4 & 0.45  &0.05 &0.05 & 0.5 & 0.95\\
        5 & 0.45  &0.05 &0.05 & 0.05 & 0.5\\
        \end{tabular}
        {\rule{\temptablewidth}{1pt}}
        \end{center}
        \label{tab:synthetic_strongborda}
        \end{minipage}
        \hspace{0.8cm}
           \begin{minipage}[b]{.4\textwidth}
        \tabcolsep 4pt \caption{ArXiv}
       \vspace{-0.2cm}
        \begin{center}
        \def\temptablewidth{.98\textwidth}
        {\rule{\temptablewidth}{1.0pt}}
        \begin{tabular}{c | c c c c c c}
          & 1 &2 &3 &4 &5 & 6 \\
        \hline
        1 & 0.50 & 0.55 & 0.55 & 0.54 & 0.61 & 0.61  \\
        2 & 0.45 & 0.50 & 0.55 & 0.55 & 0.58 & 0.60  \\
        3 & 0.45 & 0.45 & 0.50 & 0.54 & 0.51 & 0.56 \\
        4 & 0.46 & 0.45 & 0.46 & 0.50 & 0.54 & 0.50 \\
        5 & 0.39 & 0.42 & 0.49 & 0.46 & 0.50 & 0.51\\
        6 & 0.39 & 0.40 & 0.44 & 0.50 & 0.49 & 0.50
        \end{tabular}
        {\rule{\temptablewidth}{1.0pt}}
        \end{center}
        \label{tab:arxiv_6rankers}
        \end{minipage}
 \end{center}
\vspace{-0.5cm}
\end{table*}

\subsubsection{Non-Condorcet Dueling Bandits}

\textbf{Non-Condorcet Cyclic:} A 9-armed dueling bandit with a preference matrix given by Table~\ref{tab:noncondorcet_cyclic_9}. In this dataset, there is a cyclic preference relationship among arms, and the arms can be divided into 3 groups with Copeland scores 6, 4, and 2, respectively. Due to this cyclic symmetry, there are multiple Copeland winners with exactly the same performance.

\textbf{Non-Condorcet StrongBorda:} Similar to the Condorcet dueling bandits, we consider this case where there is a Borda winner different from the Copeland winner. In this non-Condorcet setting, we even assume that when comparing the Copeland winner and the Borda winner, the user prefers the Borda winner to the Copeland winner with high probability. Again, this is a extreme case used to the validate the correctness of algorithms.

\textbf{Gap:} A dataset adopted from \cite{Komiyama2016ICML:CWRMED}, which is a 5-armed dueling bandits with a preference matrix given by Table~\ref{tab:synthetic_gap}. In this dataset, the ratio between the regret bound for ECW-RMED and the regret bound for the optimal CW-RMED algorithm is very large.

\textbf{500-Armed Dueling Bandits:} The 500-armed dueling bandit constructed in \cite{Zoghi2015NIPS:CDB}, where there are three Copeland winners that form a cycle and each has a Copeland score 498, and the other arms have Copeland scores ranging from 0 to 496. We use this dataset to evaluate the scaling behaviors of the algorithms. 

\begin{table*}[thbp]
\small
\begin{center}
\begin{minipage}[b]{.32\textwidth}
        \tabcolsep 4pt \caption{Gap}
        \begin{center}
        \def\temptablewidth{0.91\textwidth}
        {\rule{\temptablewidth}{1.0pt}}
       \begin{tabular}{c | c c c c c  }
          & 1 &2 &3 &4 &5 \\
        \hline
        1 & 0.5 &0.8 &0.8 &0.51 &0.2 \\
        2 & 0.2 &0.5 &0.8 &0.2 &0.8 \\
        3 & 0.2 &0.2 &0.5 &0.8 &0.8 \\
        4 &0.49 &0.8 &0.2 &0.5 &0.2 \\
        5 &0.8 &0.2 &0.2 &0.8 &0.5 \\
        \end{tabular}
        {\rule{\temptablewidth}{1.0pt}}
        \end{center}
        \label{tab:noncondorcet_strongborda}
        \end{minipage}
        \hspace{.5cm}
    \begin{minipage}[b]{.42\textwidth}
        \tabcolsep 4pt \caption{Non-Condorcet StrongBorda}
        \begin{center}
        \def\temptablewidth{0.91\textwidth}
        {\rule{\temptablewidth}{1.0pt}}
       \begin{tabular}{c | c c c c c c }
          & 1 &2 &3 &4 &5 & 6\\
        \hline
        1 & 0.5 & 0.05 & 0.55 & 0.55 & 0.55 & 0.55  \\
        2 & 0.95 & 0.5 & 0.95 & 0.95 & 0.45 & 0.45  \\
        3 & 0.45 & 0.05 & 0.5 & 0.95 & 0.95 & 0.95 \\
        4 & 0.45 & 0.05 & 0.05 & 0.5 & 0.95 & 0.95 \\
        5 & 0.45 & 0.55 & 0.05 & 0.05 & 0.5 & 0.95\\
        6 & 0.45 & 0.55 & 0.05 & 0.05 & 0.05 & 0.5
        \end{tabular}
        {\rule{\temptablewidth}{1.0pt}}
        \end{center}
        \label{tab:synthetic_gap}
        \end{minipage}
    \begin{minipage}[b]{.52\textwidth}
        \tabcolsep 4pt \caption{Non-Condorcet Cyclic}
        \begin{center}
        \def\temptablewidth{0.906\textwidth}
        {\rule{\temptablewidth}{1.0pt}}
       \begin{tabular}{c | c c c c c c c c c}
          & 1 &2 &3 &4 &5 & 6 &7 &8 &9\\
        \hline
        1 & 0.5 & 0.4 & 0.6 & 0.1 & 0.6 & 0.6 & 0.6 & 0.6 & 0.6 \\
        2 & 0.6 & 0.5 & 0.4 & 0.6 & 0.1 & 0.6 & 0.6 & 0.6 & 0.6 \\
        3 & 0.4 & 0.6 & 0.5 & 0.6 & 0.6 & 0.1 & 0.6 & 0.6 & 0.6\\
        4 & 0.9 & 0.4 & 0.4 & 0.5 & 0.1 & 0.9 & 0.6 & 0.6 & 0.4 \\
        5 & 0.4 & 0.9 & 0.4 & 0.9 & 0.5 & 0.1 & 0.4 & 0.6 & 0.6\\
        6 & 0.4 & 0.4 & 0.9 & 0.1 & 0.9 & 0.5 & 0.6 & 0.4 & 0.6\\
        7 & 0.4 & 0.4 & 0.4 & 0.4 & 0.6 & 0.4  &0.5 & 0.1 & 0.9 \\
        8 & 0.4 & 0.4 & 0.4 & 0.4 & 0.4 & 0.6  & 0.9 & 0.5 & 0.1\\
        9 & 0.4 & 0.4 & 0.4 & 0.6 & 0.4 & 0.4   & 0.1 & 0.9 & 0.5 \\
        \end{tabular}
        {\rule{\temptablewidth}{1.0pt}}
        \end{center}
        \label{tab:noncondorcet_cyclic_9}
        \end{minipage}
 \end{center}
\vspace{-0.1cm}
\end{table*}

\subsubsection{MSLR (Condorcet and non-Condorcet)}

For the MSLR dataset, we have evaluated the algorithms in the two 5-arm cases in Section~\ref{sec:sim_results}, where the preference matrices are given by Tables~\ref{tab:mslr_5_condorcet} and \ref{tab:mslr_5_non_condorcet}. In this appendix, we also run experiments for larger scale dueling bandits, $K = 16$ and $32$, consisting of arms randomly selected from the 136 rankers (in the \emph{MSLR\_Informational\_PMat.npz} file on \url{http://bit.ly/nips15data}, \cite{Zoghi2015NIPS:CDB}; to see the asymptotic performance within an acceptable $T$, we eliminate the arms with $|p_{ij} - 1/2| < 0.003$).
The indices of the chosen rankers and their Copeland scores are presented in Table~\ref{tab:mslr_all}, where the arms are indexed from 1.
Note that due to the randomness, our chosen rankers are likely different from those in \cite{Komiyama2015COLT:DB} and \cite{Komiyama2016ICML:CWRMED}, even for the same $K$.

\begin{table*}[thbp]
\small
\begin{center}
  \hspace{-0.5cm}
    \begin{minipage}[b]{.4\textwidth}
        \tabcolsep 4pt \caption{MSLR {\small ($K = 5$, Condorcet)}}
       \vspace{-0.2cm}
        \begin{center}
        \def\temptablewidth{.98\textwidth}
        {\rule{\temptablewidth}{1pt}}
        \begin{tabular}{c|ccccc}
          & 1 &2 &3 &4 &5  \\
        \hline
        1 & 0.500  &  0.535  & 0.613  & 0.757  &0.765 \\
        2 & 0.465 & 0.500    & 0.580  & 0.727  &0.738 \\
        3 & 0.387 & 0.420  & 0.500    & 0.659  &0.669 \\
        4 & 0.243 & 0.273  & 0.341  & 0.500    &0.510 \\
        5 & 0.235 & 0.262  & 0.331  & 0.490  &0.500
        \end{tabular}
        {\rule{\temptablewidth}{1pt}}
        \end{center}
        \label{tab:mslr_5_condorcet}
        \end{minipage}
        \hspace{1cm}
           \begin{minipage}[b]{.4\textwidth}
        \tabcolsep 4pt \caption{MSLR {\small($K = 5$, non-Condorcet)}}
       \vspace{-0.2cm}
         \begin{center}
        \def\temptablewidth{.98\textwidth}
        {\rule{\temptablewidth}{1pt}}
        \begin{tabular}{c|ccccc}
          & 1 &2 &3 &4 &5  \\
        \hline
        1 & 0.500  &0.484  &0.519  &0.529  &0.518 \\
        2 & 0.516 & 0.500  &0.481  &0.530  &0.539 \\
        3 & 0.481 & 0.519&  0.500  &0.504  &0.512 \\
        4 & 0.471 & 0.470&  0.496 & 0.500  &0.503 \\
        5 & 0.482 & 0.461  & 0.488  &0.497  &0.500 \\
        \end{tabular}
        {\rule{\temptablewidth}{1pt}}
        \end{center}
        \label{tab:mslr_5_non_condorcet}
        \end{minipage}
 \end{center}
\vspace{-0.5cm}
\end{table*}

\begin{table}[thbp]
\small
\begin{center}
        \tabcolsep 4pt \caption{MSLR {\small ($K = 16$ and $32$, arms are indexed from 1) } }
       \vspace{-0.2cm}
        \begin{center}
        \def\temptablewidth{.92\textwidth}
        {\rule{\temptablewidth}{1.0pt}}
        \begin{tabular}{c  c c}
        Subset name  & Chosen rankers (Copeland score) & Winners (Copeland score) \\
        \hline
        K = 16, Condorcet & {\tiny \begin{tabular}{@{}c@{}}10(0), 22(6), 36(12), 58(9), 59(10), 66(5), 67(3), 68(4), \\ 77(2), 98(1), 109(11), 112(7), 115(15), 116(13), 117(8), 125(14)\end{tabular} } & 115(15) \\
        \hline
        K = 32, Condorcet & {\tiny \begin{tabular}{@{}c@{}}7(9), 21(26), 24(12), 28(5), 32(13), 35(24), 37(15), 38(19),
                                                         \\43(6), 44(4), 45(10), 48(21), 52(3), 56(28), 61(29), 68(8),
                                                         \\71(27), 73(20), 82(14), 85(25), 87(17), 91(11), 96(1), 99(2),
                                                         \\100(0), 105(7), 112(16), 115(31), 117(18), 121(30), 123(22), 133(23)\end{tabular} } & 115(31)  \\
        \hline
        K = 16, non-Condorcet  & {\tiny \begin{tabular}{@{}c@{}}2(6), 21(11), 42(4), 60(14), 67(5), 70(8), 79(3), 90(12),
                                                         \\ 91(7), 98(0), 99(1), 103(2), 106(14), 109(10), 117(9), 130(14) \end{tabular} } & \small{60(14), 106(14), 130(14)} \\
        \hline
        K = 32, non-Condorcet  & {\tiny \begin{tabular}{@{}c@{}}4(8), 8(4), 12(24), 16(1), 20(0), 24(13), 28(7), 32(14), \\
                                                                36(25), 40(26), 44(6), 48(18), 52(5), 56(28), 60(30), 64(22), \\
                                                                68(10), 72(16), 76(19), 80(21), 84(12), 88(17), 92(9), 96(3), \\
                                                                100(2), 104(11), 108(23), 112(15), 116(28), 120(30), 124(20), 128(29) \end{tabular} } & 60(30), 120(30)
        \end{tabular}
        {\rule{\temptablewidth}{1.0pt}}
        \end{center}
        \label{tab:mslr_all}
 \end{center}
\vspace{-0.5cm}
\end{table}

\subsection{Performance Comparisons}

We first analyze the regret performance of algorithms for Condorcet and non-Condorcet dueling bandits, respectively, and then discuss their robustness with respect to preference matrix and delayed feedback, as well as the impact of RUCB/RLCB elimination.
\subsubsection{Condorcet Dueling Bandits}

Fig.~\ref{fig:regret_condorcet} shows the cumulative regret of all algorithms in Condorcet dueling bandits. From Figs.~\ref{fig:Synthetic_Cyclic} to \ref{fig:MSLR_Informational_32_Condorcet}, we can see that D-TS and D-TS$^+$ achieve similar performance, because there is a unique winner in the system and few ties occur when choosing the first candidate.
Compared to existing algorithms, D-TS and D-TS$^+$ perform much better, except for the StrongBorda dataset, as discussed later.

Compared with the earlier trial RCS, our D-TS and D-TS$^+$ algorithms lead to more extensive utilization of TS and significantly reduce the regret. Although the RCS algorithm achieves better performance than RUCB and CCB in real-world datasets (Figs.~\ref{fig:ArXiv_6_rankers} to \ref{fig:MSLR_Informational_32_Condorcet}) by leveraging TS for selecting the first candidate, the improvement is limited. This is because RCS requires a ``100\%-pass'' when using ``majority voting'' to select the first candidate, i.e., the arm that beats all the other arms with respect to the samples, and randomly picks one if no such an arm exists. Thus, it may miss many opportunities to choose the Condorcet winner as the first candidate. In contrast, using RUCB-based elimination when choosing the first candidate, D-TS/D-TS$^+$ only need a simple ``majority voting'' rule. Thus, under D-TS/D-TS$^+$,  the Condorcet winner can be chosen as the first candidate even it is beaten by (a few) other arms with respect to the samples. In addition, without requiring ``100\%-pass'', D-TS and D-TS$^+$ directly apply to general Copeland dueling bandits. Moreover, by launching another round of sampling for selecting the second candidate, D-TS and D-TS$^+$ further reduce the regret.

Compared with RMED1 and ECW-RMED, D-TS and D-TS$^+$ still perform better. Note that without knowing the existence of a Condorcet winner, ECW-RMED performs slightly worse than RMED1, while achieving similar asymptotic performance. RMED1 has been shown to achieve the optimal asymptotic performance, in both asymptotic order $O(\log T)$ and \emph{coefficients} \cite{Komiyama2015COLT:DB} \footnote{The optimality of RMED1 is shown under another definition of regret \cite{Komiyama2015COLT:DB}, which depends on the probability-gap-to-the-winner (G2W). The trend based on the G2W regret is similar to that in this paper, except for the StrongBorda dataset.}. Interestingly, D-TS and D-TS$^+$ not only approach the similar asymptotic performance as RMED1 and ECW-RMED, but also achieve a smaller \emph{constant term}. This is because, at the beginning stage, when the empirical estimates for $p_{ij}$ deviate from the true value, RMED-type algorithms may temporally trap in exploring the non-winner arms and result in a very large constant term. In contrast, by TS, the best arm always has a positive probability to be explored \cite{Gopalan2014ICML:TS}, and will be identified as the winner much faster. This property makes TS-type algorithms better for the scenarios where the system statistics are not stationary and slowly varying over time.

For the StrongBorda dataset, Fig.~\ref{fig:Synthetic_StrongBorda} shows that D-TS and D-TS$^+$ can escape from the suboptimal comparisons and achieve very good performance compared to existing algorithms, except for (Condorcet-)SAVAGE. Somewhat surprisingly, SAVAGE performs best in this dataset with respect to the Copeland-score-based regret. This is because in this dataset, all arms except for the Condorcet winner (arm 1) are beaten by the Borda winner (arm 2) with a preference probability close to 1. Thus, arms 3 to 5 can be eliminated without causing too much loss by comparing with arm 2. This property in fact leads to a very small lower regret bound according to \cite{Komiyama2016ICML:CWRMED}. As an ``explore-then-exploit'' algorithm and with the awareness of the existence of a Condorcet winner, SAVAGE can eliminate the suboptimal arms quickly and achieve low regret. ECW-RMED does not perform very well in this dataset as its regret bound is much higher than the optimal lower bound. Note that these results do not conflict with the optimality of RMED1 in Condorcet dueling bandits with respect to the regret defined in \cite{Komiyama2015COLT:DB}. We can see that SAVAGE approaches the same asymptotic performance to RMED1 if we compare them based probability-gap-to-the-winner regret (for other datasets, the trend and relative relationship is similar for both definitions of regret, and the results are omitted here).

\begin{figure}[thbp]
\begin{center}
\subfigure[Cyclic]{\includegraphics[angle = 0,width = 0.49\linewidth]{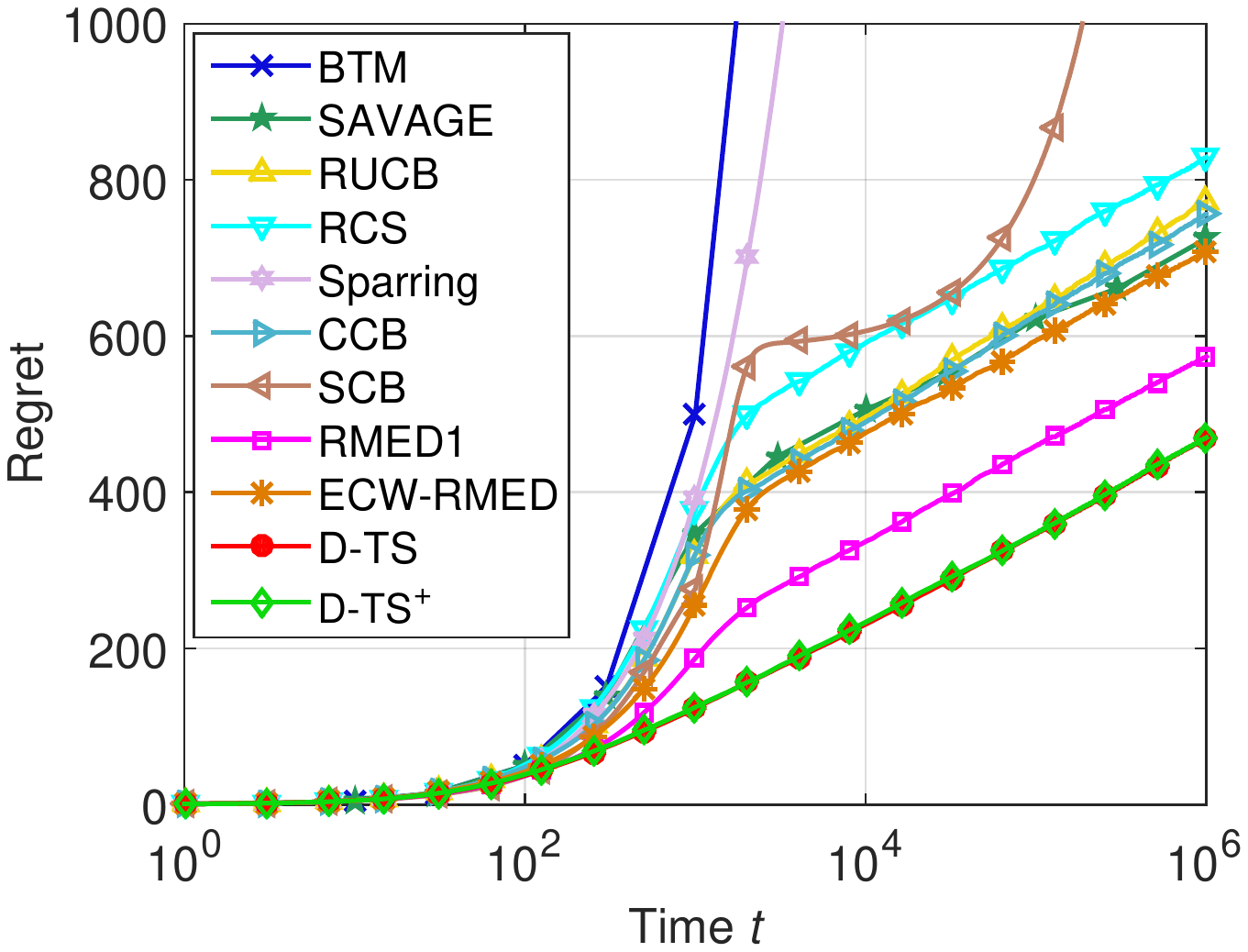}
\label{fig:Synthetic_Cyclic}}
\subfigure[StrongBorda]{\includegraphics[angle = 0,width = 0.49\linewidth]{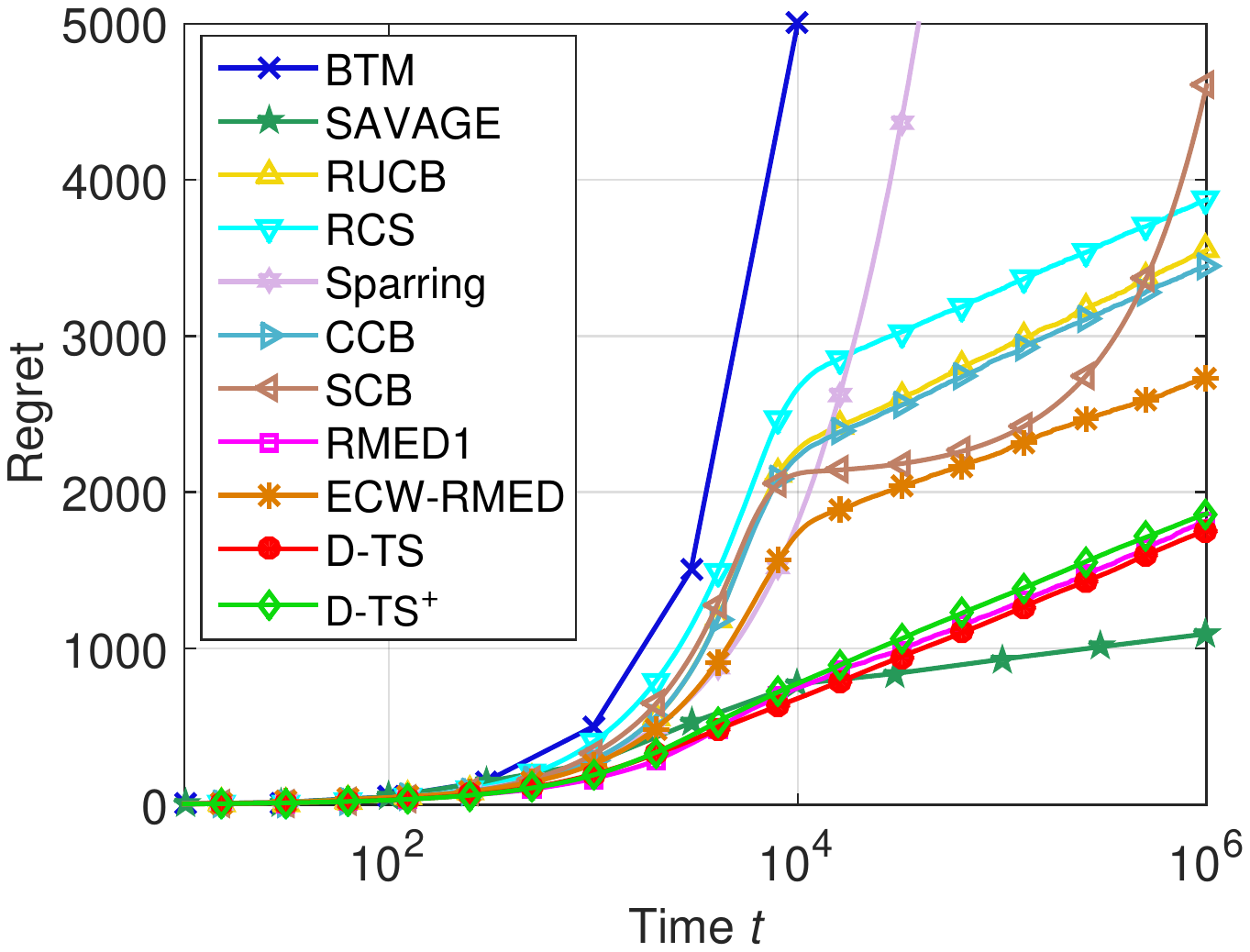}
\label{fig:Synthetic_StrongBorda}}
\subfigure[ArXiv]{\includegraphics[angle = 0,width = 0.49\linewidth]{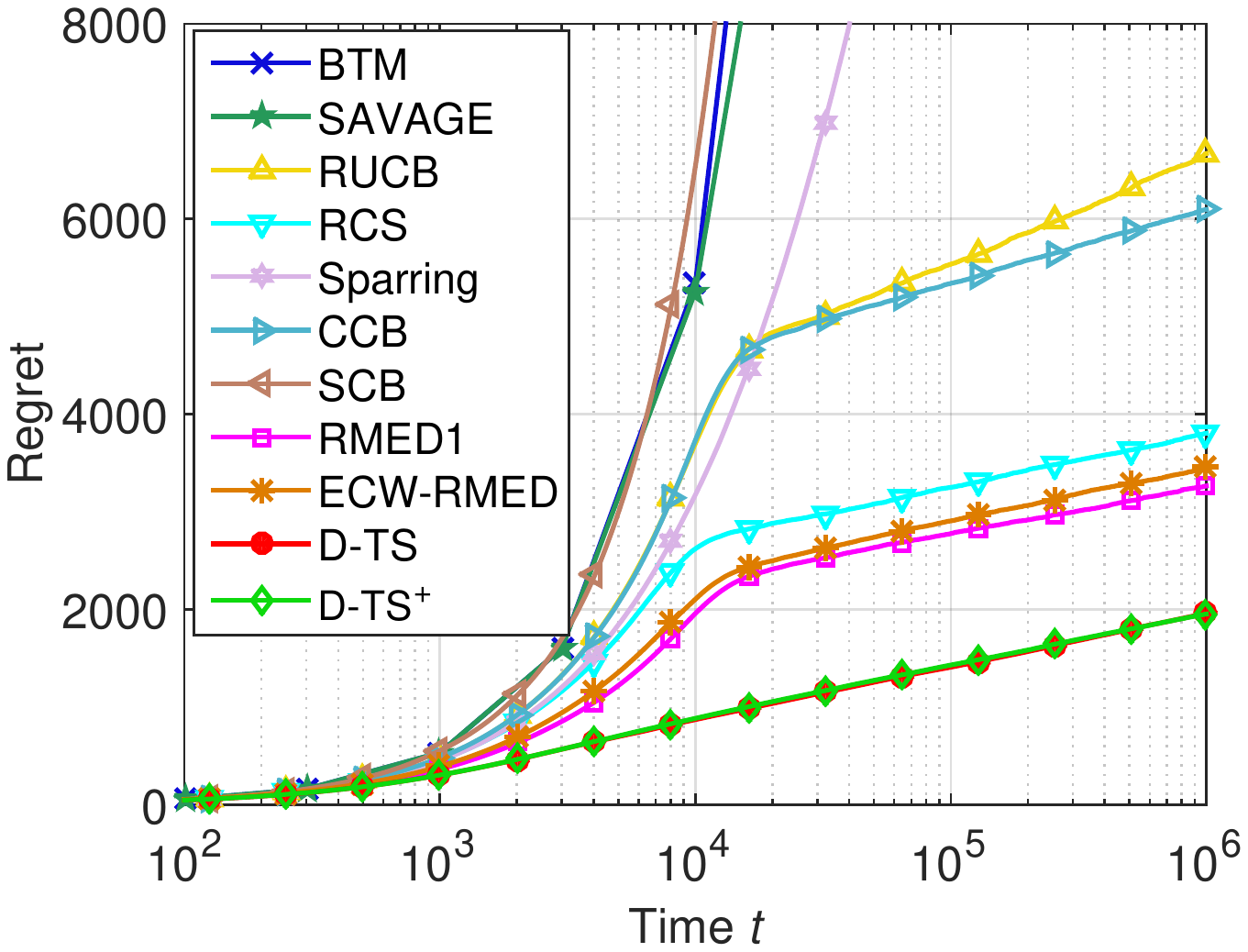}
\label{fig:ArXiv_6_rankers}}
\subfigure[Sushi]{\includegraphics[angle = 0,width = 0.49\linewidth]{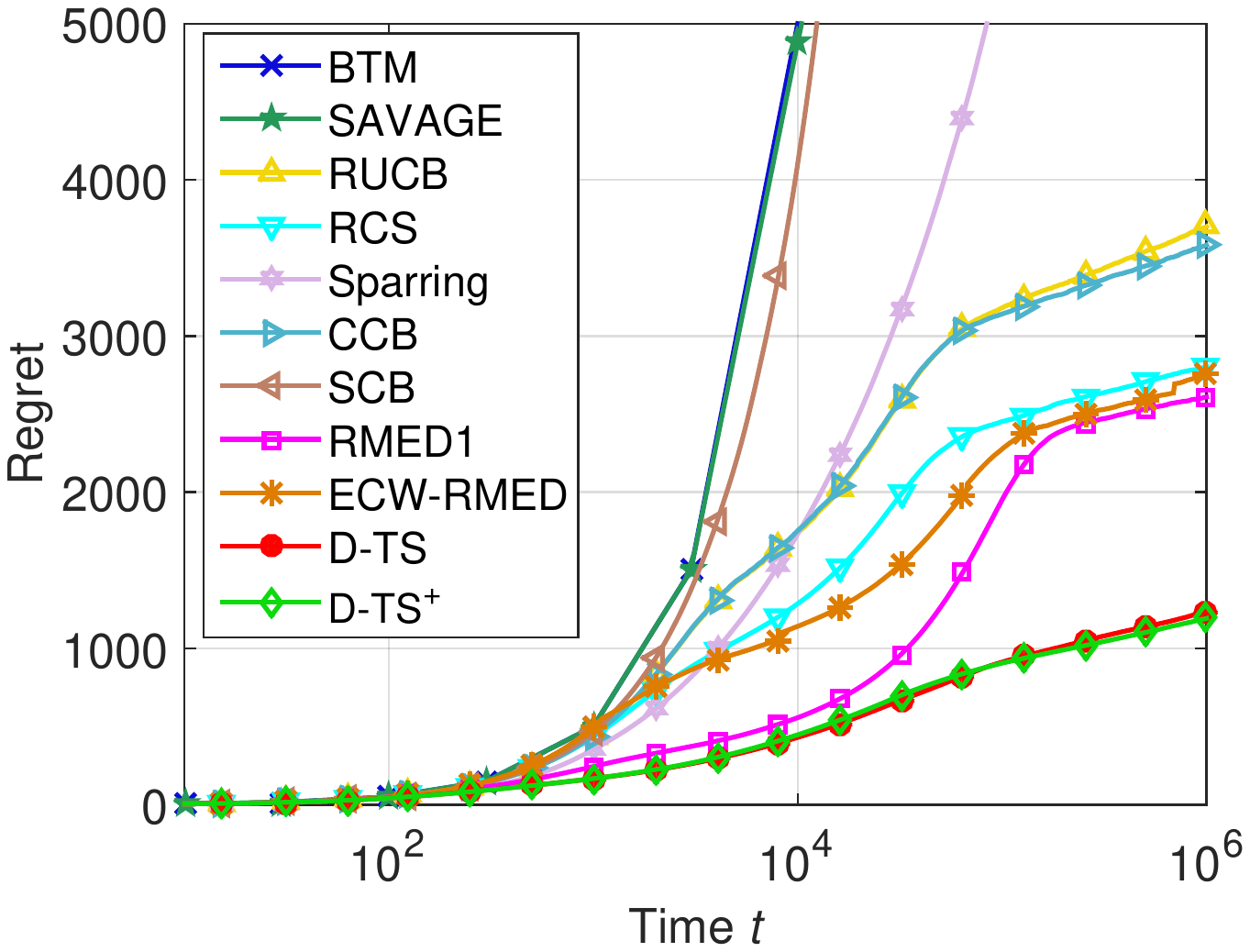}
\label{fig:Sushi}}
\subfigure[MSLR ($K = 16$, Condorcet)]{\includegraphics[angle = 0,width = 0.49\linewidth]{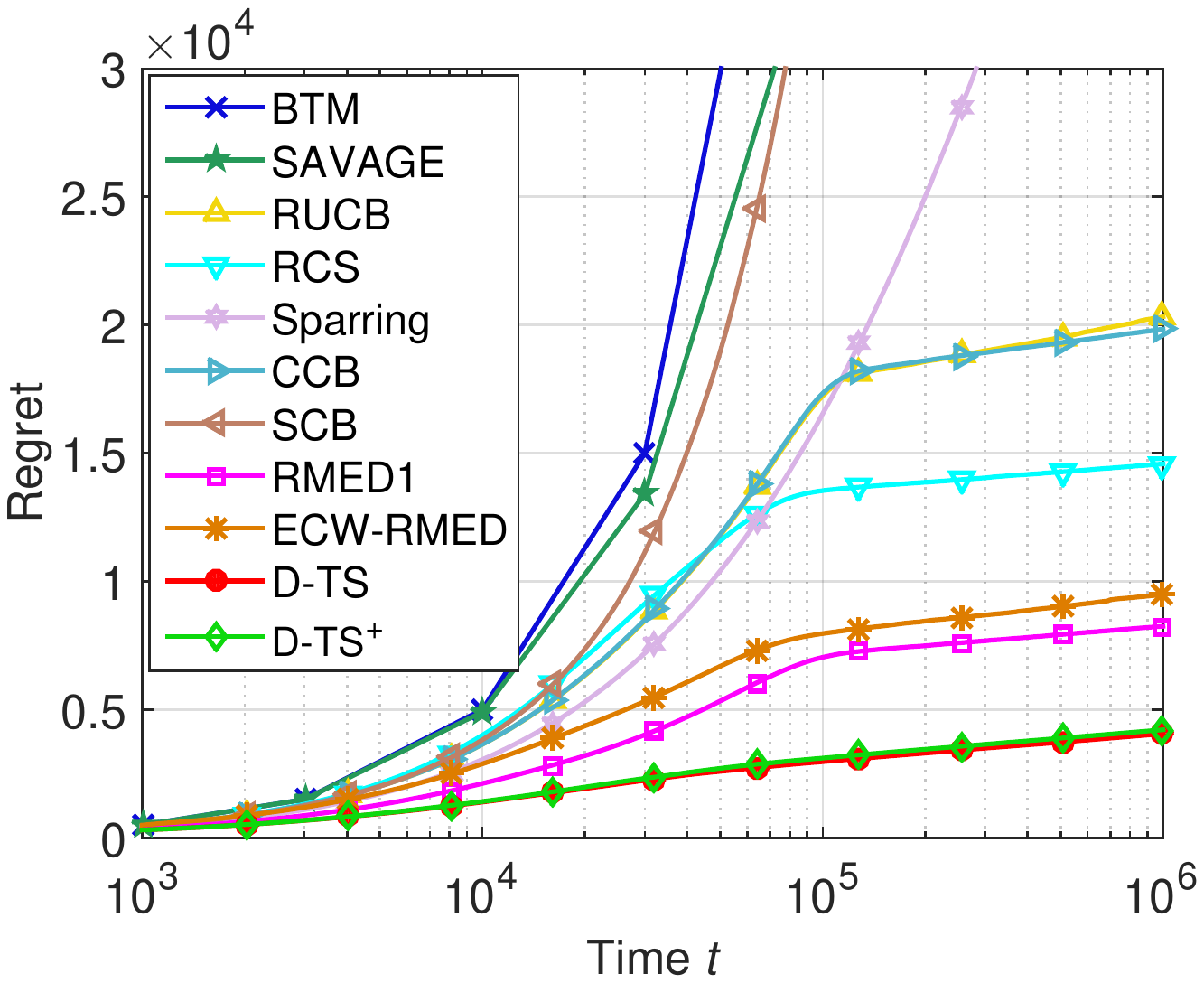}
\label{fig:MSLR_Informational_16_Condorcet}}
\subfigure[MSLR ($K = 32$, Condorcet)]{\includegraphics[angle = 0,width = 0.49\linewidth]{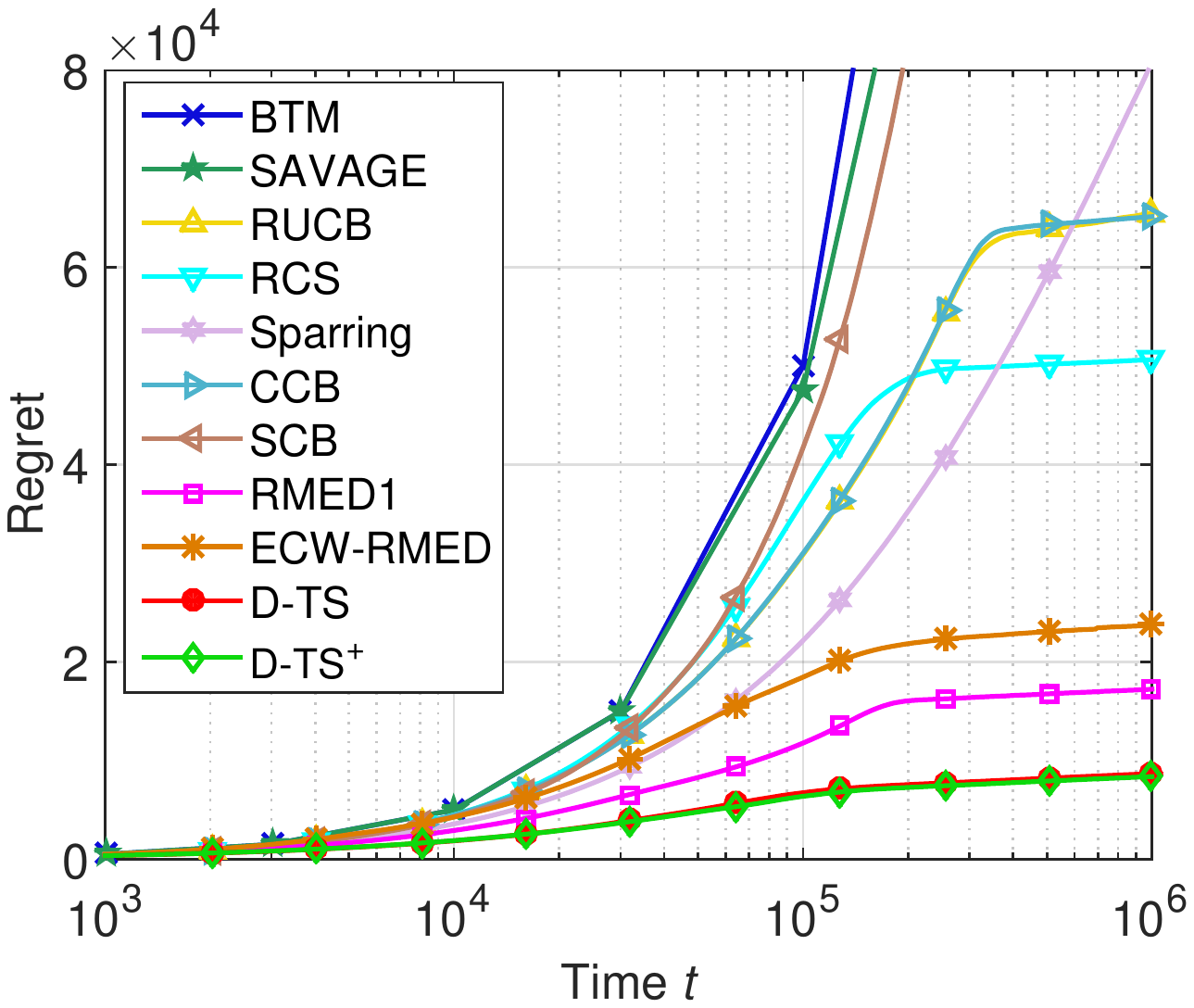}
\label{fig:MSLR_Informational_32_Condorcet}}
\caption{Cumulative regret in Condorcet dueling bandits.}
\label{fig:regret_condorcet}
\end{center}
\end{figure}


\subsubsection{Non-Condorcet Dueling Bandits}

Fig.~\ref{fig:regret_noncondorcet} presents the cumulative regret in non-Condorcet dueling bandits. Note that one simulation in the large scale 500-armed dueling bandit takes very long time, and we only compare CCB, SCB, ECW-RMED, and D-TS, for this dataset.

From Fig.~\ref{fig:regret_noncondorcet}, we can see that the regret of algorithms dedicated to Condorcet dueling bandits, including BTM, RUCB, RCS, and RMED1, grows rapidly, because these algorithms keep exploring for the Condorcet winner, which does not exist in these datasets. The Sparring algorithm, whose regret is not theoretically guaranteed, also results in a very large regret in non-Condorcet dueling bandits. The SAVAGE algorithm requires to explore all pairs, and usually leads to large regret.

Compared D-TS and D-TS$^+$ with the UCB-type algorithms for general Copeland dueling bandits,  CCB and SCB, we can see that D-TS and D-TS$^+$ perform much better. In particular, as shown in Fig.~\ref{fig:Synthetic_nonCondorcet_500} for the 500-armed scenario, D-TS still achieves much lower regret than SCB, the scalable version of CCB for large scale systems. 

\begin{figure}[thbp]
\begin{center}
\subfigure[Non-Condorcet Cyclic]{\includegraphics[angle = 0,width = 0.49\linewidth]{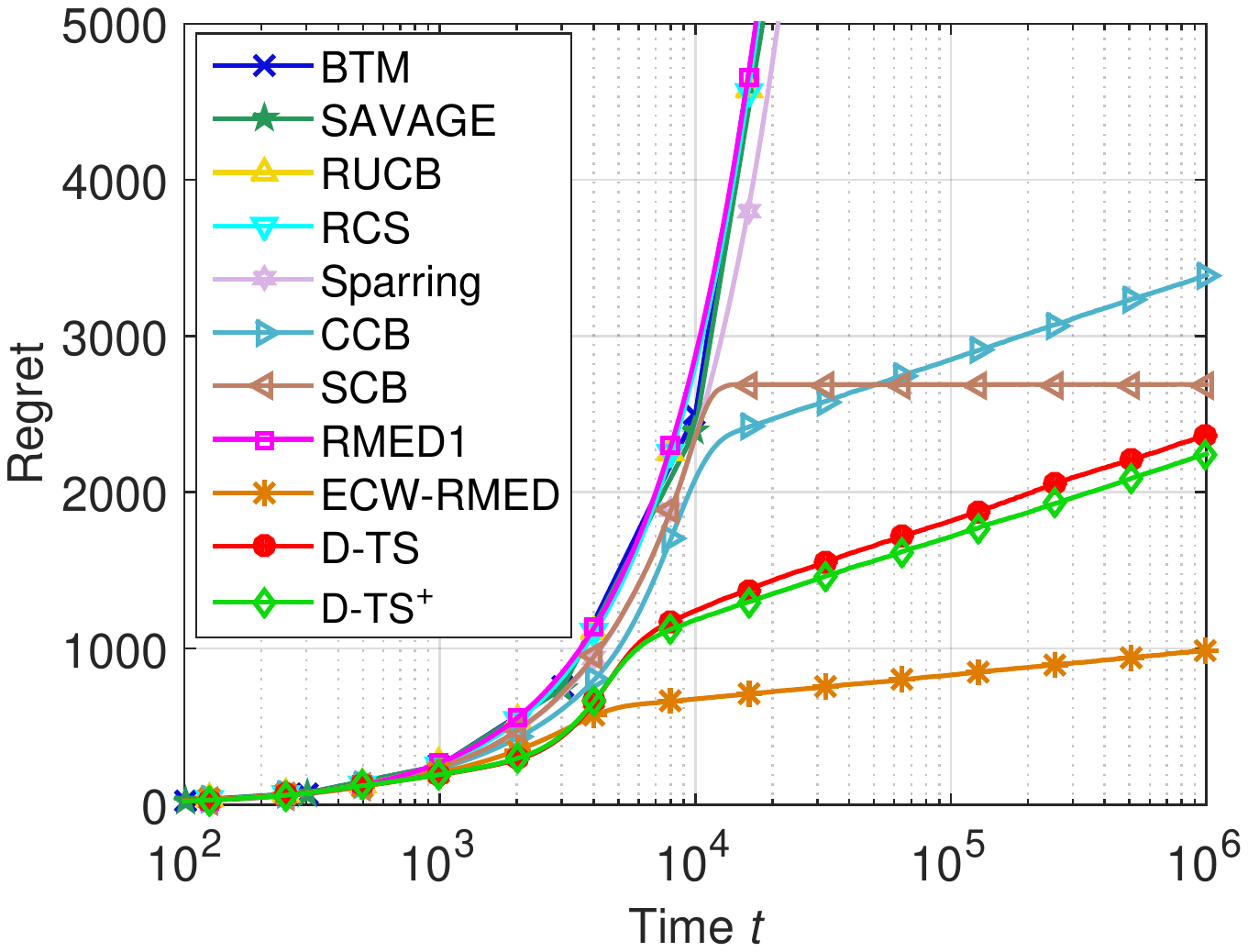}
\label{fig:Synthetic_nonCondorcet_Cyclic_9}}
\subfigure[Non-Condorcet StrongBorda]{\includegraphics[angle = 0,width = 0.49\linewidth]{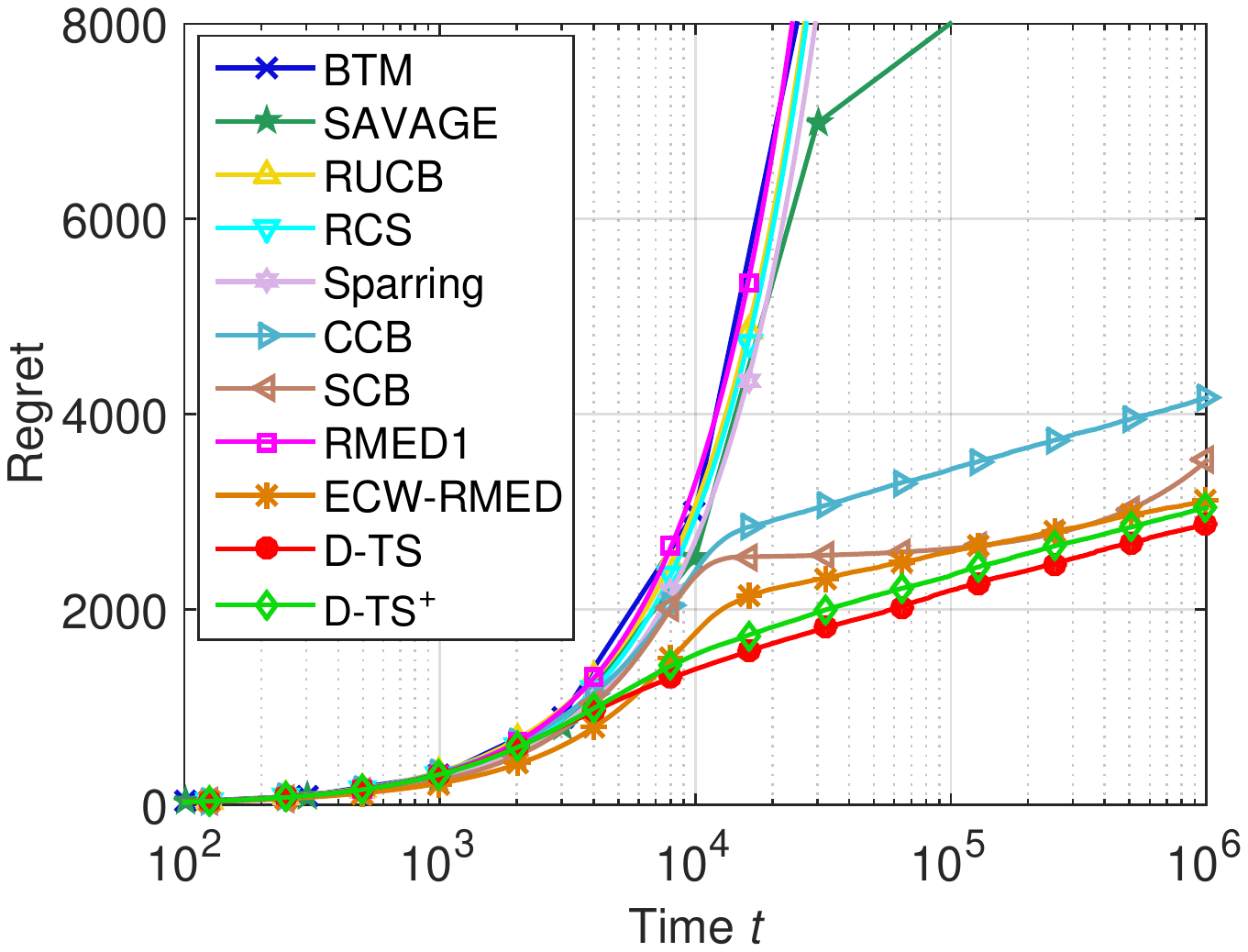}
\label{fig:Synthetic_nonCondorcet_StrongBorda}}
\subfigure[Gap]{\includegraphics[angle = 0,width = 0.49\linewidth]{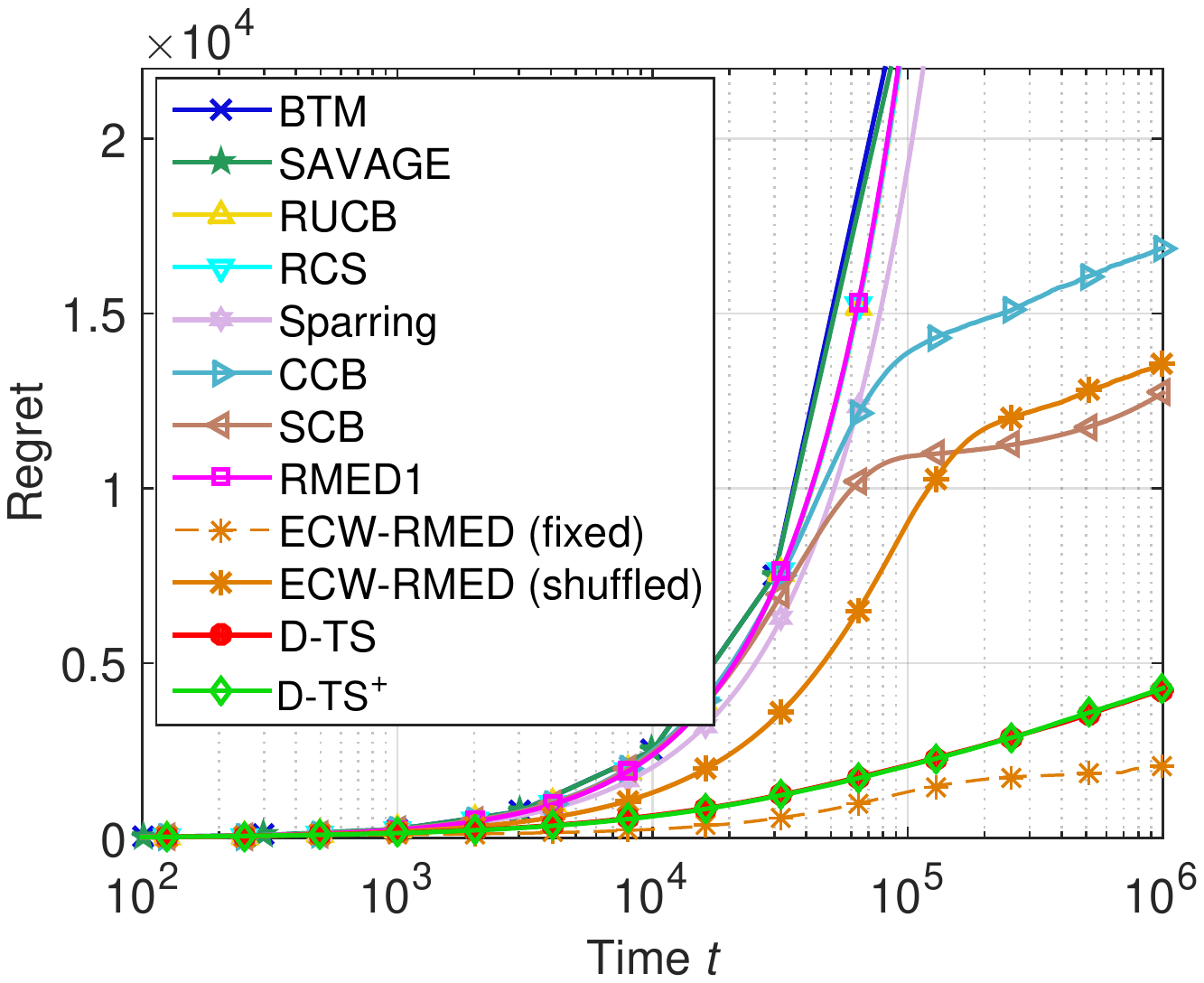}
\label{fig:Synthetic_Gap}}
\subfigure[500-Armed non-Condorcet Dueling Bandits]{\includegraphics[angle = 0,width = 0.49\linewidth]{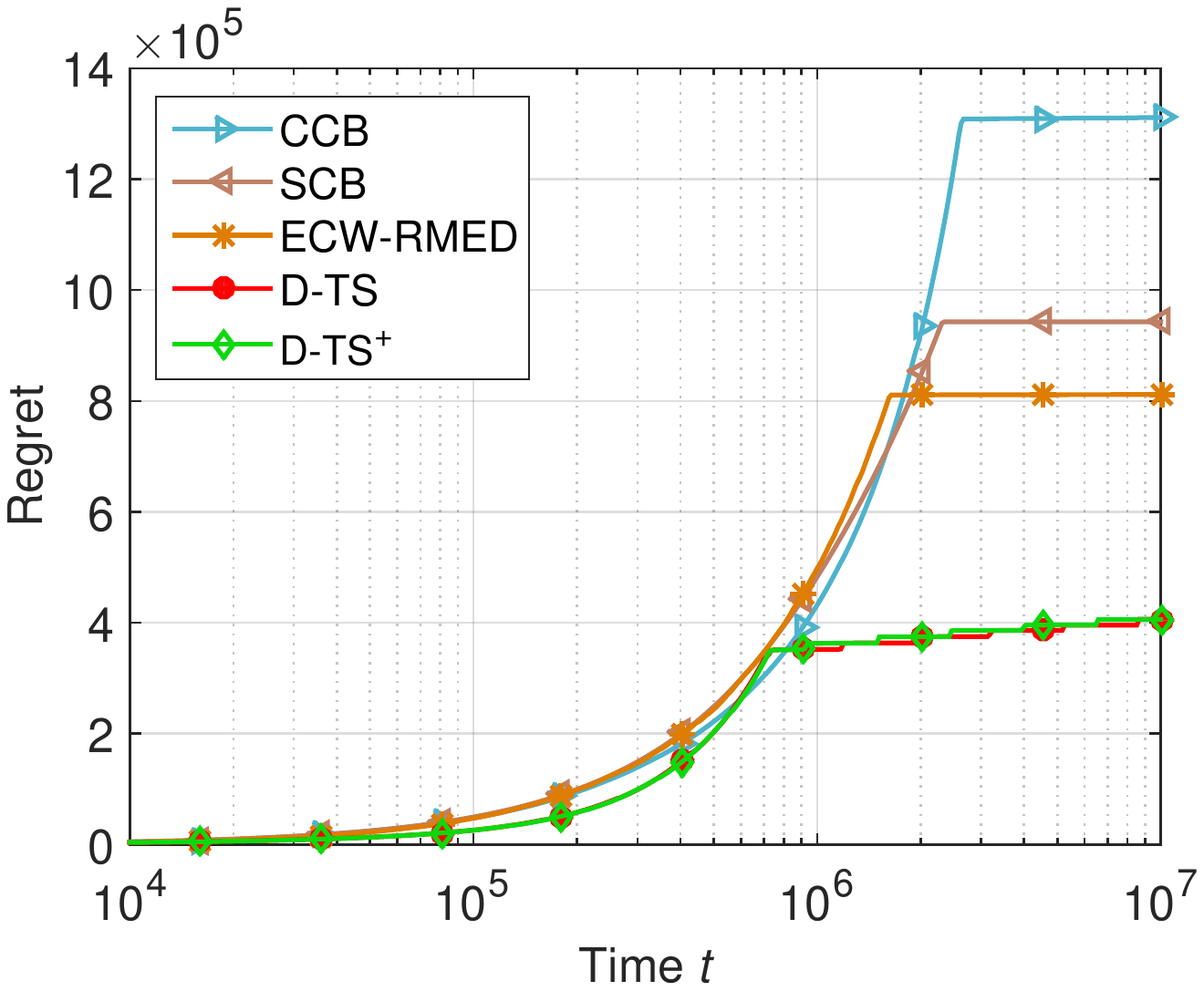}
\label{fig:Synthetic_nonCondorcet_500}}
\subfigure[MSLR ($K = 16$, non-Condorcet)]{\includegraphics[angle = 0,width = 0.49\linewidth]{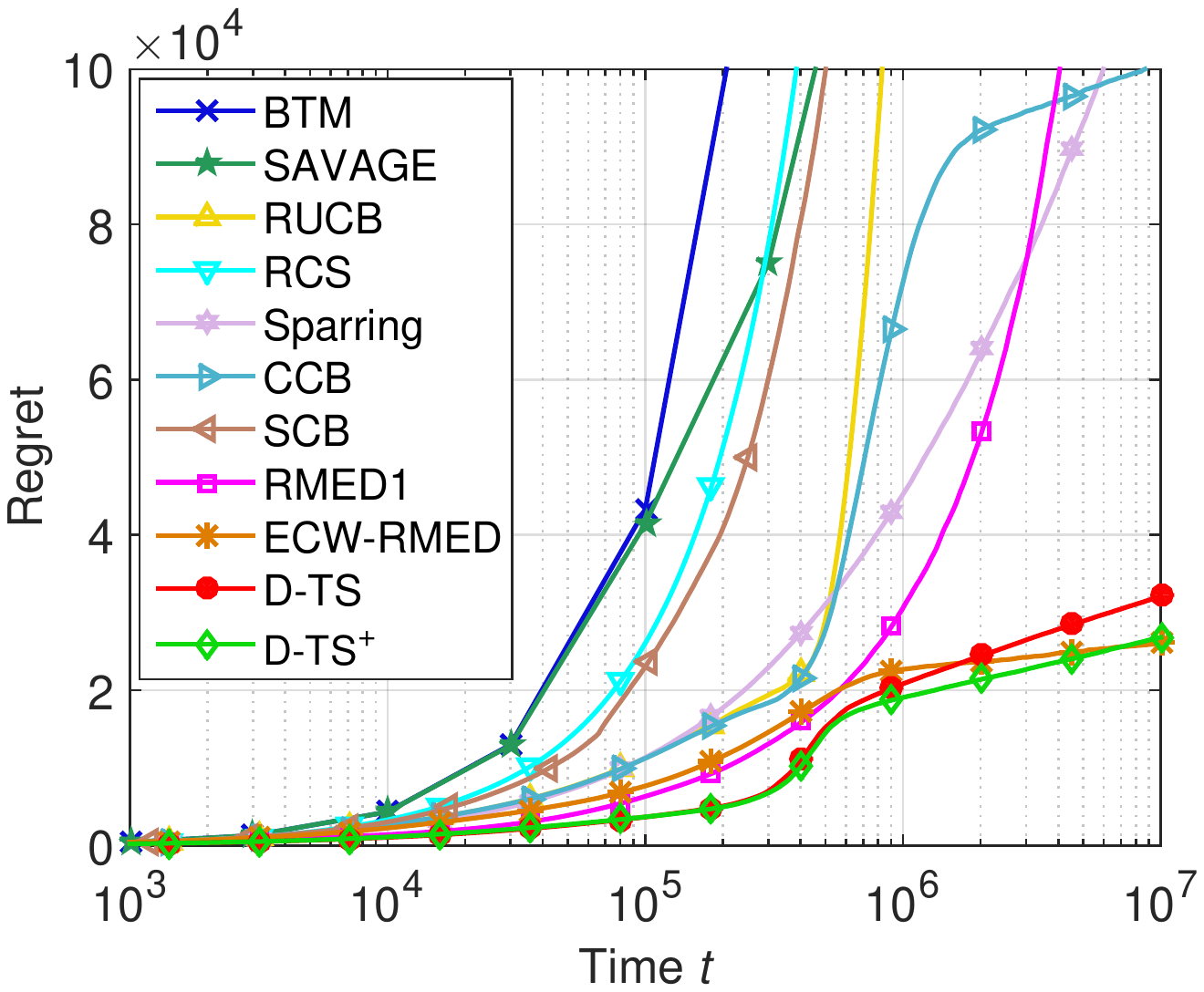}
\label{fig:MSLR_Informational_16_non_Condorcet}}
\subfigure[MSLR ($K = 32$, non-Condorcet)]{\includegraphics[angle = 0,width = 0.49\linewidth]{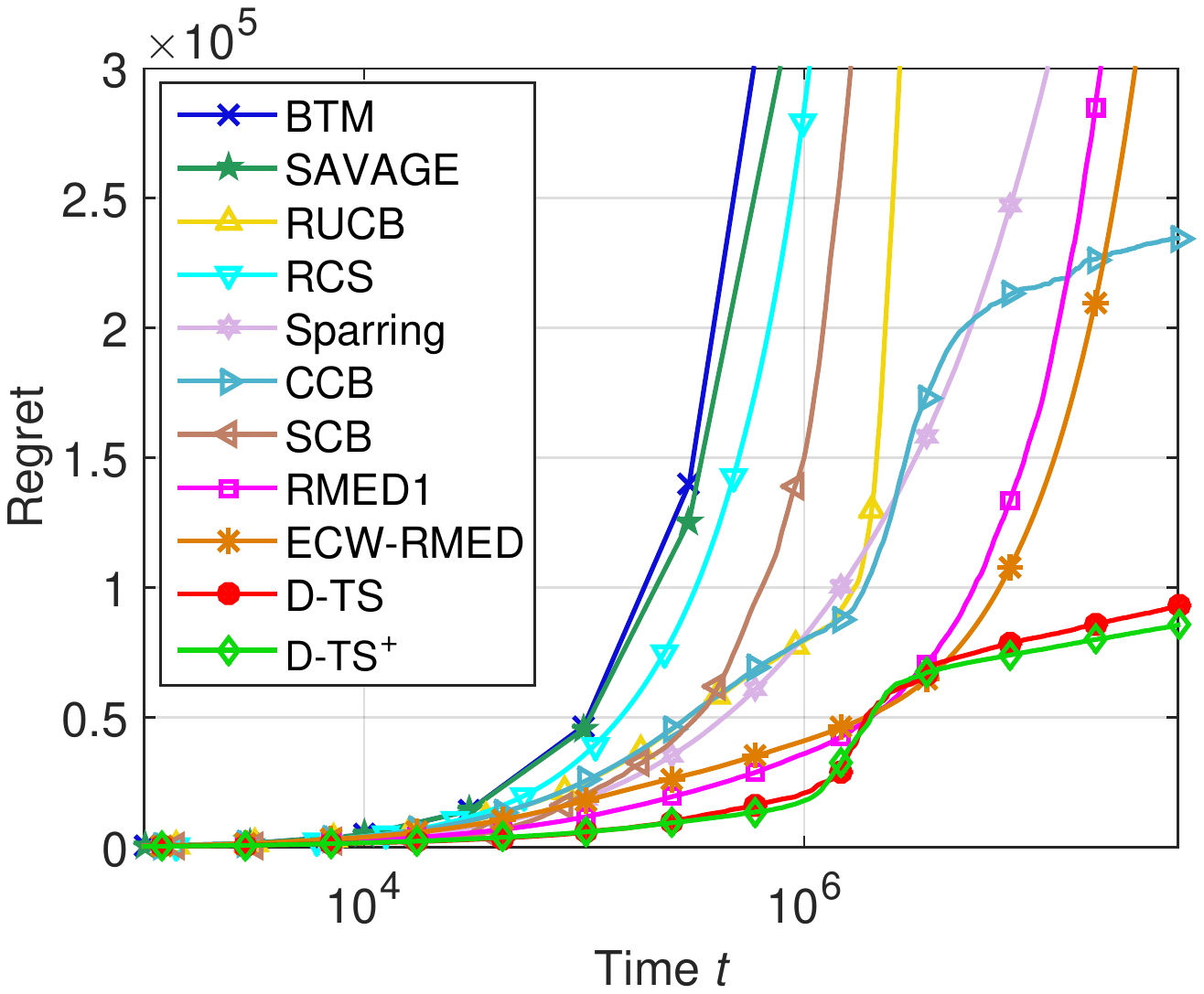}
\label{fig:MSLR_Informational_32_non_Condorcet}}
\caption{Cumulative regret in non-Condorcet dueling bandits.}
\label{fig:regret_noncondorcet}
\end{center}
\end{figure}

Compared with the recently developed algorithm ECW-RMED, D-TS and D-TS+ usually converge faster to the asymptotic regime and thus can achieve smaller regret for small to relatively large $T$. In theory, the optimal CW-RMED achieves the best asymptotic performance. ECW-RMED is its efficient but approximate implementation, with a larger coefficient in terms of asymptotic performance. Therefore, in the asymptotic regime, ECW-RMED could outperform D-TS$^+$ in certain scenarios. In practice, however, we notice that D-TS$^+$ could perform better than ECW-RMED, even for relatively large $T$, e.g., $T= 5 \times 10^6$.
This is because ECW-RMED estimates the required number of comparisons based on the empirical preference probability. Then ECW-RMED may temporally trap in suboptimal comparisons at the beginning stage when the empirical preference probability is likely to deviate from its true value.
In contrast, D-TS and D-TS$^+$ make decisions in a random manner, and the winner(s) has a positive probability to be explored even when the empirical estimates deviate from the true values.
Thus, D-TS and D-TS$^+$ usually converge to the asymptotic regime more quickly than ECW-RMED. Because of this, D-TS and D-TS$^+$ may outperform ECW-RMED even for a relatively large $T$, especially when the number of arms is larger. For example, as shown in Fig.~\ref{fig:MSLR_Informational_16_non_Condorcet}, ECW-RMED performs worse than D-TS$^+$ when $t \leq 5\times10^6$, although it has better asymptotic performance. This situation becomes more serious when the number of arms increases, as we can see from Fig.~\ref{fig:MSLR_Informational_32_non_Condorcet}.

Our D-TS algorithm performs worse than ECW-RMED when there are multiple winners with similar performance, e.g., for the non-Condorcet Cyclic dataset shown in Fig.~\ref{fig:Synthetic_nonCondorcet_Cyclic_9}. One main reason is that with a random tie-breaking rule, D-TS randomly explores all potential winners in each individual sample path \footnote{Another reason is that D-TS explores the superiors for each arm sequentially and results in a regret higher than the lower bound in \cite{Komiyama2016ICML:CWRMED}.}, as shown in Fig.~\ref{fig:diversity_noncondorcet}. Thus, the regret of D-TS scales with the number of winners $|\mathcal{C}^*|$. By carefully breaking the ties, D-TS$^+$ can reduce the regret in many practical scenarios, as shown in Fig.~\ref{fig:MSLR_Informational_5_non_Condorcet}, Fig.~\ref{fig:MSLR_Informational_16_non_Condorcet}, and Fig.~\ref{fig:MSLR_Informational_32_non_Condorcet}. However, the improvement is limited especially when the winners have very similar or exactly the same performance, as shown in Fig.~\ref{fig:Synthetic_nonCondorcet_Cyclic_9}. From the perspective of regret optimization, this may be a disadvantage of TS, where based on randomly sampled belief, all winners have a positive probability to be explored \cite{Gopalan2014ICML:TS}. However, from the diversity perspective, this may be desirable in the application scenarios such as restaurant recommendation, where users may not want to stick to a single winner.

\begin{figure}[thbp]
\begin{center}
\subfigure[Non-Condorcet Cyclic]{\includegraphics[angle = 0,width = 0.49\linewidth]{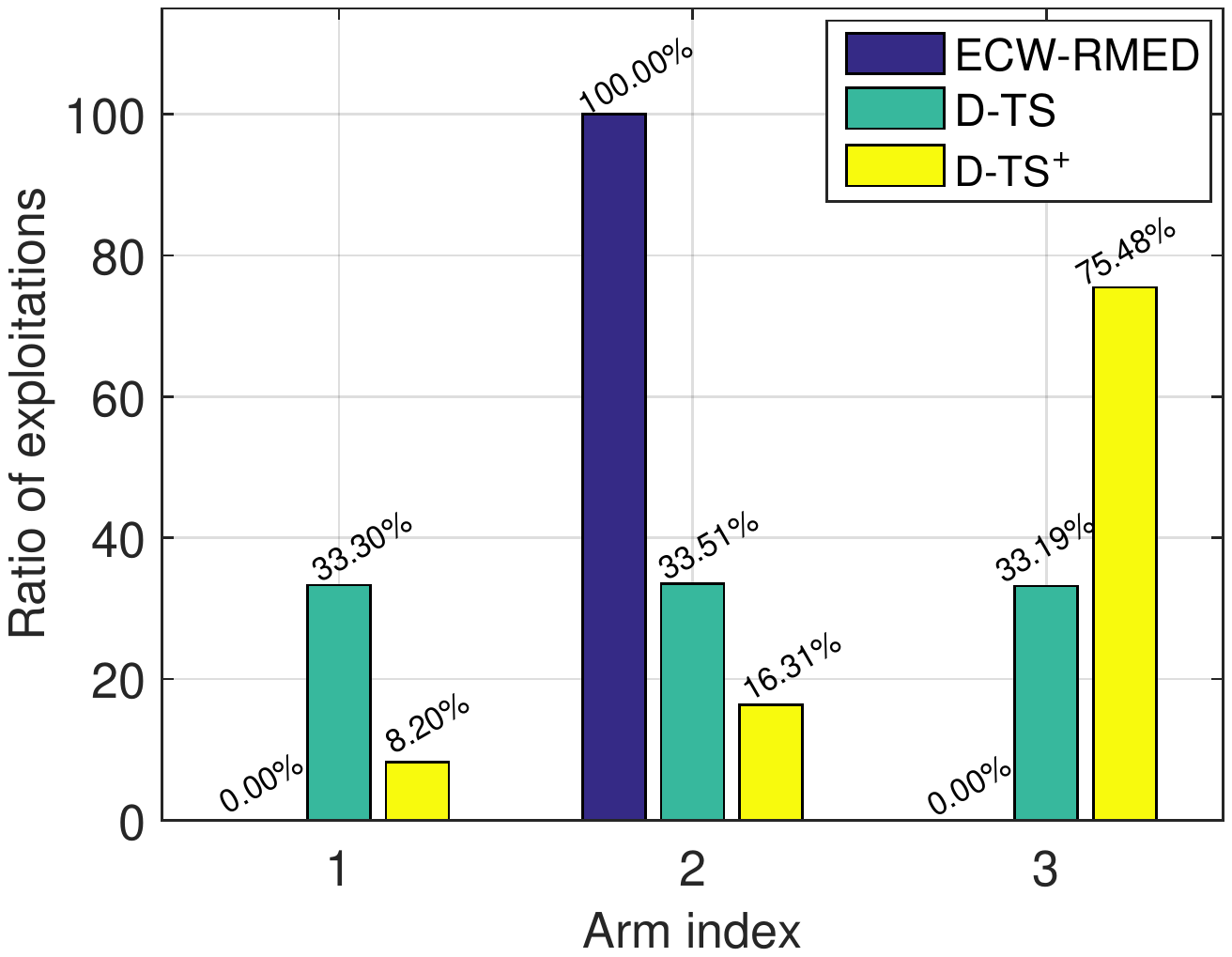}
\label{fig:diversity_Synthetic_nonCondorcet_Cyclic_9}}
\subfigure[MSLR ($K = 16$, non-Condorcet)]{\includegraphics[angle = 0,width = 0.49\linewidth]{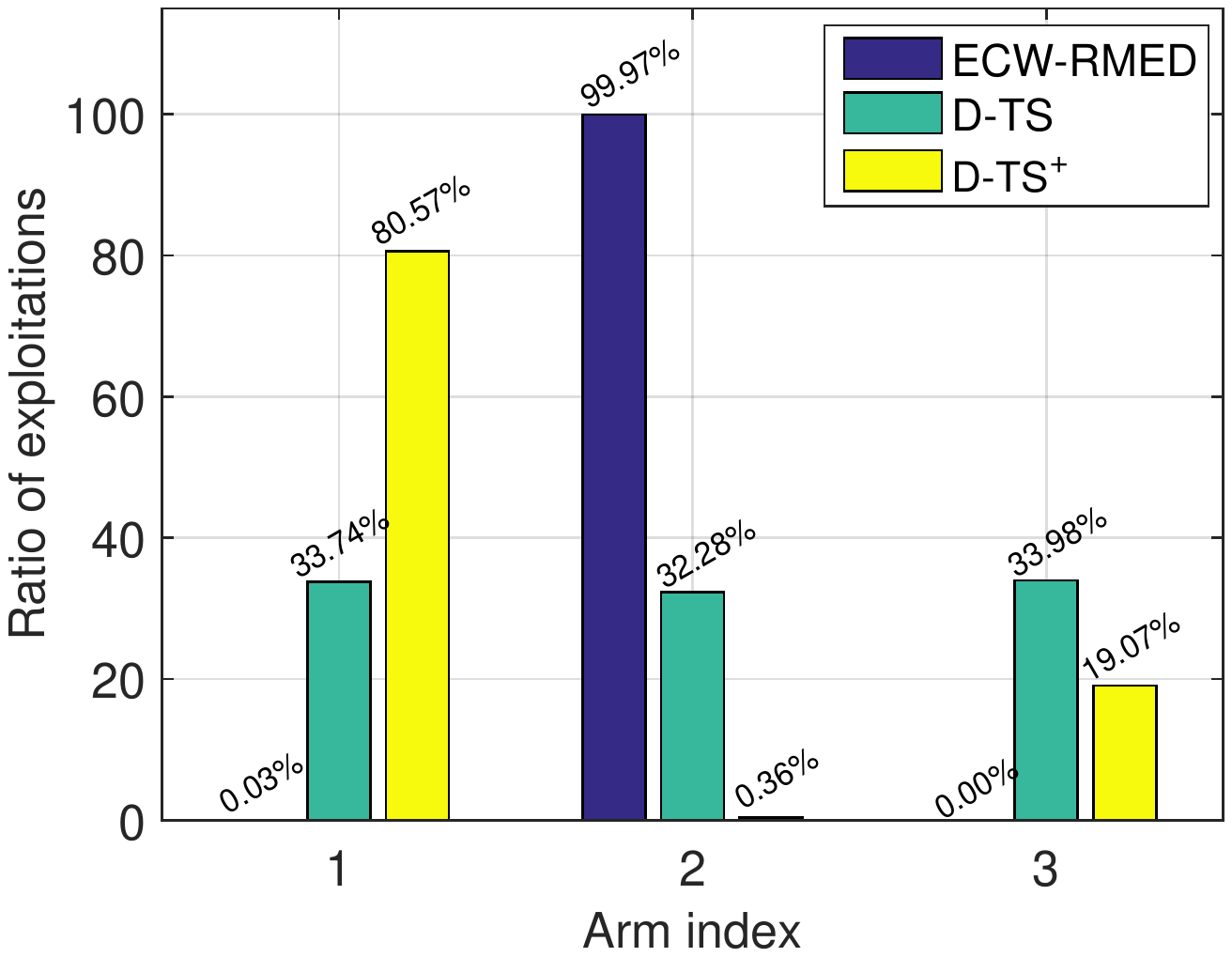}
\label{fig:diversity_MSLR_Informational_16_non_Condorcet}}
\caption{Diversity of exploitations: distribution of exploitations over different Copeland winners in one sample path. This is calculated when an arm is compared against itself. }
\label{fig:diversity_noncondorcet}
\end{center}
\end{figure}

\subsubsection{Robustness} \label{app:robustness}

We study the robustness of ECW-RMED, D-TS, and D-TS$^+$, with respect to the preference matrix and delayed feedback.

\textbf{Influence of preference matrix:} We have seen from Fig.~\ref{fig:regret_deviation} that, when some preference probabilities for different arms are close to 1/2 (5-armed non-Condorcet MSLR dataset), the regret of ECW-RMED fluctuates significantly and has a very large standard deviation, while D-TS and D-TS$^+$ have much smaller regret deviation.

The robustness of D-TS and D-TS$^+$ to the preference matrix can also be seen from the results
in the Gap dataset. Recall that in this dataset, the regret bound of ECW-RMED is much larger than the optimal lower bound. By experiments, we find that the performance of ECW-RMED is significantly affected by the order of arms (and essentially the preference matrix). For example, when the order of arms is fixed as that in Table~\ref{tab:synthetic_gap}, ECW-RMED performs very well. Specifically, as shown by the dashed line in Fig.~\ref{fig:Synthetic_Gap}, the regret of ECW-RMED is very small and its asymptotic behavior is even better than its asymptotic bound (similar results can be found in \cite{Komiyama2016ICML:CWRMED}). However, when the arms are randomly shuffled in each experiment, ECW-RMED achieves much larger regret that is consistent with its asymptotic bound.
In contrast, D-TS and D-TS$^+$ do not depend on the order of arms and perform much better than ECW-RMED on average in this dataset.\\

\textbf{Influence of delayed feedback:} In practice, it may be difficult and costly to process each individual comparison result immediately. Typically, feedback will be batched and provided periodically, say every $d$ time-slots. We evaluate the algorithm performance with respect to the feedback delay. As we can see from Fig.~\ref{fig:regret_vs_delay}, as the feedback delay increases, the regret of ECW-RMED increases much faster than D-TS and D-TS$^+$. In particular, even in the non-Condorcet Cyclic dataset with multiple winners (Fig.~\ref{fig:regret_vs_delay_Synthetic_nonCondorcet_Cyclic}), the regret of ECW-RMED becomes larger than D-TS and D-TS$^+$ when the feedback delay is larger than about 300 time-slots.

\begin{figure}[thbp]
\begin{center}
\subfigure[MSLR ($K = 5$, Condorcet)]{\includegraphics[angle = 0,width = 0.49\linewidth]{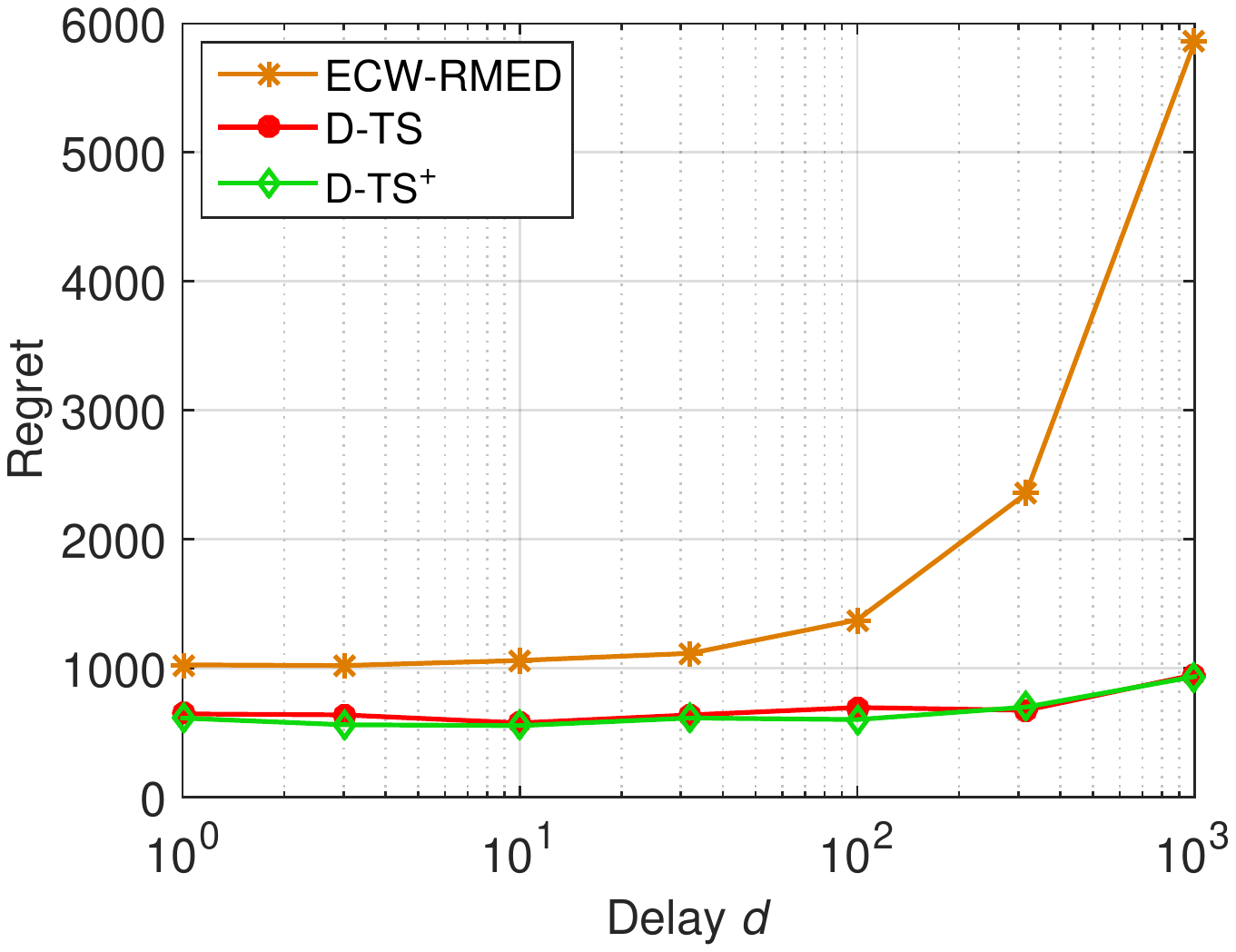}
\label{fig:regret_vs_delay_MSLR_Informational_5_Condorcet}}
\subfigure[Non-Condorcet Cyclic]{\includegraphics[angle = 0,width = 0.49\linewidth]{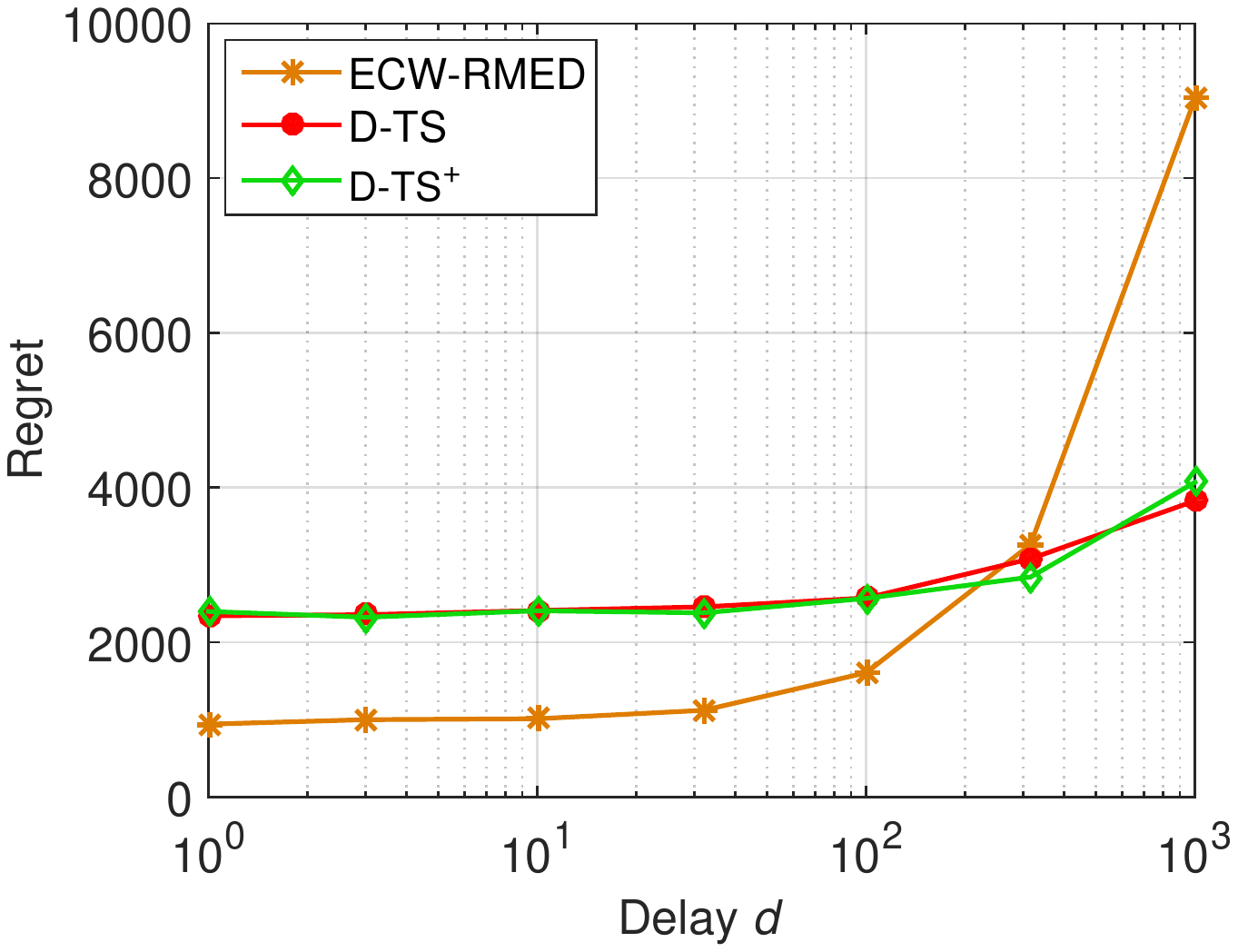}
\label{fig:regret_vs_delay_Synthetic_nonCondorcet_Cyclic}}
\caption{Influence of feedback delay: the regret when the feedback is batched and provided every $d$ time-slots ($T = 10^6$).  }
\label{fig:regret_vs_delay}
\end{center}
\end{figure}

\subsubsection{Impact of RUCB/RLCB Elimination} \label{app:rcb_elimination}
In this section, we illustrate the necessity of RUCB/RLCB elimination in D-TS and D-TS$^+$. A pure Thompson Sampling algorithms without the auxiliary RUCB/RLCB elimination step seem to be more elegant and may be more efficient (without the limitation of RUCB/RLCB). However, we notify that the RUCB/RLCB elimination is necessary to guarantee sublinear or logarithmic regret in general settings, especially in non-Condorcet dueling bandits.

Specifically, we consider the following ``pure D-TS'' algorithm, which is similar to D-TS in Algorithm~\ref{alg:dts_Copeland}, except that the RUCB/RCLB elimination step is ignored, i.e., the candidates $a^{(1)}$ and $a^{(2)}$ are selected from all arms according to the samples $\theta_{ij}^{(1)}$ and $\theta_{i a^{(1)}}^{(2)}$. As shown in Fig~\ref{fig:regret_pureDTS}, pure D-TS may result in large regret in certain scenarios. For example, in the (Condorcet) StrongBorda dataset (Table~\ref{tab:synthetic_strongborda}), the Borda winner (Arm 2)  beats all the other arms with high probability except for the Condorcet winner (Arm 1), and hence, it is easier for the Borda winner to get more votes at the beginning and to be chosen as the first candidate. Thus, pure D-TS achieves higher regret in this case as shown in Fig~\ref{fig:regret_Synthetic_StrongBorda_pureDTS}. In non-Condorcet dueling bandits, without RLCB elimination, pure D-TS could achieve linear regret  if a Copeland winner is beaten by a non-winner arm. For example, as shown in Fig.~\ref{fig:regret_Synthetic_nonCondorcet_Cyclic_9_pureDTS}, the algorithm fails to converge to comparing the Copeland winners against themselves, as the non-winner arm will have higher samples with  high probability at the second round. By introducing RLCB elimination, D-TS/D-TS$^+$ can avoid trapping in theses suboptimal comparisons and achieve much better performance.

We also point out without the limitation of RUCB/RLCB, pure D-TS may achieve in certain practical scenarios, e.g., the 5-arm MSLR data sets as shown in Figs~\ref{fig:regret_MSLR_Informational_5_Condorcet_pureDTS} and \ref{fig:regret_MSLR_Informational_5_non_Condorcet_pureDTS}. In fact, the RLCB elimination can be ignored in Condorcet dueling bandits, because none of the other arms can beat the Condorcet winner. It is an interesting direction to identify the conditions under which pure D-TS can perform better and obtain its theoretical performance under these conditions.

\begin{figure}[thbp]
\begin{center}
\subfigure[Condorcet StrongBorda]{\includegraphics[angle = 0,width = 0.49\linewidth]{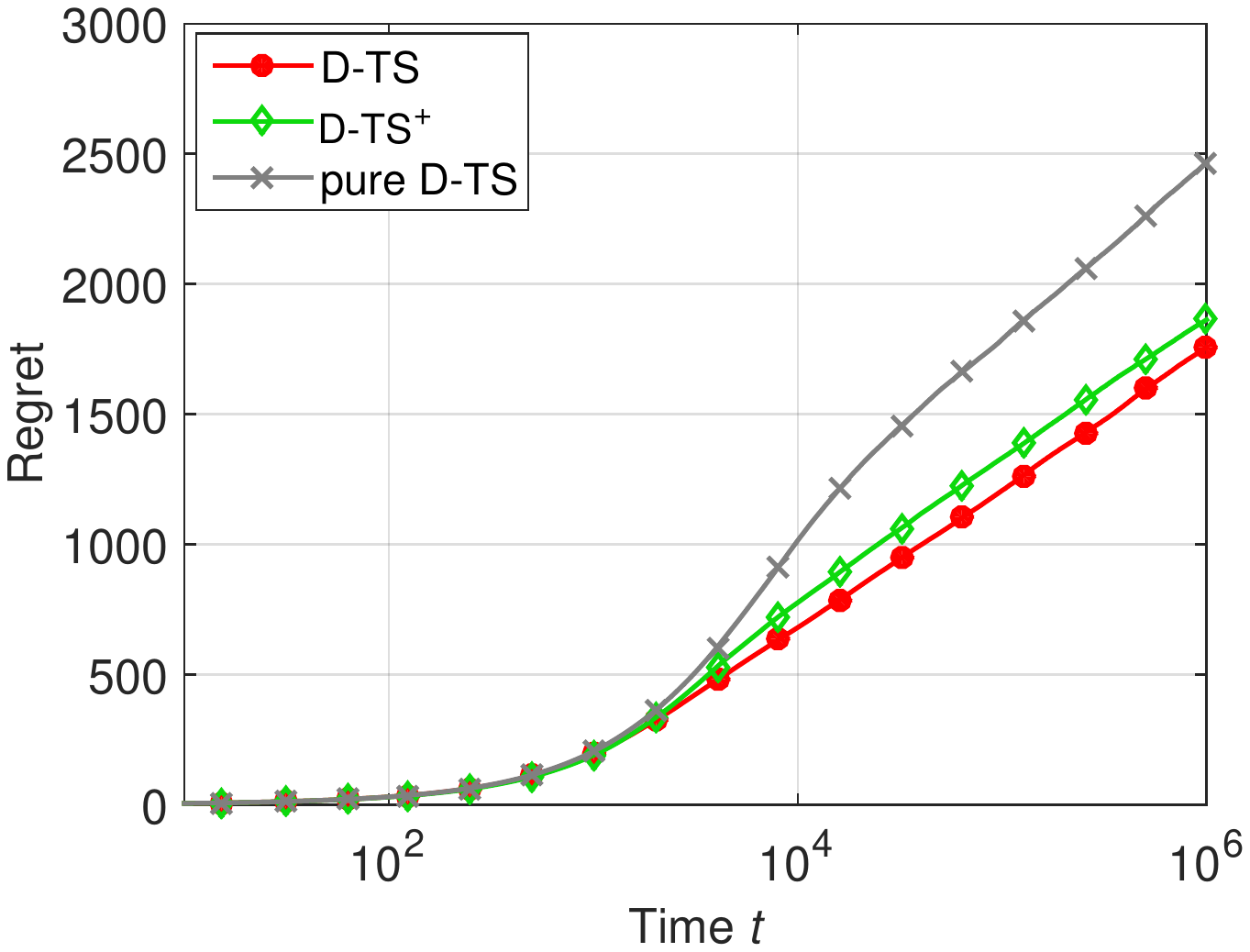}
\label{fig:regret_Synthetic_StrongBorda_pureDTS}}
\subfigure[Non-Condorcet Cyclic]{\includegraphics[angle = 0,width = 0.49\linewidth]{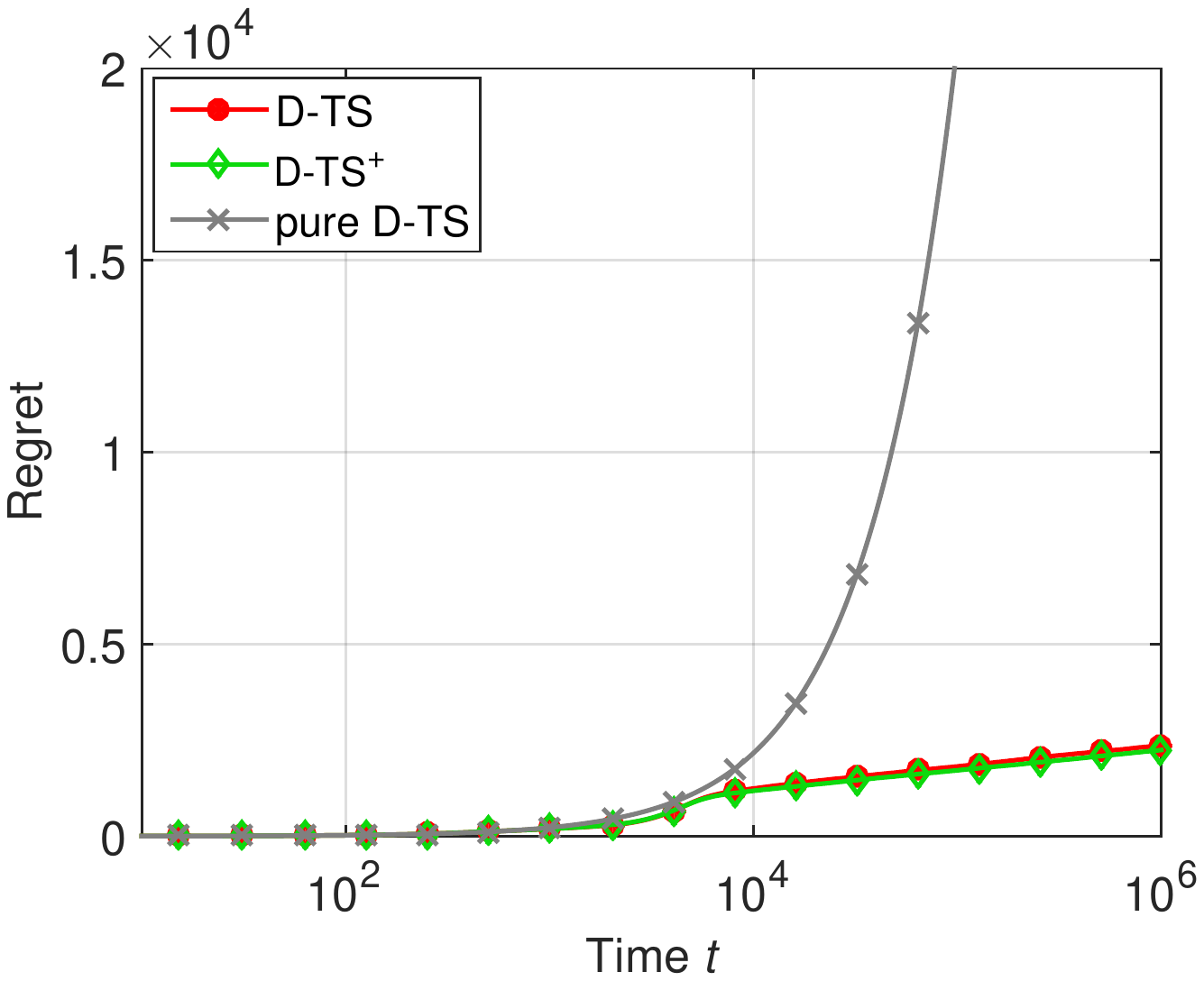}
\label{fig:regret_Synthetic_nonCondorcet_Cyclic_9_pureDTS}}
\subfigure[MSLR ($K = 5$, Condorcet)]{\includegraphics[angle = 0,width = 0.49\linewidth]{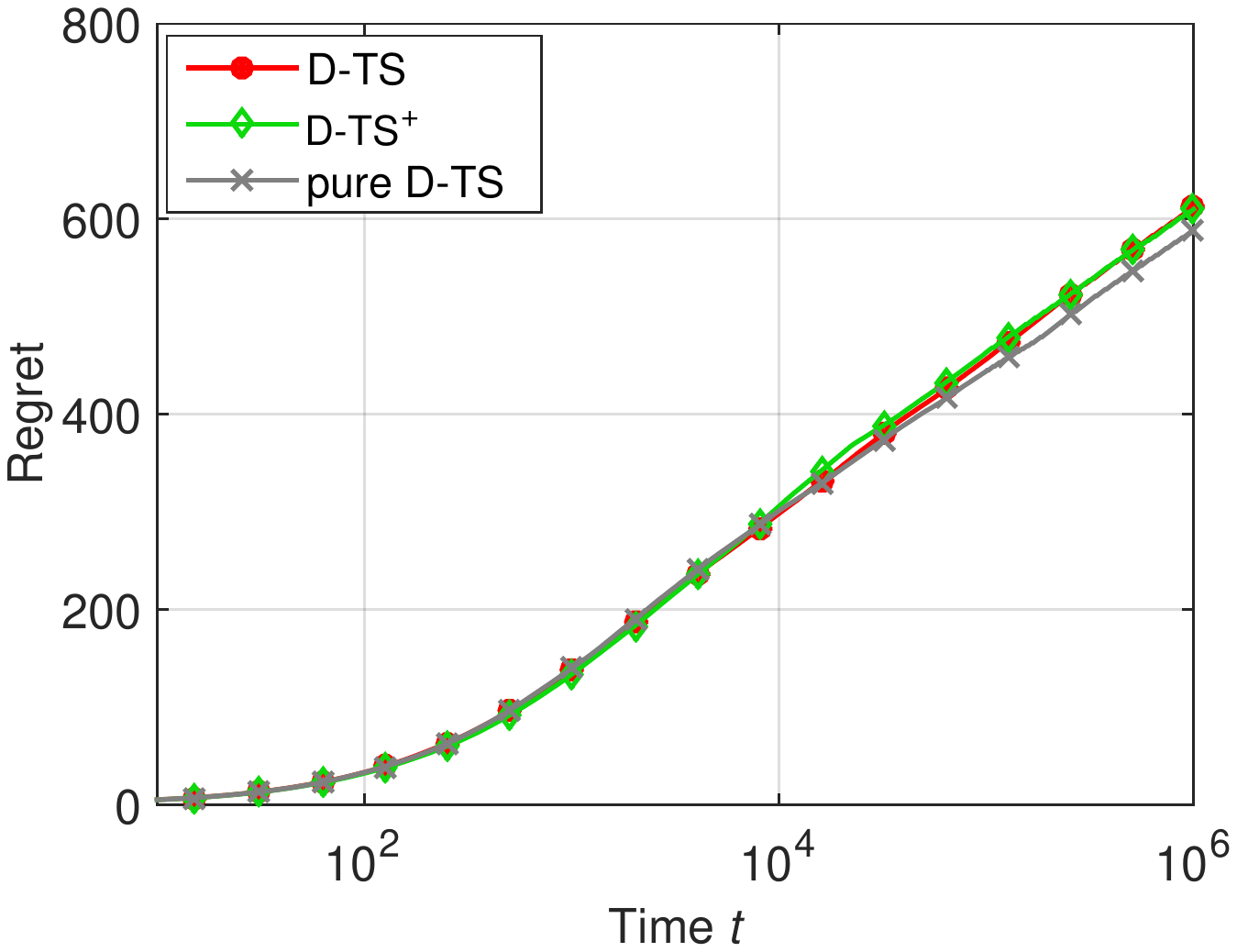}
\label{fig:regret_MSLR_Informational_5_Condorcet_pureDTS}}
\subfigure[MSLR ($K = 5$, non-Condorcet)]{\includegraphics[angle = 0,width = 0.49\linewidth]{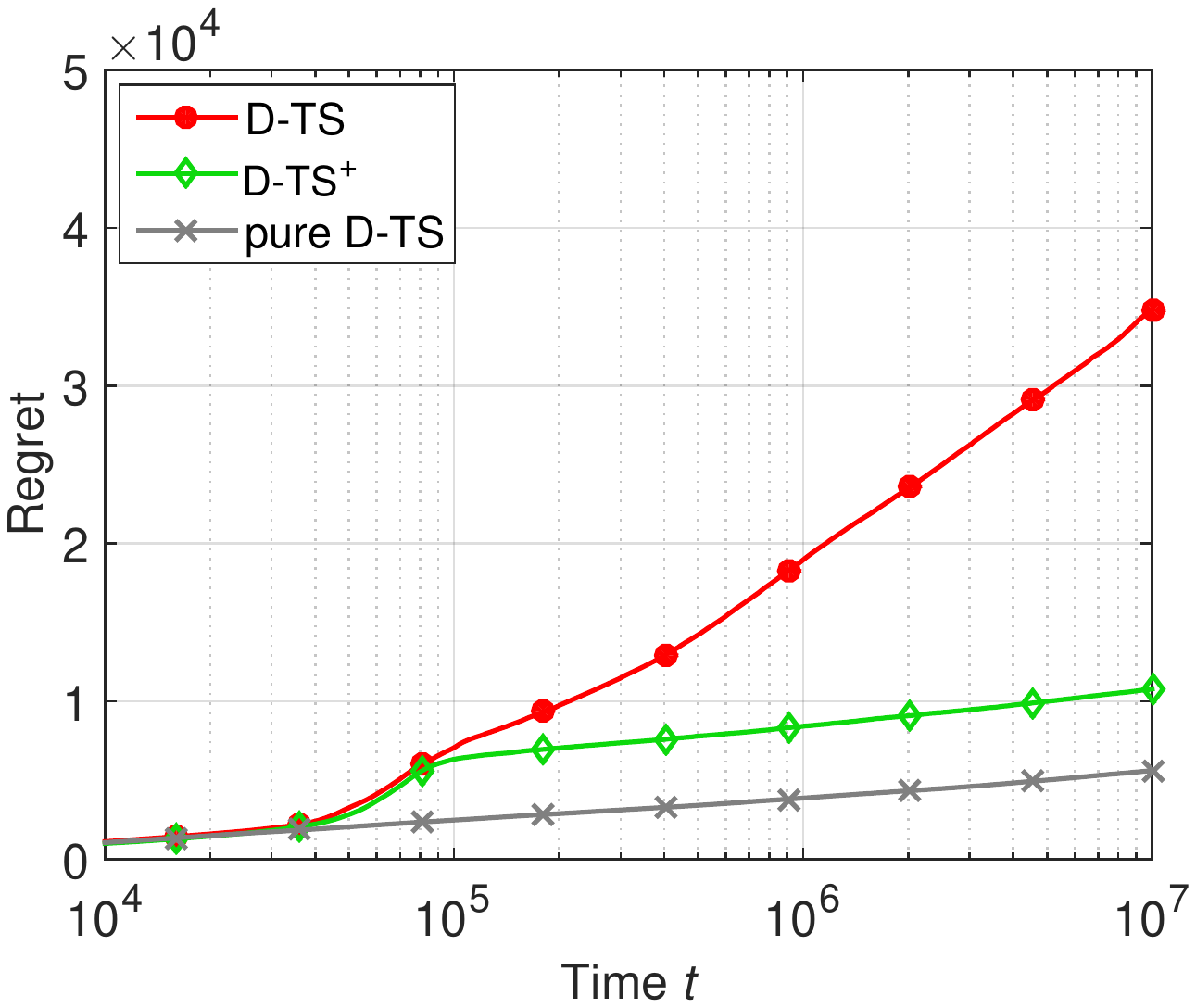}
\label{fig:regret_MSLR_Informational_5_non_Condorcet_pureDTS}}
\caption{Impact of RUCB/RLCB elimination: ``pure D-TS'' is similar to D-TS, except that the RUCB/RLCB elimination is ignored and the candidates are selected among all arms according to the Thompson samples only.  }
\label{fig:regret_pureDTS}
\end{center}
\end{figure}

\subsubsection{Summary of Experimental Results}
We summarize the performance evaluation results based on synthetic and real-world data:
\begin{itemize}
\item In Condorcet dueling bandits, D-TS and D-TS$^+$ achieve similar performance, and both perform much better than existing algorithms that work for unknown/infinite time-horizon settings.  This benefits from the double sampling structure that we proposed for D-TS and D-TS$^+$.

\item In non-Condorcet dueling bandits, D-TS and D-TS$^+$ performs much better than UCB-type algorithms, CCB and SCB. Again, this benefits from the double sampling structure of D-TS and D-TS$^+$, assisted by RUCB/RLCB elimination that guarantees sublinear or logarithmic regret in general settings.

\item Compared with ECW-RMED, D-TS and D-TS$^+$ achieve much better performance in Condorcet dueling bandits, better or close-to performance in non-Condorcet dueling bandits, especially when $T$ is small to relatively large. Furthermore, D-TS and D-TS$^+$  are also much more robust with respect to the preference matrix and delayed feedback.
\end{itemize}

In practice, we may not  know in advance whether we have a Condorcet and non-Condorcet dueling bandit. We may have in practice a time-varying system and delayed feedback. Overall,  good performance, and robustness of D-TS and D-TS$^+$ make
 them strong candidates in practice.

\end{document}